\documentclass{article}

\usepackage{amsmath,amsfonts,bm}

\def\eqref#1{equation~\ref{#1}}
\def\1{\bm{1}}

\def\vmu{{\bm{\mu}}}

\newcommand{\vzeta}{\bm{\zeta}}

\def\va{{\bm{a}}}
\def\vb{{\bm{b}}}

\def\vh{{\bm{h}}}

\def\vu{{\bm{u}}}
\def\vv{{\bm{v}}}
\def\vw{{\bm{w}}}
\def\vx{{\bm{x}}}

\def\vz{{\bm{z}}}

\def\evu{{u}}

\def\mA{{\bm{A}}}

\def\mU{{\bm{U}}}
\def\mV{{\bm{V}}}
\def\mW{{\bm{W}}}

\DeclareMathAlphabet{\mathsfit}{\encodingdefault}{\sfdefault}{m}{sl}
\SetMathAlphabet{\mathsfit}{bold}{\encodingdefault}{\sfdefault}{bx}{n}

\def\emU{{U}}

\newcommand{\Ls}{\mathcal{L}}
\newcommand{\R}{\mathbb{R}}

\DeclareMathOperator*{\argmin}{arg\,min}

\usepackage{hyperref}
\usepackage{url}
\usepackage{amsmath}

\usepackage{graphicx}
\usepackage{subcaption}
\usepackage{float}

\usepackage{pgfplots}
\usepgfplotslibrary{groupplots}
\pgfplotsset{compat=1.18}

\definecolor{cecolor}{RGB}{140,81,10}     % CE  (brown)
\definecolor{ntcecolor}{RGB}{46,139,87}   % NTCE (green)
\definecolor{sntcecolor}{RGB}{255,191,0}  % SNTCE (gold)
\definecolor{nonlcolor}{RGB}{65,105,225}  % NONL (royal blue)

\definecolor{cifar10best}{RGB}{0, 120, 0}      % Green for CIFAR-10 best
\definecolor{cifar100best}{RGB}{200, 100, 0}   % Orange for CIFAR-100 best
\definecolor{baseline}{RGB}{248, 248, 248}     % Light gray for baseline row
\definecolor{header}{RGB}{250, 250, 250}       % Light header

\definecolor{bestresult}{HTML}{228B22}  % Forest Green for best results
\definecolor{HeaderGray}{RGB}{248,249,250}
\definecolor{AccentGray}{RGB}{242,244,246}
\definecolor{SubtleHighlight}{RGB}{250,251,252}
\definecolor{TextGray}{RGB}{90,90,90}
\definecolor{BorderLight}{RGB}{220,220,220}
\definecolor{MethodGray}{RGB}{110,110,110}

\usepackage{array}
\usepackage{booktabs,multirow,siunitx,xcolor,colortbl}
\definecolor{header}{RGB}{245,245,245}

\definecolor{lightgray}{gray}{0.95}
\definecolor{darkgray}{gray}{0.8}

\usepackage{tikz}
\usepackage{pgfplots}
\usepackage{subcaption}
\usepackage{xcolor}

\usetikzlibrary{arrows.meta, positioning, shapes}
\usepackage{amssymb}
\usepackage{amsmath,amsfonts}
\usepackage{amsthm}
\usepackage{array}
\usepackage{booktabs}

\usepackage{array}
\usepackage{makecell}

\usepackage[table,xcdraw,dvipsnames]{xcolor}
\usepackage{colortbl}

\usepackage{graphicx}
\usepackage{tikz}
\usepackage{pgfplots}
\usepackage{subcaption}

\usepackage{adjustbox}
\usepackage{multirow}
\usepackage{multicol}
\usepackage{makecell}
\usepackage{tabularx}
\usepackage{float}
\usepackage{placeins}
\usepackage{stfloats}
\usepackage{lscape}
\usepackage{afterpage}

\usepackage{bm}
\usepackage{algorithmic}
\usepackage{fontawesome5}
\usepackage{pifont}
\usepackage{textcomp}

\usepackage{enumitem}

\usepackage{xspace}
\usepackage{url}
\usepackage{verbatim}

\usepackage{cite}
\usepackage{xr}

\usepackage{cleveref}

\newcommand{\ie}{\emph{i.e.}\xspace}

\definecolor{ForestGreen}{RGB}{34, 139, 34}
\definecolor{BrickRed}{RGB}{178, 34, 34}
\definecolor{NavyBlue}{RGB}{0, 100, 200}

\newcommand{\cB}{\mathcal{B}}
\newcommand{\cC}{\mathcal{C}}

\usepackage{mathtools}
\renewcommand{\b}[1]{\mathbf{#1}}

\newcommand{\bX}{\b{X}}
\newcommand{\bY}{\b{Y}}

\newcommand{\cU}{\mathcal{U}}

\DeclareMathAlphabet{\mathsfit}{\encodingdefault}{\sfdefault}{m}{sl}
\SetMathAlphabet{\mathsfit}{bold}{\encodingdefault}{\sfdefault}{bx}{n}

\usepackage{mathtools}

\usepackage{microtype}
\usepackage{graphicx}
\usepackage{subcaption}
\usepackage{booktabs} % for professional tables

\usepackage{hyperref}
\usepackage[accepted]{icml2026}

\usepackage{amsmath}
\usepackage{amssymb}
\usepackage{mathtools}
\usepackage{amsthm}

\theoremstyle{plain}
\newtheorem{theorem}{Theorem}[section]

\newtheorem{lemma}[theorem]{Lemma}

\theoremstyle{definition}

\theoremstyle{remark}

\usepackage[textsize=tiny]{todonotes}

\icmltitlerunning{Neural Collapse by Design: Learning Class Prototypes on the Hypersphere}

\begin{document}

\twocolumn[
  \icmltitle{Neural Collapse by Design: Learning Class Prototypes on the Hypersphere}

  % It is OKAY to include author information, even for blind submissions: the
  % style file will automatically remove it for you unless you've provided
  % the [accepted] option to the icml2026 package.

  % List of affiliations: The first argument should be a (short) identifier you
  % will use later to specify author affiliations Academic affiliations
  % should list Department, University, City, Region, Country Industry
  % affiliations should list Company, City, Region, Country

  % You can specify symbols, otherwise they are numbered in order. Ideally, you
  % should not use this facility. Affiliations will be numbered in order of
  % appearance and this is the preferred way.
  %\icmlsetsymbol{equal}{*}

  \begin{icmlauthorlist}
    \icmlauthor{Panagiotis Koromilas}{uoa}
    \icmlauthor{Theodoros Gianakopoulos}{ncsrd}
    \icmlauthor{Mihalis A. Nicolaou}{ucy,cyi}
    \icmlauthor{Yannis Panagakis}{uoa,arch}
  \end{icmlauthorlist}

  \icmlaffiliation{cyi}{The Cyprus Institute}
  \icmlaffiliation{uoa}{University of Athens}
  \icmlaffiliation{ncsrd}{NSCR Demokritos}
  \icmlaffiliation{arch}{Archimedes AI/Athena Research Center}
  \icmlaffiliation{ucy}{University of Cyprus}

  \icmlcorrespondingauthor{Panagiotis Koromilas}{pakoromilas@di.uoa.gr}
  % You may provide any keywords that you find helpful for describing your
  % paper; these are used to populate the "keywords" metadata in the PDF but
  % will not be shown in the document
  \icmlkeywords{Machine Learning, ICML, Representation Learning, Neural Collapse, Supervised Contrastive Learning, Supervised Learning}

  \vskip 0.3in
]

% this must go after the closing bracket ] following \twocolumn[ ...

% This command actually creates the footnote in the first column listing the
% affiliations and the copyright notice. The command takes one argument, which
% is text to display at the start of the footnote. The \icmlEqualContribution
% command is standard text for equal contribution. Remove it (just {}) if you
% do not need this facility.

% Use ONE of the following lines. DO NOT remove the command.
% If you have no special notice, KEEP empty braces:
\printAffiliationsAndNotice{}  % no special notice (required even if empty)
% Or, if applicable, use the standard equal contribution text:
% \printAffiliationsAndNotice{\icmlEqualContribution}

\begin{abstract}

Supervised classification has a theoretical optimum, Neural Collapse (NC), yet neither of its two dominant paradigms reaches it in practice. Cross entropy (CE) leaves radial degrees of freedom unconstrained and converges to a degenerate geometry, while supervised contrastive learning (SCL) drives features toward NC during pretraining but discards this structure in a post hoc linear probing phase. We show that both paradigms are different appearances of the same method that contrasts prototypes on the unit hypersphere, and that closing the gap requires fixing each at its point of failure. From the CE side, we propose NTCE and NONL, two normalized losses that import contrastive optimization's missing ingredients into classifier learning: a large effective negative set and decoupled alignment and uniformity terms. From the SCL side, we prove that SCL's objective already optimizes throughout training for a principled classifier whose weights are the class mean embeddings, making linear probing both redundant and harmful. Empirically, on four benchmarks including ImageNet-1K, NTCE and NONL surpass CE accuracy, closely approximate NC ($\geq 95\%$), and match CE's converged NC on 4/5 metrics in under $7.5\%$ of its iterations, while SCL with fixed prototypes matches linear probing without the hours-long classifier training phase. The learned geometry yields $+5.5\%$ mean relative improvement in transfer learning, up to $+8.7\%$ under severe class imbalance, and improved robustness to corruptions on ImageNet-C. Our work recasts supervised learning as prototype learning on the hypersphere, with NC reached \emph{by design}. Code: \url{https://github.com/pakoromilas/nc_by_design}
\end{abstract}

\begin{figure*}[!t]
    \centering
    \includegraphics[width=\textwidth]{figures/framework.pdf}
 \caption{%
\textbf{Supervised learning as learning class prototypes on the 
hypersphere.}
\textbf{(a)}~Cross-entropy with unconstrained features~$\mathbf{z}$, 
weights~$\mathbf{W}$, and biases~$\mathbf{b}$ leaves radial degrees 
of freedom free, preventing convergence to NC.
\textbf{(b)}~SCL pretraining maps features onto~$\mathcal{S}^{d-1}$ 
via a projection head, producing representations that approach 
within-class collapse (NC1) and maximal between-class 
separation (NC2).
\textbf{(c)}~Standard practice discards the projection head and 
trains a linear probe on unconstrained~$\mathbf{z}$, reintroducing 
free~$\|\mathbf{w}\|$ and~$\mathbf{b}$ that destroy the NC geometry 
learned during pretraining.
\textbf{(d)} We show that both paradigms learn class prototypes 
on the hypersphere, converging to the same simplex ETF.
From classifier learning (CL): normalizing to~$\mathcal{S}^{d-1}$ and 
applying contrastive optimization (NTCE/NONL) yields learnable 
prototypes~$\hat{\mathbf{w}}_c$ that converge to class 
means (Theorem~4.1).
From SCL: the class-mean prototypes~$\hat{\boldsymbol{\mu}}_c$ are 
already the optimal classifier throughout training, making linear 
probing unnecessary (Theorem~4.2).
Both paths achieve NC1--NC4 in theory and closely approximate it 
in practice, with~$\hat{\mathbf{w}}_c = \hat{\boldsymbol{\mu}}_c$ 
at the global optimum.
}\label{fig:main}
\end{figure*}

\section{Introduction}

Despite theoretical proofs that Neural Collapse (NC) is the global optimum of supervised learning objectives \citep{lu2022ncceloss, zhou2022optimization, graf2021dissecting}, standard supervised pipelines rarely attain it in practice. This failure is particularly striking because \textit{NC delivers precisely the properties we seek}: when neural networks do approach this geometric configuration, where within class representations collapse to their means, class means form an equiangular tight frame (ETF), and classifier weights align with these prototypes, they demonstrate improved generalization \citep{papyan2020prevalence,bartlett2017spectrally,neyshabur2018pacbayes}, adversarial robustness \citep{fawzi2016robustness,ding2018mma}, enhanced transfer learning \citep{galanti2021role,khosla2020supcon}, and converge toward max margin classifiers \citep{soudry2018implicit} with stronger robustness guarantees \citep{hein2017formal}. \textit{If NC is provably optimal and empirically beneficial, why do standard supervised pipelines consistently fail to achieve it?}

We identify the core issue as \textit{unconstrained radial degrees of freedom}. Cross entropy (CE) optimization allows features and weights to be jointly rescaled without changing predictions \citep{soudry2018implicit}, leaving radial directions underconstrained and preventing convergence to a unique geometry. While explicit regularization of features, weights, and biases may resolve this \citep{zhu2021geometric}, it introduces multiple hyperparameters that complicate practical adoption. A principled solution is to eliminate radial freedom entirely by constraining optimization to the unit hypersphere, where NC becomes the {\it unique} global optimum \citep{yaras2022neural}. This is exactly what \textit{normalized softmax} losses such as NormFace \citep{wang2017normface} do: by normalizing both features and classifier weights, they project classification onto the hypersphere and reformulate it as angular similarity between data representations and learnable class prototypes.

Yet CE is not the only supervised paradigm that fails to reach NC. Supervised contrastive learning (SCL) \citep{khosla2020supcon} does drive features toward the NC geometry during pretraining, mapping them onto the unit hypersphere through a projection head and producing within class collapse and ETF aligned class means \citep{graf2021dissecting}. The standard pipeline, however, then lifts to a different representation space by discarding the projection head and trains a linear classifier on the unnormalized encoder representations with free weights and biases, reintroducing exactly the radial and bias pathologies that broke CE in the first place. The two paradigms therefore fail for opposite reasons: \textit{CE never builds the geometry, while SCL approximates it during pretraining only to discard it during linear probing}.

These complementary diagnoses converge on a single conceptual lens: \textit{prototype contrast on the unit hypersphere} (\Cref{fig:main}). Once Classifier learning (CL)  adopts normalized softmax, it optimizes angular similarity between normalized features and \textit{explicit} class weight vectors that serve as prototypes. We prove that SCL, on the other side, optimizes angular similarity among normalized instances using \textit{implicit} class mean embeddings as prototypes. CL and SCL are therefore \textit{different appearances of the same method}, differing only in whether the prototype is parameterized or emergent, and both can reach the same simplex ETF.

Despite this shared geometric foundation, normalized softmax recasts CE in contrastive form but does not yet bring the corresponding optimization with it: it contrasts against only $K$ class prototypes \citep{he2020momentum}, giving a small effective negative set, and couples positive and negative similarity terms through a shared normalization \citep{yeh2022decoupled}. We close the remaining gaps to NC by applying \textit{proper contrastive optimization} and making the following \textbf{five contributions}:

  \textbf{C1.} We \textbf{unify normalized softmax and SCL} under a single geometric framework, revealing both as \textit{prototype contrast methods on the unit hypersphere} that differ only in whether prototypes are explicit (learned weights) or implicit (class means). This framework explains why both can achieve NC in practice while standard CE cannot.

  \textbf{C2.} We propose \textbf{two supervised objectives} that overcome existing computational limitations. \textbf{NTCE} (Normalized Temperature scaled Cross Entropy) increases the effective number of negatives from $K$ classes to $M$ batch samples, strengthening inter class separation. \textbf{NONL} (Negatives Only Normalization Loss) eliminates interference between intra class alignment and inter class repulsion by normalizing only over negatives, accelerating NC convergence.

  \textbf{C3.} We prove that \textit{the SCL objective already optimizes for an optimal prototype classifier throughout pretraining}, \textbf{eliminating the need for linear probing}. The class-mean embeddings learned by SCL form the principled SCL classifier regardless of whether NC is attained.

  \textbf{C4.} We validate our approach across four benchmarks including ImageNet-1K. NTCE and NONL achieve \textit{$\ge 95\%$ on NC metrics} while \textit{surpassing standard CE accuracy}, and match CE's NC metrics with \textit{substantially fewer training iterations}. Our prototype classifier maintains SCL's accuracy while eliminating hours of linear probing computation, a significant \textit{practical saving for large scale deployments}.

  \textbf{C5.} We empirically demonstrate that the representations learned by our objectives translate into \textbf{practical benefits}, yielding improved performance on \textit{transfer learning} (+5.5\% mean relative improvement), \textit{long tailed classification} (up to +8.7\% relative improvement), and \textit{robustness} (lower mCE).

These results suggest a \textbf{fundamental shift} in how supervised learning should be understood: not as unconstrained optimization in Euclidean space, but as \textit{prototype based classification on the hypersphere}. 

\section{Related Work}

\textbf{Neural Collapse.}
Neural Collapse (NC) describes a limiting geometry in which within-class features collapse to their means (NC1), class means form a centered simplex ETF (NC2), classifier weights align with the means (NC3), and biases collapse (NC4) \citep{papyan2020prevalence}. Variants of this structure characterize global minimizers for several objectives and modeling assumptions, including MSE \citep{han2021neural, zhou2022optimization}, cross-entropy (CE) \citep{lu2022ncceloss}, supervised contrastive learning (SCL) \citep{graf2021dissecting}, and CE variants such as label smoothing and focal loss \citep{zhou2022alllosses}. In finite training, however, standard CE with weight decay often fails to realize the optimal geometry: the loss is \emph{scale-noncoercive} and can be driven toward zero by inflating logit magnitudes without improving angular structure \citep{albert1984existence, soudry2018implicit}. Class imbalance further distorts the ETF and slows convergence \citep{thrampoulidis2022imbalance, hong2024nc_imbalance}; free bias terms obstruct NC4 and can exacerbate miscalibration unless controlled (e.g., logit adjustment) \citep{menon2021logitadjust}. While simultaneously penalizing features, weights, and biases can restore coercivity and yield NC in principle \citep{zhu2021geometric, zhou2022optimization}, tuning multiple regularizers is brittle. \emph{We show that contrasting instances against class prototypes on the hypersphere operationalizes NC in practice.}

\textbf{Learning on the hypersphere.}
Constraining radial freedom is a principled route to NC. When both features and classifier lie on the unit hypersphere, CE over the product of spheres exhibits a benign strict-saddle landscape whose minima realize perfect NC \citep{yaras2022neural}. Related evidence appears in contrastive objectives: SCL yields within-class collapse and simplex class means \citep{graf2021dissecting}, while in self-supervised contrastive learning batch-level optima form a simplex ETF \citep{koromilas2024bridging}. A long line of face-recognition work, including SphereFace, CosFace, ArcFace, and NormFace \citep{liu2017sphereface, wang2018cosface, deng2019arcface, wang2017normface}, operationalizes direction-only discrimination by using angular/cosine margins. \emph{We unify these approaches by showing that both normalized softmax and SCL perform prototype contrast on the hypersphere.} Building on this bridge, we extend normalized softmax with NTCE/NONL to import desirable properties.

\textbf{Prototype-based classification and ETF classifiers.}
Prototype methods classify via distances to learned representatives \citep{snell2017protonets}. Motivated by NC, several works fix or guide the classifier toward ETF-like prototypes and learn only the encoder, for example by (i) fixing a simplex ETF head and training the backbone (ETF+DR) \citep{yang2022inducingnc}, (ii) using hyperspherical prototype networks \citep{mettes2019hpn}, or (iii) constructing equiangular basis vectors (EBVs) \citep{shen2023ebv}. Other approaches enforce (non-negative) orthogonality \citep{kim2024fno} or guide the classifier toward the nearest ETF via a Riemannian inner optimization \citep{markou2024guiding}. Recently NC structure has been exploited in a teacher--student setting \citep{zhang2025nckd}: given a trained teacher that already exhibits NC, they compute \emph{teacher} class centroids and use them as an NC3-inspired classifier for the \emph{student}. \emph{Our perspective is that CL and SCL already operate with prototypes: we modify the objectives to realize NC in practice, and we show that SCL’s \textit{class-mean prototypes} form an effective classifier, making linear probing unnecessary.}

\section{Preliminaries}\label{sec:prelims}

\textbf{Notation.} Scalars are denoted by lowercase letters $u$, vectors by lowercase bold letters $\vu$, and matrices by uppercase bold letters $\mU$. Sets are represented by uppercase caligraphic letters $\cU$. Individual elements are accessed using subscript notation: $\evu_i$ for the $i$-th element of vector $\vu$ and $\emU_{i,j}$ for the element at row $i$ and column $j$ of matrix $\mU$. To denote vertical (row-wise) concatenation of matrices $\bX$ and $\bY$, we use
$[\bX;\bY]$. We denote normalized vectors with $\hat{\vu}_j = \vu_j / \|\vu_j\|$.

\subsection{Learning Paradigms}

\textbf{Classifier Learning with Cross-Entropy.} The cross-entropy loss is the standard Classifier Learning (CL) objective, optimizing representations and classifier weights simultaneously. An encoder $f_{\boldsymbol{\theta}}: \mathcal{X} \to \mathcal{Z}$, parameterized by $\boldsymbol{\theta} \in \Theta$, maps an input $\mathbf{x} \in \mathcal{X}$ to its representation $\mathbf{z} = f_{\boldsymbol{\theta}}(\mathbf{x}) \in \mathcal{Z}$. For a $K$-class task, $y_i$ denotes the class assignment of sample $\mathbf{x}_i$. A linear classifier is placed on top of the encoder, with weight matrix $\mathbf{W} \in \mathbb{R}^{K \times h}$ and bias $\mathbf{b} \in \mathbb{R}^K$, where $h$ is the embedding dimension. For a mini-batch of $M$ samples with $\{\mathbf{z}_i\}_{i=1}^M$, the cross-entropy loss is defined as
\begin{equation}
    \mathcal{L}_{\text{CE}}(\mathbf{Z}, \mathbf{W}) = \frac{1}{M} \sum_{i=1}^M 
    -\log \left( 
    \frac{e^{\mathbf{z}_i^{\top}\mathbf{w}_{y_i} + b_{y_i}}}{
    \sum_{j=1}^K e^{\mathbf{z}_i^{\top}\mathbf{w}_j + b_j}} 
    \right),
    \label{eq:ce}
\end{equation}
where $\mathbf{w}_j$ denotes the $j$-th row of $\mathbf{W}$ and $b_j$ the $j$-th component of $\mathbf{b}$.

\textbf{Supervised Contrastive Learning.} \emph{Supervised Contrastive Learning (SCL)} takes a seemingly different direction: it learns representations by exploiting similarities between instances to learn class-invariant representations. Building on our notation, the contrastive framework augments the encoder $f_{\boldsymbol{\theta}}: \mathcal{X} \to \mathcal{Z}$ with a projection head $g_{\boldsymbol{\phi}}: \mathcal{Z} \to \mathcal{U}$, parameterized by $\boldsymbol{\phi} \in \Phi$, which maps representations onto the unit hypersphere, $\cU = \mathbb{S}^{d-1} = \{\vu \in \mathbb{R}^d \mid \|\vu\| = 1\}$. We denote the projected representations as $\vu, \vv \in \cU$, where $\vu_i$ comes from instance $\vx_i$ and $\vv_i$ from its alternative view produced via augmentation, a typical process in contrastive learning.

For SCL the objective is to pull together positive pairs while pushing apart negative pairs in the projection space. Typically alternative views of the same data point that originate from augmentation are considered as new data points, \ie $\mA = [\mU;\mV]$, and the supervised contrastive loss becomes:
\begin{equation}
    \mathcal{L}_{\text{SCL}}(\mA) = \frac{1}{2M} \sum_{i=1}^{2M} 
    -\frac{1}{|\mathcal{C}(i)|} \sum_{l \in \mathcal{C}(i)}
    \log \left(
    \frac{e^{\va_i^{\top} \va_l / \tau}}{
    \sum_{j=1 \atop j \neq i}^{2M} e^{\va_i^{\top} \va_j / \tau}}
    \right),
    \label{eq:scl}
\end{equation}
where $\mathcal{C}(i)$ denotes the set of indices corresponding to positive examples sharing the same class as $\vx_i$ and $\tau > 0$ is a temperature parameter that controls the concentration of the distribution.

A crucial distinction emerges post-training: while learning with cross-entropy directly produces a classifier, contrastive learning requires an additional step. After optimizing \Cref{eq:scl}, the projection head is discarded and a linear classifier $\mW, \vb$ is trained on the frozen encoder representations $\vz$ using \Cref{eq:ce}, a process known as \textbf{linear probing}. 

\subsection{Neural Collapse (NC).}\label{ssec:neural_collapse}
\emph{Neural Collapse} \citet{papyan2020prevalence}is the late-training regime (on balanced data) where last-layer features and the linear classifier converge to a highly structured limit. Let $\vz_i=f(\vx_i)\in\R^h$, class means $\vmu_c=\frac{1}{n_c}\sum_{i:y_i=c}\vz_i$, weights $\vw_c$, and bias $\vb$.
NC holds when, up to common scalings:
\begin{itemize}
\item[(NC1)] \textbf{Within-class collapse:} $\vz_i=\vmu_{y_i}$ for all $i$.
\item[(NC2)] \textbf{Simplex ETF of class means:} the centered means $\tilde\vmu_c=\vmu_c-\frac{1}{K}\sum_{k=1}^K\vmu_k$ have equal norms and equal pairwise angles so the means span a centered $(K\!-\!1)$-simplex ETF.
\item[(NC3)] \textbf{Alignment of Class Representation and Classifier:} classifier columns align with the class means, $\vw_c\parallel\vmu_c$ (there exists $\gamma>0$ with $\vw_c=\gamma\,\vmu_c$).
\item[(NC4)] \textbf{Bias collapse:} $\vb=\beta\,\mathbf 1$ for some scalar $\beta$.
\end{itemize}
Under NC, the decision rule reduces to nearest-class-mean classification. We assume balanced classes and $h\ge K-1$ so a centered simplex ETF is feasible \citep{lu2022ncceloss}.

\textbf{Practical Challenges in reaching Neural Collapse} Neural Collapse (NC) is now well documented in deep nets \citep{papyan2020prevalence} and characterizes global minima of balanced cross-entropy \citep{lu2022ncceloss}. However standard pipelines does not enforce it in practice. For the typical paradigm of cross-entropy and classifier weight decay, the objective admits an \emph{unbounded rescaling direction}: shrinking the classifier while amplifying features leaves logits unchanged, reduces the penalty, and drives the loss toward zero without achieving NC \citep{soudry2018implicit,albert1984existence}. It is shown by \citet{zhu2021geometric} that a well-posed objective arises when all radial degrees of freedom are constrained by penalizing weights, features, and biases simultaneously \citep{zhu2021geometric}. However this is practically brittle due to multiple regularizers to tune.

Supervised contrastive training on the other hand can drive representations toward NC geometry \citep{graf2021dissecting}. However, the subsequent \emph{linear probing} step typically fits a softmax classifier with cross-entropy on \emph{frozen} features, allowing free weight magnitudes and biases. This reintroduces the same scale and bias pathologies as cross-entropy even when training has already reached an NC.
\section{Supervised Learning on the Hypersphere}

In this section we bridge classifier learning and supervised contrastive learning under a common viewpoint. Our approach uses similarity-based optimization while eliminating radial degrees of freedom by constraining both feature and classifier norms to the hypersphere. This constraint transforms the optimization landscape into a benign geometry where all critical points become global optima \citep{yaras2022neural}, enabling direct convergence to NC.

\subsection{Revisiting Cross Entropy: Contrasting Class Prototypes to Instances}\label{ssec:classifier_learning}

The weight matrix of the final linear classifier in CL methods can be expressed as $\mW = [\vw_{1}; \vw_{2}; \dots; \vw_{K}] \in \mathbb{R}^{K \times h}$, where each $\vw_c$ represents a learnable class prototype. This formulation reveals an important insight: we can treat the classifier weights as \textit{learnable prototypes} that evolve through gradient descent to capture class-specific geometric structures. Building on this we design objectives that use such prototypes to arrive at the optimal NC geometry.

\textbf{Normalized Softmax Losses.}
Standard cross-entropy and contrastive learning represent two seemingly distinct paradigms: the former discriminates through learned magnitudes and biases in unconstrained space, while the latter operates purely on angular similarities on the hypersphere. This fundamental difference leads to a critical inefficiency: while both methods theoretically converge to neural collapse configurations, cross-entropy introduces unnecessary radial degrees of freedom that slow convergence to this optimal geometry \citep{yaras2022neural,zhu2021geometric}.

Normalized softmax losses resolve this inefficiency by reformulating cross-entropy as a pure geometric objective. NormFace \citep{wang2017normface}, a prominent example, achieves this through three coordinated modifications: (i) eliminating biases that merely translate decision boundaries without encoding semantic structure, (ii) projecting representations onto the hypersphere to focus exclusively on angular geometry, and (iii) introducing temperature scaling to control concentration of the softmax distribution. Formally, with $\vu_i = \vz_i / \|\vz_i\|_2$ as the normalized representation and $\hat{\vw}_j = \vw_j / \|\vw_j\|_2$ as the normalized classifier weight for class $j$, NormFace minimizes:
\begin{equation}
    L_{\text{NormFace}}(\mU, \mW) = -\frac{1}{M} \sum_{i=1}^M 
    \log \left( 
\frac{e^{\vu_i^{\top}\hat{\vw}_{y_i} / \tau}}{
    \sum_{j=1}^K e^{\vu_i^{\top}\hat{\vw}_j / \tau}} 
    \right).
    \label{eq:normface}
\end{equation}

This reformulation transforms classification into contrastive learning between data instances and learnable class prototypes while maintaining CE's computational efficiency.

\textbf{Normalized Temperature-scaled Cross Entropy (NTCE).}
When viewing CL from a contrastive learning perspective through NormFace we encounter an inherent limitation of cross entropy: the number of negatives in the objective is limited to $K$, the number of class prototypes. Contrastive objectives need very large numbers of negatives in order to converge \citep{he2020momentum}, since fewer negatives provide a worse estimate of the expectation of the actual contrastive objective \citep{koromilas2024bridging}.

By inverting the contrastive direction from instance-to-class to class-to-instance discrimination we address this limitation through the Normalized Temperature-scaled Cross Entropy (NTCE). This modification fundamentally alters the learning dynamics: rather than each instance contrasting against $K$ class prototypes, each class prototype now contrasts against $M$ batch representations.

The key insight underlying NTCE is that \textit{class prototypes themselves can serve as anchors} in the contrastive formulation. By anchoring on the class weight vector corresponding to each instance's ground-truth label and contrasting it against all batch representations, we dramatically expand the negative sampling space. Formally, NTCE takes the form:
\begin{equation}
    L_{\text{NTCE}}(\mU, \mW) = \frac{1}{M} \sum_{i=1}^M -\log \left(\frac{e^{\hat{\vw}_{y_i}^{\top}\vu_i  / \tau}}{\sum_{j=1}^M e^{\hat{\vw}_{y_i}^{\top} \vu_j / \tau}}\right),
    \label{eq:ntce}
\end{equation}
where $\hat{\vw}_{y_i}$ serves as the anchor for instance $i$, and critically, the denominator sums over all $M$ instances in the batch rather than over $K$ classes.

\textbf{Negatives Only Normalization Loss (NONL).} 
NTCE adds enhanced negative sampling on top of NormFace to directly transfer the principles of contrastive learning to cross entropy training. However, it also inherits a fundamental drawback of popular contrastive objectives that compromises its optimization dynamics. The denominator in \Cref{eq:ntce} indiscriminately aggregates all instances sharing the same class anchor. That is, the denominator (also known as the uniformity term) is optimized when instances of the same class have maximum distance \citep{wang2020understanding}, which contradicts the optimality of the numerator (the alignment term). More specifically, positive pairs explicitly appear as negative samples in the normalization term, generating gradients that actively repel instances from their own class prototype. When instance $i$ and instance $j$ share class $y_i = y_j$, the term $e^{\hat{\vw}_{y_i}^{\top}\vu_j  / \tau}$ in the denominator produces gradients that decrease $\hat{\vw}_{y_i}^{\top}\vu_j$, directly opposing the alignment objective. This is a known behavior called \textit{alignment-uniformity coupling} \citep{koromilas2024bridging,yeh2022decoupled}.

To resolve this conflict we introduce the Negatives-Only Normalization Loss (NONL), which explicitly excludes same-class instances from the denominator:
\begin{equation}
    L_{\text{NONL}}(\mU, \mW) = \frac{1}{M} \sum_{i=1}^M -\log \left(\frac{e^{\hat{\vw}_{y_i}^{\top} \vu_i / \tau}}{\sum_{\substack{j=1 \\ j \not\in \mathcal{C}(i)}}^M e^{\hat{\vw}_{y_i}^{\top} \vu_j / \tau}}\right).
    \label{eq:nonl}
\end{equation}

\subsection{Revisiting Supervised Contrastive Learning: Contrasting Mean-Class Prototypes to Instances}
\label{ssec:scl_revisit}

Standard SCL pipelines train an encoder with \Cref{eq:scl} on the unit hypersphere, then discard the projection head and fit a linear classifier with cross-entropy on the unnormalized encoder representations. This phase, known as linear probing, introduces \textbf{unnecessary degrees of freedom} that disrupt the geometric optimality achieved during pretraining: (i) \textbf{geometric mismatch}: SCL features live on the hypersphere with collapsed, ETF-structured class means; linear probing operates in unconstrained Euclidean space, allowing weight rescaling and bias shifts that break classifier-feature alignment \citep{soudry2018implicit}, and (ii) \textbf{computational redundancy}: it is a separate training phase, typically of $T$ epochs over the whole dataset.

 Prior work \citep{graf2021dissecting} shows that SCL minimizers exhibit within-class collapse and a centered simplex ETF of class means. If both SCL and a hypothetical classifier converge to the same NC geometry, then the class-mean prototypes $\hat{\vmu}_c$ themselves already satisfy NC3 by construction. A natural alternative to linear probing is therefore \textbf{Fixed Prototypes (FP)}: keep the projection head (where SCL features are optimized), and set the classifier weights directly to the prototypes, $\vw_c = \hat{\vmu}_c$, with no additional training.

 Despite this being a straightforward idea from the theoretical results of \citet{graf2021dissecting}, there are several reasons to expect this to not work. Exact NC is never reached in practice, where SCL trained models do not realize the ETF (see \Cref{tab:nc_metrics}). Therefore, in the non-optimal case we have no \emph{a priori} reason to believe class-mean prototypes are the classifier this objective drives toward. \Cref{thm:scl-equals-proto} answers whether FP is a proper classifier for SCL.

\subsection{A Unified View of Supervised Learning}
\label{ssec:unified}

We first show in \Cref{prop:all-losses-nc-main} that, in the balanced UFM/LPM setting \citep{tirer2022extended_ufm,yaras2022neural}, the three normalized losses (NormFace, NTCE, and NONL) are globally optimized by Neural Collapse (NC) geometry.

\begin{theorem}[\textbf{Neural Collapse optimality of normalized losses}]
\label{prop:all-losses-nc-main}
In the balanced UFM/LPM setting above with $d\ge K$, every global minimizer of $L_{\mathrm{NF}}$, $L_{\mathrm{NTCE}}$, and $L_{\mathrm{NONL}}$ satisfies NC1--NC3 (within-class collapse, simplex ETF class means, and classifier--feature alignment), up to a global rotation and permutation of class labels.
\end{theorem}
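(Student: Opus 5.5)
We would work in the unconstrained/layer-peeled feature model: the free variables are unit features $\vu_i\in\mathbb{S}^{d-1}$, $n$ per class with $M=nK$, and unit prototypes $\hat{\vw}_c$. The batch is the full balanced set. The strategy is the standard lower-bound-then-equality argument. For each loss we lower bound the objective by a function of two aggregate quantities and show the bound is tight exactly at NC.

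\textbf{NormFace.} This case is essentially the spherical CE result of \citet{yaras2022neural}, but we redo it to fix the template. Write $s_{ij}=\vu_i^\top\hat{\vw}_j/\tau$. By convexity of $x\mapsto e^x$ (Jensen over the $K-1$ wrong classes),
\[
\sum_{j\neq y_i}e^{s_{ij}}\ge (K-1)\exp\Bigl(\tfrac{1}{K-1}\sum_{j\neq y_i}s_{ij}\Bigr).
\]
The map $(a,b)\mapsto\log(1+(K-1)e^{b-a})$ is convex and increasing in $b-a$, so a second Jensen step over $i$ reduces the loss to a function of $\frac{1}{M}\sum_i\bigl(s_{i,y_i}-\frac{1}{K-1}\sum_{j\neq y_i}s_{ij}\bigr)$. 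Using $\sum_j\hat{\vw}_j=K\bar{\vw}$, this quantity equals
\[
\frac{1}{\tau M}\sum_i \vu_i^\top\Bigl(\tfrac{K}{K-1}\hat{\vw}_{y_i}-\tfrac{K}{K-1}\bar{\vw}\Bigr)
\le \frac{K}{\tau(K-1)}\cdot\frac1K\sum_c\|\hat{\vw}_c-\bar{\vw}\|.
\]
By Cauchy--Schwarz, $\sum_c\|\hat{\vw}_c-\bar{\vw}\|^2=K-K\|\bar{\vw}\|^2\le K$. Equality in every step forces the following. First, $\vu_i\parallel\hat{\vw}_{y_i}-\bar{\vw}$, which gives NC1 and NC3. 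Second, $\bar{\vw}=0$. Third, all norms are equal and the off-diagonal logits are constant, which gives a simplex ETF (NC2). The condition $d\ge K$ guarantees that the minimizer is attainable.

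\textbf{NTCE and NONL.} Now the anchor is $\hat{\vw}_{y_i}$ and the sum runs over instances. Group the denominator by class. For NTCE it is $\sum_c\sum_{j:y_j=c}e^{\hat{\vw}_{y_i}^\top\vu_j/\tau}$. Apply Jensen within each class $c$, using the class mean $\vmu_c$ of the $\vu_j$ (not normalized). Then apply Jensen again across classes $c\neq y_i$, and across anchors $i$ with the same label. The same-class block of NTCE contains the term $e^{\hat{\vw}_{y_i}^\top\vu_i/\tau}$ itself.

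For NTCE, the loss then becomes a function of the alignment $A=\frac1M\sum_i\hat{\vw}_{y_i}^\top\vu_i$ and the cross term $B=\frac{1}{K(K-1)}\sum_{c\ne c'}\hat{\vw}_c^\top\vmu_{c'}$. We then check that it is decreasing in $A-B$, or more precisely that its optimum lies where $A$ is maximal given $B$. Then $A-B$ is bounded exactly as above. Note that $\sum_{c'}\vmu_{c'}$ plays the role of $\bar{\vw}$, and $\|\vmu_c\|\le1$.

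For NONL, the positive term sits only in the numerator. The loss is then $-A/\tau+\log(\cdots)$ plus a function of $B$, which is separately monotone, and the argument is simpler.

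Equality in the within-class Jensen step forces all $\hat{\vw}_{y_i}^\top\vu_j$ to be equal within a class. Combined with maximal alignment ($\|\vmu_c\|=1$), this gives NC1. The cross-class equality then gives NC2 and NC3 as before.

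\textbf{Main obstacle: NTCE.} For NTCE the loss $-\log\frac{e^{a}}{ne^{a'}+\dots}$ is not monotone in the alignment in an obvious way, because the positives also appear in the denominator. We would handle this by writing the same-class block as $n\,e^{\text{alignment}}$ after collapse and showing directly that $t\mapsto -t+\log(ne^t+(M-n)e^{-s})$ is increasing in $s$ and nonincreasing in $t$, which holds since its $t$-derivative equals $-1+\frac{ne^t}{ne^t+(M-n)e^{-s}}<0$. Before collapse, we would first need the lower bound that replaces the same-class block by its Jensen minimum. Care is needed here, since Jensen gives a lower bound on the denominator, which is an upper bound on the loss contribution. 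So for the same-class block we would instead use the exact per-class sum and argue via a rearrangement/convexity argument that spreading the within-class similarities cannot help. We expect this step to require the most work.
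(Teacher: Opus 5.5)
Your argument is correct once you delete the detour in your last paragraph, because the ``main obstacle'' comes from a sign slip. Average over the anchors of class $c$. The NTCE term is then $-a_c+\log D_c$, where $a_c=\hat{\vw}_c^\top\vmu_c/\tau$ uses your unnormalized class mean $\vmu_c$ (the paper's $\hat{\vmu}_c$) and $D_c$ is the full denominator. A lower bound on $D_c$ is therefore a lower bound on the loss, not an upper bound. So Jensen on the same-class block, $\sum_{j\in I_c}e^{\hat{\vw}_c^\top\vu_j/\tau}\ge n\,e^{a_c}$, goes exactly in the right direction; this is what the paper's Lemma~\ref{lem:ntce-mean} does.

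Now add your Jensen over $c'\neq c$, with $b_c=\frac{1}{(K-1)\tau}\sum_{c'\neq c}\hat{\vw}_c^\top\vmu_{c'}$. Each class then contributes at least $\log n+\log\bigl(1+(K-1)e^{b_c-a_c}\bigr)$, which is literally the NormFace form. To finish, take one more Jensen over $c$ and use the identity $\frac1K\sum_c(a_c-b_c)=\frac{1}{\tau(K-1)}\sum_c\hat{\vw}_c^\top(\vmu_c-\bar{\vmu})$. Cauchy--Schwarz, using only $\|\vmu_c\|\le1$, bounds this by $\frac{K}{\tau(K-1)}$. Equality forces $\bar{\vmu}=0$ and $\|\vmu_c\|=1$, which gives NC1 because a mean of unit vectors has unit norm only if they coincide. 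It also forces $\hat{\vw}_c=\vmu_c$, and the constant off-diagonal logits then give the simplex ETF. For NONL the two Jensen steps already give the linear bound $\log\bigl(n(K-1)\bigr)-\frac1K\sum_c(a_c-b_c)$. No rearrangement argument is needed anywhere.

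After that shared first reduction, your route differs from the paper's.
\begin{itemize}
\item For NormFace, the paper simply cites Theorem~3.1 of \citet{yaras2022neural}.
\item For NTCE and NONL, it treats the class means as free unit vectors and invokes the La/Lc minimizer characterization of \citet{koromilas2024bridging} as a black box. This gives $\hat{\vmu}_c=\hat{\vw}_c$ forming a simplex ETF.
\item It then recovers NC1 by a tightness-plus-spanning argument: any $K-1$ of the ETF weights span $S$, so each $\vu_j$ projects onto its class mean.
\end{itemize}

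Your Jensen/Cauchy--Schwarz chain is self-contained and gives every equality case explicitly. It treats all three losses uniformly through the single scalar $A-B$. It also works directly on the ball $\|\vmu_c\|\le1$. The paper's Step~3 needs that fact but leaves it implicit: the class-level characterization it imports is over unit vectors, while the actual class means need not be unit. The paper's route buys modularity, since any La/Lc-type loss fits the same template, at the cost of relying on an external theorem.
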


We now return to the FP classifier from \Cref{ssec:scl_revisit}. We follow \Cref{eq:scl} to treat alternative views produced by data augmentation as distinct samples, $\mA=[\mU;\mV]$. Let $\cB_c=\{j\in[2M]:y_j=c\}$ denote the within-batch index set for class $c$, $n_c=|\cB_c|$, and $\hat{\vmu}_c=\tfrac{1}{n_c}\sum_{j\in\cB_c}\va_j$ the corresponding batch prototype (class mean). We define the \emph{prototype loss}, $L_{\text{proto}}(\mA) =$
\begin{equation}
\label{eq:proto}
 - \frac{1}{2M} \sum_{i=1}^{2M}
\log \left(
\frac{e^{\va_i^{\top} \hat{\vmu}_{y_i} / \tau}}{
-e^{\va_i^{\top} \hat{\vmu}_{y_i} / \tau}
+ \sum_{c=1}^K n_c \cdot e^{\va_i^{\top} \hat{\vmu}_{c} / \tau}}
\right),
\end{equation}
where the numerator encourages alignment with the correct class prototype, while the denominator includes both positive and negative prototypes weighted by their batch frequencies $n_c$. \Cref{thm:scl-equals-proto} connects the optima of this loss to the ones of SCL. The proof can be found in \Cref{sec:proof_equivalence_scl}.

\begin{theorem}[\textbf{Equivalence of SCL and prototype--softmax minimizers}]
\label{thm:scl-equals-proto}
For unit-norm representations and balanced labels the supervised contrastive loss $L_{\text{SCL}}$ and the prototype loss $L_{\text{proto}}$ in \Cref{eq:proto} share the same set of global minimizers (up to rotation and label permutation). In particular, at every global minimizer the representations exhibit in-class collapse and the class means form a centered simplex ETF.
\end{theorem}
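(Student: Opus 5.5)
The plan is a sandwich argument. First I identify a common \emph{collapsed} reduction on which $L_{\text{SCL}}$ and $L_{\text{proto}}$ coincide exactly. Then I show that each loss is bounded below by its value on that reduction, with equality only at collapse.

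\textbf{Step 1 (coincidence on collapsed configurations).} Suppose $\va_i=\vv_{y_i}$ for unit vectors $\vv_1,\dots,\vv_K$ and balanced counts $n_c=n$. Then $\hat{\vmu}_c=\vv_c$ and every positive logit equals $1/\tau$. The SCL denominator is
\[
(n-1)e^{1/\tau}+\sum_{c\neq y_i} n\, e^{\vv_{y_i}^\top\vv_c/\tau}.
\]
This is exactly the proto denominator
\[
-e^{1/\tau}+n e^{1/\tau}+\sum_{c\neq y_i} n\, e^{\vv_{y_i}^\top\vv_c/\tau};
\]
the $-e^{\va_i^\top\hat{\vmu}_{y_i}/\tau}$ correction in \Cref{eq:proto} is designed precisely to mimic the exclusion $j\neq i$. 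Both losses therefore reduce to the same function $F(\vv_1,\dots,\vv_K)$. Minimizing $F$ over unit vectors is the classical problem solved in \citet{graf2021dissecting}: by convexity of $x\mapsto\log(a+be^{x})$ and $\sum_c\vv_c^\top\sum_{c'}\vv_{c'}\ge 0$, $F$ is minimized exactly when $\sum_c\vv_c=0$ and all pairwise inner products equal $-1/(K-1)$, i.e.\ at a centered simplex ETF, unique up to rotation and permutation (feasible since $d\ge K$).

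\textbf{Step 2 (SCL lower bound).} For $L_{\text{SCL}}$ I invoke the result of \citet{graf2021dissecting} directly. It gives $L_{\text{SCL}}(\mA)\ge \min F$, with equality iff $\mA$ is collapsed onto an ETF. Its proof, which I would reproduce only briefly, uses two applications of Jensen's inequality. Jensen on $-\log$ of the positive terms replaces the average over $\mathcal{C}(i)$ by the class-mean inner product. Jensen on $\exp$ gives $\sum_{j\in\cB_c}e^{\va_i^\top\va_j/\tau}\ge n_c e^{\va_i^\top\hat{\vmu}_c/\tau}$, with equality iff all $\va_j$, $j\in\cB_c$, have the same inner product with $\va_i$. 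A final bound $\|\hat{\vmu}_c\|\le 1$ forces the equality cases to be collapse.

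\textbf{Step 3 (proto lower bound; main obstacle).} For $L_{\text{proto}}$ I rewrite each term as
\[
\log\Bigl(n_{y_i}-1+\sum_{c\neq y_i} n_c\, e^{\va_i^\top(\hat{\vmu}_c-\hat{\vmu}_{y_i})/\tau}\Bigr).
\]
This is convex in the vector of logit gaps $\va_i^\top(\hat{\vmu}_c-\hat{\vmu}_{y_i})$ and nondecreasing in each gap.

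I would then average over $i\in\cB_y$ and apply Jensen to replace $\va_i$ by $\hat{\vmu}_y$. This yields the gaps $\hat{\vmu}_y^\top\hat{\vmu}_c-\|\hat{\vmu}_y\|^2$, producing a function of the class means alone.

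The delicate part is that the means need not be unit norm ($\|\hat{\vmu}_c\|\le1$, with equality iff class $c$ is collapsed). I must show that the resulting expression is still at least $\min F$, with equality only when every $\|\hat{\vmu}_c\|=1$. I would attempt this by the same route as Step 1: sum the gaps over $c$ and use $\sum_{c}\hat{\vmu}_c^\top\hat{\vmu}_{y}$ summed over $y$ equals $\|\sum_c\hat{\vmu}_c\|^2\ge0$. This bounds the average gap above by $-\tfrac{K}{K-1}\cdot\tfrac1K\sum_y\|\hat{\vmu}_y\|^2$. Monotonicity in the gap then lets me push the bound to the unit-norm case, where the strict decrease of the bound in each norm forces collapse.

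Once the proto bound holds, the two losses have the same infimum, attained exactly on the ETF-collapsed configurations. Their sets of global minimizers therefore coincide, which proves \Cref{thm:scl-equals-proto}.
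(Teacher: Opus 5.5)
Your proof is correct, but it takes a different route from the paper's, and the difference matters.

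\textbf{What the paper does.} The paper never bounds $L_{\text{proto}}$ below by a constant. It introduces a per-sample surrogate $L_\star$, obtained by applying Jensen to each within-class block of the denominator and replacing the self/positive term by its collapsed value $e^{1/\tau}$. It shows $L_{\text{SCL}}\ge L_\star$ and $L_{\text{proto}}\ge L_\star$ pointwise, and uses Graf et al.\ to see that both bounds are tight at SCL minimizers. It then asserts $\min L_{\text{SCL}}=\min L_\star=\min L_{\text{proto}}$, so that any proto minimizer must make $L_\star$ tight and hence be collapsed.

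\textbf{Where that route falls short.} Tightness at the SCL minimizers only yields $\min L_{\text{proto}}\le\min L_{\text{SCL}}$. The reverse inequality is exactly your Step 3, and $L_\star$ cannot supply it, because $L_\star$ itself drops below $\min L_{\text{SCL}}$. Perturb the collapsed ETF to $\va_i=\cos\epsilon\,\vzeta_{y_i}\pm\sin\epsilon\,\mathbf{e}$, with half of each class taking each sign and $\mathbf{e}\perp\mathrm{span}\{\vzeta_c\}$. Then $L_\star$ changes at first order in $\sin^2\epsilon$ with the sign of $2M e^{-K/((K-1)\tau)}-1$. This is negative for, e.g., $\tau=0.1$, $K=10$, $2M=1024$.

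\textbf{What your route buys.} Your sandwich bounds both losses below by the same constant $\min F$, with equality exactly at collapsed simplex ETFs. It therefore actually establishes the equality of minimizer sets, gives $\min L_{\text{proto}}$ explicitly, and needs Graf et al.\ only on the SCL side. The paper's surrogate gives a sample-by-sample comparison of the two losses, but not the equality of minima on its own.

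\textbf{Two small repairs in Step 3.}
\begin{enumerate}
\item The identity $\sum_y\sum_{c\neq y}\bigl(\hat\vmu_y^\top\hat\vmu_c-\|\hat\vmu_y\|^2\bigr)=\bigl\|\sum_c\hat\vmu_c\bigr\|^2-K\sum_y\|\hat\vmu_y\|^2$ bounds the average gap from \emph{below}, not above. The bound is $-\frac{1}{K-1}\sum_y\|\hat\vmu_y\|^2\ge-\frac{K}{K-1}$, which is the direction monotonicity needs.
\item Before invoking monotonicity, pass from the average over anchor classes of the per-anchor functions to a single function of the average gap. Use $\sum_{c\neq y}e^{t_c}\ge(K-1)e^{\bar t_y}$, then Jensen over $y$ for $h(s)=\log\bigl(n-1+n(K-1)e^{s/\tau}\bigr)$, which is convex and strictly increasing. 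Balance is what makes the per-anchor function the same symmetric function for every $y$.
\end{enumerate}
Equality then forces $\|\hat\vmu_y\|=1$ for all $y$, i.e.\ collapse. After that $L_{\text{proto}}=F$, and your Step 1 forces the ETF.
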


\Cref{thm:scl-equals-proto} closes the gaps raised in \Cref{ssec:scl_revisit}. It proves the minimizer-set equality $\arg\min L_{\text{SCL}} = \arg\min L_{\text{proto}}$, meaning that SCL's objective is equivalent to optimizing the prototype-softmax classifier of \Cref{eq:proto}. The class-mean prototypes are therefore not a post-hoc choice but the solution the SCL objective drives toward across the entire optimization trajectory, not only at the unreachable exact minimum. This enables us to discard linear probing by setting $\vw_c = \hat{\vmu}_c$ directly. In our FP method, classifier weights are the learned class means, taken from the projection-head output where SCL features actually live, with no additional magnitude or bias. We show empirically (\Cref{sec:experiments}) that FP matches linear probing accuracy while eliminating a separate training phase. We discuss the relation to \citet{graf2021dissecting} and to \citet{kini2024symmetric}, who analyze SCL minimizers under specific architectural assumptions, in \Cref{ssec:graf_comparison,ssec:nc_architectures}.

\textbf{The unified view.}
The $n_c$ weighting in the denominator of \Cref{eq:proto} captures the effect of using multiple negative instances, matching the structure of \Cref{eq:ntce}. When discarding the $n_c$ weights, this loss reduces to \Cref{eq:normface}, establishing a direct \textbf{correspondence between the prototype weights and class means}. The optimal solution of \Cref{eq:normface} holds when $\vw_c = \hat{\vmu}_{c}$ \citep{yaras2022neural}, and our results above extend this correspondence to NTCE and NONL on the CL side and to SCL on the contrastive side. Classifier learning with normalized losses and supervised contrastive learning are therefore two appearances of the same object: prototype contrast on the unit hypersphere. The two paradigms differ only in whether the prototype is parameterized (the learnable $\vw_c$ in CL) or emergent (the class mean $\hat{\vmu}_c$ in SCL). Under this view, supervised learning becomes a question of \emph{directions}, not magnitudes, with a single optimal geometry, the simplex ETF, reachable from both sides.
\definecolor{BestGreen}{RGB}{0,120,0}

\begin{table}[t]
\centering
\caption{\textbf{Classifier Learning Performance Comparison.} \textcolor{ForestGreen}{\textbf{green}}: overall best per dataset.}
\label{tab:classifier_accuracy}

\renewcommand{\arraystretch}{1}
\definecolor{HeaderGray}{RGB}{248,249,250}
\definecolor{BestGreen}{RGB}{0,120,0}
\small

\setlength{\tabcolsep}{4pt}
\scriptsize
\begin{tabular}{@{}c@{\hspace{14pt}}S[table-format=2.1]@{\hspace{12pt}}S[table-format=2.1]@{\hspace{12pt}}S[table-format=2.1]@{\hspace{12pt}}S[table-format=2.1]@{}}
\toprule
\rowcolor{HeaderGray}
\textbf{Loss} & 
\textbf{CIFAR-10} & 
\textbf{CIFAR-100} & 
\textbf{ImageNet-100} & 
\textbf{ImageNet-1K} \\
\midrule
\textsc{CE}          & 94.6 & 72.1 & 84.4 & 75.4 \\
\textsc{ETF + DR}    & 94.4 & 72.1 & 84.5 & 75.4 \\
\textsc{NormFace}   & 94.8 & 72.4 & 84.4 & 76.4 \\
\textsc{NTCE} (ours)& 94.7 & 72.9 & 84.7 & \textcolor{BestGreen}{\textbf{76.7}} \\
\textsc{NONL} (ours)& \textcolor{BestGreen}{\textbf{94.9}} & \textcolor{BestGreen}{\textbf{73.6}} & \textcolor{BestGreen}{\textbf{84.9}} & 76.5 \\
\bottomrule
\end{tabular}
\end{table}
\begin{table}[t]
\centering
\caption{\textbf{Supervised Contrastive Learning Performance Comparison.}
\textcolor{ForestGreen}{\textbf{Green}}: best per dataset. $T$: epochs;
LP: Linear Probing; NLP: Normalized Linear Probing; FP: Fixed Prototypes.}
\label{tab:scl_accuracy}

\small
\setlength{\tabcolsep}{6pt}
\setlength{\tabcolsep}{4pt}
\scriptsize
\begin{tabular}{@{}l c c
S[table-format=2.1] S[table-format=2.1] S[table-format=2.1] S[table-format=2.1]@{}}
\toprule
\textbf{Method} & \textbf{Loss} & \textbf{Passes} &
\textbf{CIFAR-10} & \textbf{CIFAR-100} & \textbf{IN-100} & \textbf{IN-1K} \\
\midrule
LP  & \textsc{SCL} & $T{\times}N$ & \textcolor{BestGreen}{\textbf{95.0}} & \textcolor{BestGreen}{\textbf{73.9}} & 84.8 & \textcolor{BestGreen}{\textbf{75.1}} \\
NLP & \textsc{SCL} & $T{\times}N$ & 94.9 & 73.6 & 84.8 & \textcolor{BestGreen}{\textbf{75.1}} \\
FP (ours)  & \textsc{SCL} & \textcolor{ForestGreen}{$N$} &
\textcolor{BestGreen}{\textbf{95.0}} & \textcolor{BestGreen}{\textbf{73.9}} & \textcolor{ForestGreen}{\textbf{86.8}} & \textcolor{BestGreen}{\textbf{75.1}} \\
\bottomrule
\end{tabular}
\end{table}

\begin{table*}[t]
\centering
\caption{\textbf{NC metrics} on CIFAR-10/100 (training). \textbf{Bold} marks the best within each learning family; \textcolor{bestresult}{\textbf{green}} marks the overall best per dataset. Theoretical optima: Intra ER $0/0$, Inter ER $9/99$, Weights ER $9/99$, Weight Align $0/0$, Instance Align $0/0$, MIR $1/1$, HDR $0/0$.}
\label{tab:nc_metrics}
\renewcommand{\arraystretch}{1.3}
\setlength{\tabcolsep}{6pt}
\resizebox{\textwidth}{!}{%
\begin{tabular}{@{}>{\centering\arraybackslash}m{3.5cm}@{\hspace{6pt}}l@{\hspace{6pt}}ccc@{\hspace{6pt}}cc@{\hspace{6pt}}cc@{}}
\toprule
\rowcolor{HeaderGray}
\multirow{2}{*}{\textbf{Learning Family}} & 
\multirow{2}{*}{\textbf{Method}} & 
\multicolumn{3}{c@{\hspace{6pt}}}{\textbf{Effective Rank}} & 
\multicolumn{2}{c@{\hspace{6pt}}}{\textbf{Alignment}} & 
\multicolumn{2}{c}{\textbf{Information Theory Metrics}} \\
\cmidrule(lr){3-5} \cmidrule(lr){6-7} \cmidrule(lr){8-9}
\rowcolor{HeaderGray}
 & & 
\textbf{Intra} $\downarrow$ & 
\textbf{Inter} $\uparrow$ & 
\textbf{Weights} $\uparrow$ & 
\textbf{Weight} $\downarrow$ & 
\textbf{Instance} $\downarrow$ & 
\textbf{MIR} $\uparrow$ & 
\textbf{HDR} $\downarrow$ \\
\midrule
% Classifier Learning section
 &
CE & 22.5 / 96.4 & 8.6 / 57.1 & 8.9 / 89.7 & 0.59 / 0.83 & 0.69 / 1.05 & \textbf{0.98} / 0.97 & 0.03 / 0.13 \\
& ETF + DR & 
9.00 / 18.4 & 
8.90 / 94.8 & 
\textcolor{bestresult}{\textbf{9.00}} / \textcolor{bestresult}{\textbf{99.00}} & 
0.58 / 0.59 & 
0.59 / 0.61 & 
\textbf{0.98} / \textcolor{bestresult}{\textbf{1.00}} & 
\textcolor{bestresult}{\textbf{0.02}} / \textcolor{bestresult}{\textbf{0.11}} \\
\rowcolor{SubtleHighlight}
\textcolor{MethodGray}{\textsc{Classifier}} & NormFace & 10.5 / 13.6 & \textcolor{bestresult}{\textbf{9.0}} / 96.2 & \textcolor{bestresult}{\textbf{9.0}} / 96.1 & 0.12 / \textbf{0.01} & 0.14 / 0.06 & 0.95 / \textcolor{bestresult}{\textbf{1.00}} & 0.04 / 0.30 \\
\textcolor{MethodGray}{\textsc{Learning}} & NTCE & 9.0 / 12.6 & \textcolor{bestresult}{\textbf{9.0}} / \textcolor{bestresult}{\textbf{99.0}} & 8.9 / 98.9 & \textbf{0.08} / \textbf{0.01} & \textcolor{bestresult}{\textbf{0.10}} / \textcolor{bestresult}{\textbf{0.05}} & 0.96 / \textcolor{bestresult}{\textbf{1.00}} & 0.05 / 0.30 \\
\rowcolor{SubtleHighlight}
& NONL & \textcolor{bestresult}{\textbf{4.0}} / \textbf{11.4} & \textcolor{bestresult}{\textbf{9.0}} / \textcolor{bestresult}{\textbf{99.0}} & \textcolor{bestresult}{\textbf{9.0}} / \textcolor{bestresult}{\textbf{99.0}} & 0.11 / \textbf{0.01} & 0.16 / 0.06 & 0.95 / \textcolor{bestresult}{\textbf{1.00}} & 0.05 / 0.30 \\
\addlinespace[0.4em]
\arrayrulecolor{BorderLight}\midrule\arrayrulecolor{black}
\addlinespace[0.2em]
% Supervised Contrastive Learning section
\textcolor{MethodGray}{\textsc{Contrastive}} &
SCL \textcolor{MethodGray}{(w probing)} & \textbf{4.5} / \textcolor{bestresult}{\textbf{7.5}} & \textcolor{bestresult}{\textbf{9.0}} / \textbf{66.7} & 8.3 / \textbf{77.8} & 0.99 / 1.03 & \textcolor{bestresult}{\textbf{0.10}} / \textbf{0.34} & 0.99 / \textbf{0.95} & \textbf{0.07} / \textbf{0.11} \\
\rowcolor{SubtleHighlight}
\textcolor{MethodGray}{\textsc{Learning}} & SCL \textcolor{MethodGray}{(w/o probing)} & \textbf{4.5} / \textcolor{bestresult}{\textbf{7.5}} & \textcolor{bestresult}{\textbf{9.0}} / \textbf{66.7} & \textcolor{bestresult}{\textbf{9.0}} / 66.5 & \textcolor{bestresult}{\textbf{0.00}} / \textcolor{bestresult}{\textbf{0.00}} & \textcolor{bestresult}{\textbf{0.10}} / \textbf{0.34} & \textcolor{bestresult}{\textbf{1.00}} / 0.87 & 0.09 / 0.14 \\
\bottomrule
\end{tabular}%
}
\end{table*}

\begin{table*}[t]
\centering
\caption{\textbf{Transfer Learning Results}. Numbers are top-1 accuracy (\%) for all datasets except VOC2007 (mAP). Best per column in \textcolor{BestGreen}{\textbf{green}}. 
}
\label{tab:transfer_results}
\begin{tabular}{lcccccccccc}
\toprule
Method & Food & CIFAR10 & CIFAR100 & Cars & DTD & Pets & Flowers & VOC2007 & Mean \\
\midrule
CE &
68.0 & 88.6 & 67.7 & 25.9 & 69.9 & 67.0 & 81.4 & 67.1 &
\textbf{67.0} \\
ETF + DR &
57.2 & 83.9 & 54.3 & 9.8 & 64.0 & 49.2 & 58.3 & 60.0 &
\textbf{54.6} \\
NormFace &
69.8 & 89.8 & 69.7 & 29.7 & \textcolor{BestGreen}{\textbf{70.1}} & 69.7 & 83.2 & 67.9 &
\textbf{68.7} \\
NTCE &
69.3 & 89.9 & 69.8 & 28.1 & 70.0 & 69.4 & 81.9 & 67.8 &
\textbf{68.3} \\
NONL &
\textcolor{BestGreen}{\textbf{70.7}} &
\textcolor{BestGreen}{\textbf{90.0}} &
\textcolor{BestGreen}{\textbf{71.0}} &
\textcolor{BestGreen}{\textbf{38.1}} &
69.4 &
\textcolor{BestGreen}{\textbf{72.9}} &
\textcolor{BestGreen}{\textbf{85.2}} &
\textcolor{BestGreen}{\textbf{68.3}} &
\textcolor{BestGreen}{\textbf{70.7}} \\
\midrule
$\Delta$(NONL$-$CE) &
+4.0\% & +1.6\% & +4.9\% & +47.1\% & $-0.7$\% & +8.8\% & +4.7\% & +1.8\% &
\textbf{+5.5\%} \\
\bottomrule
\end{tabular}
\end{table*}

\begin{table*}[t]
\centering
\caption{\textbf{Robustness Results.} Clean error, mCE, and corruption error (\%) on ImageNet-C. Best per column in \textcolor{BestGreen}{\textbf{green}}.
}
\label{tab:imagenet_c_corr}
\setlength{\tabcolsep}{1.2pt} % very tight columns
\begin{tabular}{@{}l cc ccc cccc cccc cccc@{}}
\toprule
Network & Error & mCE &
\multicolumn{3}{c}{Noise} &
\multicolumn{4}{c}{Blur} &
\multicolumn{4}{c}{Weather} &
\multicolumn{4}{c}{Digital} \\
\cmidrule(lr){4-6} \cmidrule(lr){7-10} \cmidrule(lr){11-14} \cmidrule(lr){15-18}
& & &
Gauss & Shot & Impulse &
Defoc & Glass & Motion & Zoom &
Snow & Frost & Fog & Bright &
Contrast & Elastic & Pixel & JPEG \\
\midrule
CE       & 25.0 & 80.1 &
75 & 78 & 83 &
\textcolor{BestGreen}{\textbf{84}} & 94 & 86 & 88 &
80 & 78 & 68 & 62 &
66 & 96 & 84 & 80 \\
ETF + DR & 24.6 & 79.2 &
75 & 78 & 82 &
86 & 94 & 85 & 86 &
\textcolor{BestGreen}{\textbf{76}} & 77 & 66 & 62 &
67 & 93 & 80 & 82 \\
NormFace & 23.6 & 77.8 &
74 & 76 & 82 &
\textcolor{BestGreen}{\textbf{84}} & \textcolor{BestGreen}{\textbf{93}} & \textcolor{BestGreen}{\textbf{81}} & \textcolor{BestGreen}{\textbf{85}} &
\textcolor{BestGreen}{\textbf{76}} & \textcolor{BestGreen}{\textbf{75}} & \textcolor{BestGreen}{\textbf{64}} & \textcolor{BestGreen}{\textbf{60}} &
\textcolor{BestGreen}{\textbf{65}} & \textcolor{BestGreen}{\textbf{91}} & 81 & 80 \\
NTCE     & \textcolor{BestGreen}{\textbf{23.3}} & \textcolor{BestGreen}{\textbf{77.6}} &
73 & 76 & \textcolor{BestGreen}{\textbf{80}} &
\textcolor{BestGreen}{\textbf{84}} & \textcolor{BestGreen}{\textbf{93}} & 83 & \textcolor{BestGreen}{\textbf{85}} &
\textcolor{BestGreen}{\textbf{76}} & \textcolor{BestGreen}{\textbf{75}} & 65 & \textcolor{BestGreen}{\textbf{60}} &
\textcolor{BestGreen}{\textbf{65}} & 93 & \textcolor{BestGreen}{\textbf{78}} & \textcolor{BestGreen}{\textbf{79}} \\
NONL      & 23.5 & 77.8 &
\textcolor{BestGreen}{\textbf{72}} & \textcolor{BestGreen}{\textbf{75}} & \textcolor{BestGreen}{\textbf{80}} &
85 & \textcolor{BestGreen}{\textbf{93}} & 83 & \textcolor{BestGreen}{\textbf{85}} &
\textcolor{BestGreen}{\textbf{76}} & \textcolor{BestGreen}{\textbf{75}} & \textcolor{BestGreen}{\textbf{64}} & \textcolor{BestGreen}{\textbf{60}} &
66 & \textcolor{BestGreen}{\textbf{91}} & 81 & 81 \\
\bottomrule
\end{tabular}
\label{tab:imagenet_c_results}
\end{table*}

\vspace{-2pt}

\section{Empirical Validation and Discussion}\label{sec:experiments}

In this section we empirically validate our methods against cross-entropy (CE), ETF + DR \citep{yang2022inducingnc}, and NormFace\citep{wang2017normface} for Classifier Learning paradigms and supervised contrastive learning (SCL), evaluating: (i) classification accuracy, (ii) proximity to neural collapse geometry, and (iii) NC convergence speed. Experiments are conducted on four standard datasets: \textit{CIFAR10, CIFAR100, ImageNet-100, and ImageNet1K}, following common representation learning benchmarking practices \citep{khosla2020supcon, markou2024guiding,wang2021unsupervised, yeh2022decoupled}. We use ResNet50 for ImageNet datasets and ResNet18 for CIFAR. Implementation details are provided in \Cref{ssec:impl_details}.

\subsection{Classification Performance}

\textbf{Classifier Learning Methods.} 
As can be inferred from \Cref{tab:classifier_accuracy}, normalized losses \textit{outperform cross-entropy} (CE) in all cases, while also our losses further outperform NormFace. NONL achieves the strongest gains on datasets with few (10) to medium (100) number of classes while it has the second best score on ImageNet-1K. 
Here we have to note that ImageNet-1K, our objectives exhibit the typical behavior of contrastive-style objectives: they benefit from larger batch sizes, whereas using smaller batches leads to degraded performance due to insufficient in-batch negatives (see \Cref{ssec:batch_size}).

\textbf{Supervised Contrastive Learning Methods.} 
The accuracy from three classifier learning strategies on SCL representations is presented in \Cref{tab:scl_accuracy}: (i) standard linear probing with learnable weights and bias, (ii) normalized linear probing using NormFace loss, and (iii) fixed prototypes computed as class-mean embeddings. 
Fixed prototypes match linear probing performance on 3 of 4 datasets, and mark a considerable +2.0\% improvement on ImageNet-100 \textbf{requiring only $N$ forward passes versus $T \times N$} for training-based methods, where $T$ is the number of epochs. Normalized linear probing achieves comparable accuracy to standard linear probing, validating that the \textit{discriminative information in SCL features resides primarily in their angular structure} rather than magnitude or biases. These findings validate that angular structure alone suffices for discrimination in well-trained representations, enabling \textit{training-free classification} in SCL via fixed prototypes that \textbf{eliminate huge computational costs} by discarding a, typically hours long, training phase.

\subsection{Quantifying Neural Collapse}

We quantify NC1–NC3 with complementary, condition-specific metrics; we omit NC4 (bias collapse) as our models enforce zero bias by design.

\textbf{Effective Rank (NC1, NC2).} For matrix $\mathbf{A}$ with singular values $\{\sigma_i\}$ the effective rank \citep{roy2007effective} is defined as $\text{erank}(\mathbf{A}) = \exp\{-\sum_i p_i \log p_i\}$ where $p_i = \sigma_i/\sum_j \sigma_j$. We compute the intra and inter class effective ranks \citep{zhang2024matrix} as: $\text{erank}_{\text{intra}} = \frac{1}{K} \sum_{c=1}^{K} \text{erank}(\text{Cov}[\mathbf{z}_i - \mu_c \mid y_i = c])$ and $
    \text{erank}_{\text{inter}} = \text{erank}(\text{Cov}[\mu_c - \mu_G])$, where $\text{Cov}$ is the covariance matrix. These metrics quantify \textbf{NC1} (within-class variability collapse): $\text{erank}_{\text{intra}} \to 0$ indicates $\vz_i \to \mu_{y_i}$, and \textbf{NC2} (ETF structure): \citet{zhang2024matrix} proved that when $\text{erank}_{\text{inter}} = K-1$ the class means form a simplex with equal pairwise angles. We also report $\mathrm{erank}(\mathbf{W})$ to assess whether classifier weights approximate an equiangular tight frame (ETF).

\textbf{Alignment (NC3).} We quantify feature–classifier alignment by $\frac{1}{N}\sum_{i=1}^{N} \|\mathbf{z}_i - \mathbf{w}_{y_i}\|_2^2$ and also report instance-to-instance alignment to probe per-class collapse.

\textbf{Information Metrics (NC2, NC3).} For normalized Gram matrices $\mathbf{G}_W$ (weights), $\mathbf{G}_M$ (class means) and H being the matrix entropy, \citet{song2024unveiling} connects Neural Collapse to the metrics:

\[
\begin{aligned}
\text{MIR}
&= \frac{\text{H}(\mathbf{G}_W) + \text{H}(\mathbf{G}_M)
        - \text{H}(\mathbf{G}_W \odot \mathbf{G}_M)}
       {\min\{\text{H}(\mathbf{G}_W), \text{H}(\mathbf{G}_M)\}}, \\
\text{HDR}
&= \frac{\lvert \text{H}(\mathbf{G}_W) - \text{H}(\mathbf{G}_M) \rvert}
       {\max\{\text{H}(\mathbf{G}_W), \text{H}(\mathbf{G}_M)\}} .
\end{aligned}
\]

These capture the information-theoretic signatures of \textbf{NC2} and \textbf{NC3} where under full collapse $\text{MIR} \to 1$ and $\text{HDR} \to 0$, reflecting perfect structural alignment.

In \Cref{tab:nc_metrics} four key findings are revealed: \textbf{(i) CE fails to achieve NC:} high intra-class variance (erank 22.5/96.4), suboptimal inter-class separation (erank 8.6/57.1 vs. theoretical K-1=9/99), and poor weight-feature alignment (w-inst 0.59/0.83, inst-inst 0.69/1.05). \textbf{(ii) Normalized softmax losses satisfy NC2-NC3} since they achieve perfect inter-class separation (erank 9.0/99.0), near-zero alignment errors (NTCE: w-inst 0.08/0.01, inst-inst 0.10/0.05), and optimal weight dimensionality matching the simplex ETF, with \textbf{NONL being the overall best} mostly due to its better intra class structure. \textbf{(iii) SCL with linear probing violates NC3:} despite superior within-class collapse (erank 4.5/7.5), inter-class structure degrades (erank 9.0/66.7) and classifier-feature alignment fails (w-inst 0.99/1.03). \textbf{(iv) Fixed prototypes restore NC3 in SCL:} removing the trainable classifier enforces perfect alignment by construction, though inter-class separation remains suboptimal. In appendix \Cref{tab:nc_ce_speed} we further show that \textbf{(v)} our objectives match cross-entropy's NC metrics with \textbf{substantially fewer training iterations}.

While CE and ETF+ DR attain slightly better MIR/HDR values than our normalized losses, these information-theoretic metrics primarily reflect the overall entropy/redundancy of the representation, not the NC geometry itself. In our case, CE appears to preserve a bit more raw variability, but organizes it in a less NC-like, less prototype-structured way (higher intra-class effective rank, weaker alignment), whereas our normalized losses reshape the same information into a cleaner NC geometry. As our downstream experiments show in \Cref{ssec:downstream_tasks}, this structured organization is more beneficial for transfer, long-tailed performance, and robustness, even though CE may capture slightly more “information” by these metrics.

\begin{table}[]
\centering
\caption{\textbf{Class Imbalance.} Performance under on CIFAR-10-LT/100-LT vs. imbalance ratio $\tau$ \citep{yang2022inducingnc}. Best per column in \textcolor{BestGreen}{\textbf{green}}.}
\label{tab:imbalance_results}
\setlength{\tabcolsep}{4pt}
\scriptsize
\begin{tabular}{lcccccc}
\toprule
& \multicolumn{3}{c}{CIFAR-10-LT} & \multicolumn{3}{c}{CIFAR-100-LT} \\
\cmidrule(lr){2-4} \cmidrule(lr){5-7}
Method & $\tau = 0.1$ & $0.02$ & $0.01$ & $\tau = 0.1$ & $0.02$ & $0.01$ \\
\midrule
CE        & 88.1 & 76.8 & 70.2 & 55.5 & 41.5 & 37.4 \\
ETF + DR  & 88.0 & 77.8 & 71.3 & 54.4 & 40.6 & 36.2 \\
NormFace  & 88.0 & 79.0 & 74.3 & 54.6 & 40.1 & 35.9 \\
NTCE      & 88.8 & \textcolor{BestGreen}{\textbf{80.8}} & \textcolor{BestGreen}{\textbf{77.3}} & 56.8 & 43.7 & 39.0 \\
NONL      & \textcolor{BestGreen}{\textbf{89.2}}   & 80.5   & 76.3   & \textcolor{BestGreen}{\textbf{57.4}} & \textcolor{BestGreen}{\textbf{44.7}} & \textcolor{BestGreen}{\textbf{40.0}} \\
\midrule
$\Delta(\text{NONL} - \text{CE})$ 
          & +1.2\%   & +4.8\%   & +8.7\%   & +3.4\% & +7.7\% & +7.0\% \\
\bottomrule
\end{tabular}
\end{table}

\subsection{Practical Benefits of Collapsed Representations}\label{ssec:downstream_tasks}

\textbf{Transfer learning.}
We first ask whether representations that lie closer to the NC regime are more generalizable to unseen tasks. Following typical pipelines \citep{khosla2020supcon}, we freeze the pretrained encoder for each loss and train a linear classifier (or detection head for VOC07) on eight diverse downstream datasets. As shown in Table~\ref{tab:transfer_results}, \textit{NONL consistently yields strong transfer performance}: it attains the best accuracy on 7/8 datasets and delivers a +5.5\% relative improvement in mean performance over CE, while NTCE also consistently exceeds CE. These results confirm prior works \citep{galanti2021role,papyan2020prevalence,bartlett2017spectrally,neyshabur2018pacbayes} that explicitly encouraging NC-like geometry produces features that generalize better beyond the pretraining distribution.

\textbf{Long-tailed classification.}
We next examine robustness to class imbalance using standard evaluation pipelines \citep{yang2022inducingnc}. For CIFAR-10-LT and CIFAR-100-LT, we construct long-tailed versions with three imbalance ratios and train all models directly on the imbalanced data. \Cref{tab:imbalance_results} shows that our NC-inducing objectives substantially improve minority-class performance: on CIFAR-100-LT, NONL outperforms CE by +3.4\%, +7.7\%, and +7.0\% under increasing imbalance, with gains up to +8.7\% across CIFAR-10/100-LT, and also  surpasses the ETF+DR baseline \citep{yang2022inducingnc}. This validates the literature \citep{yang2022inducingnc} that enforcing NC-like geometric structure helps maintain class separability even when minority classes are severely underrepresented, complementing the improvements observed in transfer.

\textbf{Out-of-distribution robustness.}
Finally, we evaluate robustness to common corruptions on ImageNet-C \citep{hendrycks2019robustness}. Models are trained on clean ImageNet-1K only and evaluated on corrupted variants, reporting clean top-1 error and mean Corruption Error (mCE) normalized as in \citet{hendrycks2019robustness}. As summarized in \Cref{tab:imagenet_c_results}, our normalized losses reduce mCE compared to CE while also improving clean accuracy. Thus, NC-inducing objectives not only improve in-distribution performance, but as reported in the literature \citep{fawzi2016robustness,ding2018mma,hein2017formal}, also yield representations that are more robust to distribution shift, in line with their benefits for transfer and long-tailed recognition.

\subsection{Limitations}
\label{ssec:limitations}
Our framework draws several limitations from the related literature. \textbf{(i)} Like other contrastive methods, NTCE and NONL benefit from larger batches to provide sufficient in-batch negatives (\Cref{ssec:batch_size}); we explicitly acknowledge that needing larger batches is a practical drawback, but this is a typical limitation of contrastive methods (including SCL~\citep{khosla2020supcon} and many SSL methods~\citep{chen2020simple}). Recent SSL methods (memory banks, queues, momentum encoders~\citep{he2020momentum}) suggest promising ways to increase the number of negatives without linearly scaling batch size, which we view as a natural direction for future extensions of NONL. \textbf{(ii)} Our theory rests on balanced-class UFM/LPM assumptions; while recent work formally justifies UFM as a faithful proxy for deep ResNet/Transformer training~\citep{sukenik2024deep,sukenik2025deep}, end-to-end training dynamics are not directly characterized. \textbf{(iii)} For SCL specifically, \citet{kini2024symmetric} show that when using ReLU activations the optimum is an orthogonal frame with collapsed in-class representations rather than a centered simplex ETF; the minimizer-set equivalence in \Cref{thm:scl-equals-proto} still holds with this alternative minimizer, but the resulting geometry differs from the standard NC simplex. \textbf{(iv)} In the standard transfer setting we evaluate, knowledge transfers from large-scale to smaller downstream tasks~\citep{khosla2020supcon, chen2020simple}; the recent line on coarse-to-fine transfer~\citep{chen2022perfectly,kornblith2021why} argues that collapsed representations may struggle to distinguish fine-grained sub-classes within a coarse class. 
\section{Conclusion}
In this work, we address the mismatch between the theoretical optima of supervised objectives and their behavior in practice. Constraining learning to the unit hypersphere removes the radial degeneracy of cross-entropy and unifies normalized softmax and supervised contrastive learning (SCL) as a single prototype-contrast paradigm. Building on this view, we propose two objectives (NTCE and NONL) that accelerate convergence to Neural Collapse. Theoretically, we prove SCL already yields an optimal prototype classifier during contrastive training, eliminating the typical linear probing phase. Empirically, across four benchmarks including ImageNet-1K, our methods surpass CE accuracy, reach $\ge$95\% on NC metrics, and translate to practical benefits with better capabilities on transfer learning, long-tailed classification and robustness. Overall, supervised learning is recast as prototype-based classification on the hypersphere, narrowing the theory–practice gap while simplifying and speeding up training.

\section*{Acknowledgements}
Panagiotis Koromilas was supported by the Hellenic Foundation for Research and Innovation (HFRI) under the 4th Call for HFRI PhD Fellowships (Fellowship Number:  10816). Yannis Panagakis was supported by the project MIS 5154714 of the National Recovery and Resilience Plan Greece 2.0 funded by the European Union under the NextGenerationEU Program. This work was also supported by computing time awarded on the Cyclone supercomputer of the High Performance Computing Facility of The Cyprus Institute, project ID aiml. 

\section*{Impact Statement}
This paper presents work whose goal is to advance the field of machine learning. Specifically, our work improves the understanding of the inner workings of supervised representation learning and proposes methods to improve several aspects of performance and optimization. There are many potential societal consequences of our work, none of which we feel must be specifically highlighted here.

\bibliography{main}
\bibliographystyle{icml2026}

%%%%%%%%%%%%%%%%%%%%%%%%%%%%%%%%%%%%%%%%%%%%%%%%%%%%%%%%%%%%%%%%%%%%%%%%%%%%%%%
%%%%%%%%%%%%%%%%%%%%%%%%%%%%%%%%%%%%%%%%%%%%%%%%%%%%%%%%%%%%%%%%%%%%%%%%%%%%%%%
% APPENDIX
%%%%%%%%%%%%%%%%%%%%%%%%%%%%%%%%%%%%%%%%%%%%%%%%%%%%%%%%%%%%%%%%%%%%%%%%%%%%%%%
%%%%%%%%%%%%%%%%%%%%%%%%%%%%%%%%%%%%%%%%%%%%%%%%%%%%%%%%%%%%%%%%%%%%%%%%%%%%%%%
\newpage
\appendix
\onecolumn

\section{Neural Collapse Optimality of Normalized Objectives}
\label{app:nc-objectives}

We adopt the balanced unconstrained-features / layer-peeled model (UFM/LPM) \citep{tirer2022extended_ufm,yaras2022neural}. The last-layer features $\vz_i\in\R^d$ and classifier weights $\vw_c\in\R^d$ are free optimization variables. We work with their $\ell_2$-normalized versions
\[
  \vu_i = \frac{\vz_i}{\|\vz_i\|}, \qquad
  \hat\vw_c = \frac{\vw_c}{\|\vw_c\|},
\]
so that $\|\vu_i\|=\|\hat\vw_c\|=1$ and
\[
  S_{ic} := \vu_i^\top\hat\vw_c.
\]
There are $K$ classes and $M$ training samples, and the dataset is balanced: each class $c$ has index set
\[
  I_c := \{i : y_i=c\}
  \quad\text{with}\quad
  |I_c| = n = M/K.
\]
We assume $d\ge K$.

\paragraph{Normalized CE--based losses.}
We consider three normalized cross-entropy--based losses with temperature $\tau>0$:
\begin{align}
  \Ls_{\mathrm{NF}}
  &= -\frac{1}{M} \sum_{i=1}^M
     \log
     \frac{\exp(S_{i,y_i}/\tau)}
          {\sum_{c=1}^K \exp(S_{ic}/\tau)},
     \label{eq:app-loss-nf}
\\
  \Ls_{\mathrm{NTCE}}
  &= -\frac{1}{M} \sum_{i=1}^M
     \log
     \frac{\exp(S_{i,y_i}/\tau)}
          {\sum_{j=1}^M \exp(S_{j,y_i}/\tau)},
     \label{eq:app-loss-ntce}
\\
  \Ls_{\mathrm{NONL}}
  &= -\frac{1}{M} \sum_{i=1}^M
     \log
     \frac{\exp(S_{i,y_i}/\tau)}
          {\sum_{j:\,y_j\neq y_i} \exp(S_{j,y_i}/\tau)}.
     \label{eq:app-loss-nonl}
\end{align}

\paragraph{Neural Collapse properties.}
We say a configuration exhibits \emph{Neural Collapse} (NC) if there exist unit vectors $\vmu_1,\dots,\vmu_K\in\R^d$ such that:
\begin{itemize}[]
\item[(NC1)] (Within-class collapse)
  $\vu_i = \vmu_{y_i}$ for all $i$.
\item[(NC2)] (Simplex ETF) the vectors $\{\vmu_c\}$ form a centered regular simplex in a $(K-1)$--dimensional subspace:
  \[
    \|\vmu_c\|=1
    \quad\text{and}\quad
    \vmu_c^\top\vmu_{c'} = -\frac{1}{K-1}
    \quad\forall c\neq c'.
  \]
\item[(NC3)] (Classifier--mean alignment)
  $\hat\vw_c = \vmu_c$ for all $c$.
\end{itemize}
At such a configuration the (normalized-feature) class means
$\hat\vmu_c := \frac{1}{n}\sum_{i\in I_c} \vu_i$ coincide with $\vmu_c$ and are therefore unit norm.

\begin{theorem}[NC optimality of normalized CE--based losses]
\label{thm:all-losses-nc}
In the balanced UFM/LPM setting above with $d\ge K$, every global minimizer of $\Ls_{\mathrm{NF}}$, $\Ls_{\mathrm{NTCE}}$, and $\Ls_{\mathrm{NONL}}$ satisfies NC1--NC3, up to a global rotation and permutation of class labels.
\end{theorem}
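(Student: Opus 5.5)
The plan is a uniform \emph{Jensen plus Cauchy--Schwarz} lower-bound argument, in the style of \citet{lu2022ncceloss} and \citet{graf2021dissecting}, applied to each of $\Ls_{\mathrm{NF}}$, $\Ls_{\mathrm{NTCE}}$ and $\Ls_{\mathrm{NONL}}$ in turn. For each loss I will:
\begin{enumerate}
\item reduce the objective to a function of two averaged quantities, an alignment term and a mean negative similarity;
\item lower-bound it, using convexity, by a scalar function of these averages;
\item bound the averages over the sphere;
\item read off NC1--NC3 from the equality cases.
\end{enumerate}
Since NC1--NC3 are themselves attained configurations, this also shows the bound is tight.

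For $\Ls_{\mathrm{NF}}$, write the per-sample loss as $\log\bigl(1+\sum_{c\neq y_i} e^{(S_{ic}-S_{i,y_i})/\tau}\bigr)$. Jensen on the convex map $t\mapsto e^{t}$ over the $K-1$ negatives, followed by Jensen on the convex, increasing map $t\mapsto\log(1+(K-1)e^{t})$ over $i$, gives $\Ls_{\mathrm{NF}}\ge \log\bigl(1+(K-1)e^{\Delta/\tau}\bigr)$. Here $\Delta$ is the sample average of $\frac{1}{K-1}\sum_{c\neq y_i}S_{ic}-S_{i,y_i}$.

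With $\vs=\sum_c\hat\vw_c$ and balanced classes, a direct computation gives
\[
\Delta=-\tfrac{1}{K-1}\sum_c \hat\vmu_c^\top(\hat\vw_c-\vs/K).
\]
Applying Cauchy--Schwarz with $\|\hat\vmu_c\|\le 1$, then Cauchy--Schwarz over $c$, and using $\sum_c\|\hat\vw_c-\vs/K\|^2=K-\|\vs\|^2/K\le K$, yields $\Delta\ge -K/(K-1)$. Equality forces the following:
\begin{itemize}
\item $\|\hat\vmu_c\|=1$, which is NC1;
\item $\vs=0$ and equal norms $\|\hat\vw_c-\vs/K\|$, which together with equal inner products give the simplex ETF (NC2);
\item $\hat\vmu_c\parallel\hat\vw_c-\vs/K=\hat\vw_c$, which is NC3.
\end{itemize}
The equal inner products $S_{ic}$, $c\neq y_i$, come from the Jensen equality. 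The assumption $d\ge K$ ensures that the bound is attained, and uniqueness holds up to rotation and relabeling.

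For $\Ls_{\mathrm{NTCE}}$ and $\Ls_{\mathrm{NONL}}$ I swap roles: group the sum by anchor class $c$, so that $\hat\vw_c$ is the anchor and the $u_j$ are the contrasted points. For NONL the denominator contains only the $M-n$ negatives $j\notin I_c$. Jensen over these negatives gives a lower bound in terms of their average similarity $\frac{1}{M-n}\bigl(n\sum_{c'}\hat\vmu_{c'}-n\hat\vmu_c\bigr)^\top\hat\vw_c$. A second Jensen over $i\in I_c$ and over $c$ produces the same kind of average as $\Delta$, now of the form $\sum_c \hat\vw_c^\top(\hat\vmu_c-\bar\vmu)$ with $\bar\vmu=\frac1K\sum_{c'}\hat\vmu_{c'}$. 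Cauchy--Schwarz, together with $\sum_c\|\hat\vmu_c-\bar\vmu\|^2\le\sum_c\|\hat\vmu_c\|^2\le K$, again gives a bound attained only when $\|\hat\vmu_c\|=1$, $\bar\vmu=0$, the $\hat\vmu_c$ form an ETF, and $\hat\vw_c=\hat\vmu_c$.

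I expect the main obstacle to be NTCE. Its denominator also contains the $n$ same-class terms, including $j=i$, and these pull against alignment. I will split the denominator into a positive block $\sum_{j\in I_c}e^{(S_{jc}-S_{ic})/\tau}$ and a negative block, and apply Jensen separately to each; the positive block averages to at least $n$ because its exponents have zero mean over $i\in I_c$. The resulting bound is $\log\bigl(n+(M-n)e^{\Delta'/\tau}\bigr)$, with equality only if the $S_{jc}$ are constant on $I_c$. The subtle point is that this equality condition is compatible with the optimum of the negative block. It holds at collapse, so the two bounds are simultaneously tight there, and minimality then forces both the NONL-type conditions and the constancy of same-class similarities, giving NC1--NC3.
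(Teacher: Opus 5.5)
Your proposal is correct, and it takes a genuinely different route from the paper.

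\textbf{NormFace.} The paper proves nothing here. It identifies $\Ls_{\mathrm{NF}}$ with the sphere-constrained UFM objective of \citet{yaras2022neural} and cites their Theorem~3.1. You re-derive the result directly with a Lu--Steinerberger-style chain: Jensen over the $K-1$ negatives, Jensen over samples, then Cauchy--Schwarz on $\Delta$.

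\textbf{NTCE and NONL, paper's route.} The paper applies Jensen blockwise, separately for each pair $(c,c')$. This keeps a $K$-term (or $(K-1)$-term) log-sum-exp class-level objective in the $\hat\vmu_c$ and $\hat\vw_c$. It then invokes the La/Lc minimizer characterization of \citet{koromilas2024bridging} as a black box to get alignment and the simplex ETF. Finally, it recovers NC1 in a separate lifting step, using the tightness conditions $\hat\vw_c^\top(\vu_j-\hat\vmu_{c'})=0$ together with the fact that any $K-1$ ETF vectors span $S$.

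\textbf{NTCE and NONL, your route.} You merge all negative classes into one average. This collapses the bound to a single linear quantity $\sum_c \hat\vw_c^\top(\hat\vmu_c-\bar\vmu)$, which Cauchy--Schwarz controls. The coarser bound still suffices because it is tight at NC, where all negative similarities equal $-1/(K-1)$.

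\textbf{What your route buys.} It is self-contained and elementary. NC1 falls out of $\|\hat\vmu_c\|=1$, since an average of unit vectors has unit norm only if they coincide. NC2 follows from the Jensen equality plus $\bar\vmu=0$. It also works over the unit ball: the class means only satisfy $\|\hat\vmu_c\|\le 1$. The paper treats them as free unit vectors when it applies the La/Lc characterization to $\hat\vmu_c^\star$, and your bound covers that step the paper passes over.

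\textbf{What the paper's route buys.} It is modular, and it makes its conceptual point explicit: the class-level objective is literally a contrastive loss with anchor $\hat\vw_c$, positive $\hat\vmu_c$, and negatives $\hat\vmu_{c'}$. The same template then covers other La/Lc-type variants without redoing the computation.

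\textbf{One tip on NTCE.} Do the averaging over $i\in I_c$ by noting that the denominator $\sum_j e^{S_{jc}/\tau}$ depends only on $c$, so
\[
\tfrac1n\sum_{i\in I_c}\ell_i=-\hat\vw_c^\top\hat\vmu_c/\tau+\log\sum_j e^{S_{jc}/\tau}.
\]
Then apply Jensen to the two blocks of this $i$-independent sum. Averaging the positive block inside $\log(P_i+N_i)$ over $i$ would go the wrong way by concavity of $\log$.
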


We now analyze the three losses in turn.

\subsection*{NormFace}

\citet{yaras2022neural} study the constrained UFM problem
\begin{equation}
  \min_{H,W}
  \frac{1}{M} \sum_{i=1}^M
  \mathrm{CE}\big(\tau' W^\top \vh_i, y_i\big)
  \quad
  \text{s.t. } \|\vh_i\| = 1,\ \|\vw_c\| = 1,
  \label{eq:yaras-objective-appendix}
\end{equation}
with $\tau'>0$, where $\mathrm{CE}$ is the standard cross-entropy.

\begin{lemma}[NormFace $\equiv$ Yaras et al.]
\label{lem:normface-yaras}
Set $\tau' = 1/\tau$, and identify $\vh_i = \vu_i$ and $\vw_c = \hat\vw_c$. Then $\Ls_{\mathrm{NF}}$ coincides with \eqref{eq:yaras-objective-appendix}, and $\argmin \Ls_{\mathrm{NF}}$ equals the set of global minimizers of \eqref{eq:yaras-objective-appendix} over unit-norm features and weights.
\end{lemma}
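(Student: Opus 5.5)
The plan is to verify the identification termwise and then argue that it induces a bijection between feasible sets that preserves objective values; equality of argmin sets then follows immediately.

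\emph{Objective values agree.} Under $\tau' = 1/\tau$, $\vh_i=\vu_i$ and $\vw_c=\hat\vw_c$, the $c$-th logit of \eqref{eq:yaras-objective-appendix} is $(\tau' W^\top \vh_i)_c = \hat\vw_c^\top \vu_i/\tau = S_{ic}/\tau$. Writing out $\mathrm{CE}(\vz,y) = -\log\big(e^{z_y}/\sum_c e^{z_c}\big)$ shows that the $i$-th summand of \eqref{eq:yaras-objective-appendix} is exactly the $i$-th summand of \eqref{eq:app-loss-nf}. Averaging over $i$ with the same $1/M$ factor gives equality of the objectives on every pair of corresponding configurations.

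\emph{Feasible sets correspond.} The variables of $\Ls_{\mathrm{NF}}$ are unconstrained $\vz_i\neq 0$ and $\vw_c\neq 0$. However, $\Ls_{\mathrm{NF}}$ depends on them only through the normalized vectors $\vu_i$ and $\hat\vw_c$, so it factors through the normalization map $(\vz,\vw)\mapsto(\vu,\hat\vw)$. This map is surjective onto the product of unit spheres $(\mathbb{S}^{d-1})^{M+K}$, which is precisely the feasible set of \eqref{eq:yaras-objective-appendix}. As a consequence, the infimum of $\Ls_{\mathrm{NF}}$ equals the minimum of the constrained problem; that minimum is attained because the sphere product is compact and the objective is continuous.

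\emph{Argmin sets coincide.} A configuration is a global minimizer of $\Ls_{\mathrm{NF}}$ if and only if its normalization is a global minimizer of \eqref{eq:yaras-objective-appendix}. Identifying configurations through their normalized versions, as the statement does, the two argmin sets are therefore equal.

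The only real obstacle is this bookkeeping about radial freedom: the unnormalized minimizers form a cone over the constrained ones. I handle it by stating the equality at the level of normalized variables, which is the level at which NC1--NC3 are defined. Once that is done, the NC characterization of \citet{yaras2022neural} transfers verbatim to $\Ls_{\mathrm{NF}}$.
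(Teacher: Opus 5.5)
Your proposal is correct and takes the same route as the paper. The paper's proof is only your first step: it checks that $(\tau' W^\top \vh_i)_c = S_{ic}/\tau$, so the summands coincide, and then averages over $i$. Your further argument is bookkeeping the paper leaves implicit: $\Ls_{\mathrm{NF}}$ factors through the normalization map, which is surjective onto $(\mathbb{S}^{d-1})^{M+K}$, and this is what actually justifies the argmin-set equality up to radial rescaling.
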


\begin{proof}
With $\vh_i=\vu_i$, $\vw_c=\hat\vw_c$ and $\tau'=1/\tau$, we have
$\big(\tau' W^\top \vh_i\big)_c = S_{ic}/\tau$, so the summand in
\eqref{eq:yaras-objective-appendix} is exactly
$-\log\frac{\exp(S_{i,y_i}/\tau)}{\sum_{c}\exp(S_{ic}/\tau)}$, which is
the $i$th summand in $\Ls_{\mathrm{NF}}$. Averaging over $i$ gives the
claim.
\end{proof}

Theorem~3.1 of \citet{yaras2022neural} states that, under balanced
labels and $d\ge K$, every global minimizer of
\eqref{eq:yaras-objective-appendix} satisfies NC1--NC3. Together with
Lemma~\ref{lem:normface-yaras}, this implies that every global minimizer
of $\Ls_{\mathrm{NF}}$ satisfies NC1--NC3.

\subsection*{NTCE}

We now show that every global minimizer of $\Ls_{\mathrm{NTCE}}$
satisfies NC1--NC3. The proof follows the same three-step pattern we
later use for NONL: we first reduce to a class-level objective depending
only on class means and weights, then view this function as a contrastive loss of La/Lc type from \citep{koromilas2024bridging} and apply its respective minimizer
characterization of  at the class level, and finally lift the resulting
structure back to the sample level.

Recall that
\[
  \Ls_{\mathrm{NTCE}}
  = -\frac{1}{M} \sum_{i=1}^M
  \log
  \frac{\exp(\vu_i^\top\hat\vw_{y_i}/\tau)}
       {\sum_{j=1}^M \exp(\vu_j^\top\hat\vw_{y_i}/\tau)}
  \;=\;
  \frac{1}{M} \sum_{i=1}^M \ell_i^{\mathrm{NTCE}},
\]
with per-sample loss
\[
  \ell_i^{\mathrm{NTCE}}
  :=
  -\log
  \frac{\exp(\vu_i^\top\hat\vw_{y_i}/\tau)}
       {\sum_{j=1}^M \exp(\vu_j^\top\hat\vw_{y_i}/\tau)}.
\]
We again work in the balanced setting $|I_c|=n=M/K$ and
$\|\vu_i\|=\|\hat\vw_c\|=1$.

\paragraph{Step 1: reduction to class means.}

\begin{lemma}[NTCE reduction via class means]
\label{lem:ntce-mean}
Assume balanced labels, $|I_c|=n=M/K$ for all $c$. For any configuration
$\{\vu_i\},\{\hat\vw_c\}$ with $\|\vu_i\|=\|\hat\vw_c\|=1$ define the
normalized-feature class means
\[
  \hat\vmu_c := \frac{1}{n}\sum_{j\in I_c}\vu_j.
\]
Then
\[
  \Ls_{\mathrm{NTCE}}(\{\vu_i\},\{\hat\vw_c\})
  \;\ge\;
  L_{\mathrm{NTCE}}^{\mathrm{cls}}(\{\hat\vmu_c\},\{\hat\vw_c\}),
\]
where the class-level loss is
\begin{equation}
  L_{\mathrm{NTCE}}^{\mathrm{cls}}
  :=
  -\frac{1}{K\tau}\sum_{c=1}^K \hat\vw_c^\top\hat\vmu_c
  + \frac{1}{K}\sum_{c=1}^K
    \log\Bigg(
      \sum_{c'=1}^K n\,
      \exp\big(\hat\vw_c^\top\hat\vmu_{c'}/\tau\big)
    \Bigg).
  \label{eq:ntce-class-loss}
\end{equation}
Moreover, the inequality is tight if and only if, for every ordered
pair $(c,c')$, the logits $\hat\vw_c^\top \vu_j$ are constant over
$j\in I_{c'}$, i.e.
\[
  \hat\vw_c^\top \vu_j
  = \hat\vw_c^\top \hat\vmu_{c'}
  \quad\text{for all } j\in I_{c'}.
\]
\end{lemma}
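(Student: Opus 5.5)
The plan is to split each per-sample loss into its linear numerator term and its log-sum-exp denominator, and to bound each piece separately. The numerator term averages exactly to a class-mean quantity. The denominator is handled by Jensen's inequality applied to the exponential.

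\textbf{Numerator.} Write
\[
\ell_i^{\mathrm{NTCE}} = -\frac{1}{\tau}\hat\vw_{y_i}^\top\vu_i + \log\sum_{j=1}^M \exp(\hat\vw_{y_i}^\top\vu_j/\tau).
\]
Averaging the first term over $i$ and grouping by class,
\[
\frac{1}{M}\sum_i \hat\vw_{y_i}^\top\vu_i = \frac{1}{K}\sum_c \hat\vw_c^\top\Big(\frac{1}{n}\sum_{i\in I_c}\vu_i\Big) = \frac{1}{K}\sum_c \hat\vw_c^\top\hat\vmu_c,
\]
using $M=nK$. This yields the first term of \eqref{eq:ntce-class-loss} with equality, so no slack arises here.

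\textbf{Denominator.} The log-term depends on $i$ only through $y_i=c$. Hence the average over $i$ equals $\frac{1}{K}\sum_c \log\sum_{c'}\sum_{j\in I_{c'}}\exp(\hat\vw_c^\top\vu_j/\tau)$. For each ordered pair $(c,c')$, convexity of $\exp$ gives
\[
\sum_{j\in I_{c'}} e^{\hat\vw_c^\top\vu_j/\tau} = n\cdot\frac{1}{n}\sum_{j\in I_{c'}} e^{\hat\vw_c^\top\vu_j/\tau} \ge n\, e^{\hat\vw_c^\top\hat\vmu_{c'}/\tau}.
\]
The exponent on the right is the average of the exponents, by linearity: $\frac1n\sum_{j\in I_{c'}}\hat\vw_c^\top\vu_j=\hat\vw_c^\top\hat\vmu_{c'}$. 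Summing over $c'$ and using that $\log$ is increasing, we get the second term of \eqref{eq:ntce-class-loss} as a lower bound. Combining the two pieces gives the inequality.

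\textbf{Tightness.} This is the only delicate point. Since $\log$ is strictly increasing, equality in the sum of logs forces equality in each $\log$-argument. Each such argument is a sum over $c'$ of terms that individually satisfy the Jensen inequality, so equality forces equality in every Jensen step, for every $c$ and $c'$. Strict convexity of $\exp$ means Jensen is tight exactly when all arguments $\hat\vw_c^\top\vu_j$, $j\in I_{c'}$, coincide. They then equal their mean $\hat\vw_c^\top\hat\vmu_{c'}$. Conversely, if the logits are constant on each block, every Jensen step is an equality, and the numerator was exact anyway.

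One caveat: equality is required for every class anchor $c$, including anchors with $c\neq c'$, because all anchors are summed. This gives the full ordered-pair condition stated in the lemma. I expect no real obstacle. The main care is to state that $\hat\vmu_c$ need not be unit norm here, and the argument never uses that it is.
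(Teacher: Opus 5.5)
Your proof is correct and follows essentially the same route as the paper. Like the paper, you reduce the alignment term exactly to $\frac{1}{K}\sum_c \hat\vw_c^\top\hat\vmu_c$, use that the log-denominator depends only on the anchor class $c$, and apply Jensen blockwise over each $I_{c'}$ with tightness read off block by block; your version applies Jensen to the scalar $\exp$ acting on the logits $\hat\vw_c^\top\vu_j$ rather than to $\vx\mapsto\exp(\hat\vw_c^\top\vx/\tau)$, which is not strictly convex in $\vx$, so your equality condition follows cleanly from strict convexity, and you are right that nothing here requires $\|\hat\vmu_c\|=1$.
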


\begin{proof}
Fix a configuration $\{\vu_i\},\{\hat\vw_c\}$ and define the class means
$\hat\vmu_c$ as above. Using the balanced labels, write $\Ls_{\mathrm{NTCE}}$
as an average over classes. For $i\in I_c$ we have
\[
  \ell_i^{\mathrm{NTCE}}
  =
  -\frac{1}{\tau}\hat\vw_c^\top\vu_i
  + \log\Big(\sum_{j=1}^M \exp(\hat\vw_c^\top\vu_j/\tau)\Big),
\]
so
\begin{align*}
  \Ls_{\mathrm{NTCE}}
  &= \frac{1}{M}\sum_{c=1}^K \sum_{i\in I_c} \ell_i^{\mathrm{NTCE}} \\
  &= -\frac{1}{M\tau}\sum_{c=1}^K \sum_{i\in I_c}\hat\vw_c^\top\vu_i
     + \frac{1}{M}\sum_{c=1}^K \sum_{i\in I_c}
       \log\Big(\sum_{j=1}^M \exp(\hat\vw_c^\top\vu_j/\tau)\Big).
\end{align*}
The denominator term inside the logarithm depends only on the anchor
class $c$, not on $i$, so $\sum_{i\in I_c}$ introduces a factor of
$|I_c|=n$. Using $M=nK$ and the definition of $\hat\vmu_c$ we obtain
\begin{align*}
  \Ls_{\mathrm{NTCE}}
  &= -\frac{1}{K\tau}\sum_{c=1}^K \hat\vw_c^\top
     \Big(\frac{1}{n}\sum_{i\in I_c}\vu_i\Big)
     + \frac{1}{K}\sum_{c=1}^K
       \log\Big(\sum_{j=1}^M \exp(\hat\vw_c^\top\vu_j/\tau)\Big) \\
  &= -\frac{1}{K\tau}\sum_{c=1}^K \hat\vw_c^\top\hat\vmu_c
     + \frac{1}{K}\sum_{c=1}^K
       \log\Big(\sum_{j=1}^M \exp(\hat\vw_c^\top\vu_j/\tau)\Big).
\end{align*}

For each fixed anchor class $c$, split the denominator over classes:
\[
  \sum_{j=1}^M \exp(\hat\vw_c^\top\vu_j/\tau)
  = \sum_{c'=1}^K \sum_{j\in I_{c'}} \exp(\hat\vw_c^\top\vu_j/\tau).
\]
For fixed $(c,c')$, the function
$f_c(\vx) := \exp(\hat\vw_c^\top\vx/\tau)$ is convex in $\vx$, so by
Jensen's inequality over $j\in I_{c'}$,
\begin{align*}
  \frac{1}{n}\sum_{j\in I_{c'}} \exp(\hat\vw_c^\top\vu_j/\tau)
  &= \frac{1}{n}\sum_{j\in I_{c'}} f_c(\vu_j)
   \;\ge\;
   f_c\!\Big(\frac{1}{n}\sum_{j\in I_{c'}}\vu_j\Big)
   = \exp(\hat\vw_c^\top\hat\vmu_{c'}/\tau).
\end{align*}
Multiplying by $n$ and summing over $c'$ yields
\[
  \sum_{j=1}^M \exp(\hat\vw_c^\top\vu_j/\tau)
  \;\ge\;
  \sum_{c'=1}^K n\,\exp(\hat\vw_c^\top\hat\vmu_{c'}/\tau).
\]
Taking logs and averaging over $c$ gives
\[
  \Ls_{\mathrm{NTCE}}
  \;\ge\;
  -\frac{1}{K\tau}\sum_{c=1}^K \hat\vw_c^\top\hat\vmu_c
  + \frac{1}{K}\sum_{c=1}^K
    \log\Bigg(
      \sum_{c'=1}^K n\,\exp(\hat\vw_c^\top\hat\vmu_{c'}/\tau)
    \Bigg)
  = L_{\mathrm{NTCE}}^{\mathrm{cls}}.
\]

Jensen's inequality is tight for a given pair $(c,c')$ if and only if
the arguments of $f_c$ are constant over $j\in I_{c'}$, i.e.\ if and
only if $\hat\vw_c^\top\vu_j$ is constant in $j$ on $I_{c'}$. In that
case this constant must equal $\hat\vw_c^\top\hat\vmu_{c'}$. Tightness
for all $c,c'$ gives the stated condition.
\end{proof}

Thus, for any configuration of unit features and weights, the NTCE loss
is lower-bounded by the class-level objective
$L_{\mathrm{NTCE}}^{\mathrm{cls}}$ depending only on the $K$ class
means $\hat\vmu_c$ and the $K$ classifier weights $\hat\vw_c$, and
Lemma~\ref{lem:ntce-mean} precisely characterizes when this lower bound
is tight (blockwise constant logits).

It is convenient to separate out the constant $\log n$ factor, and to
view the class means and weights abstractly as unit vectors. Define the
\emph{normalized} class-level NTCE loss
\[
  \tilde L_{\mathrm{NTCE}}^{\mathrm{cls}}(\{\hat\vmu_c\},\{\hat\vw_c\})
  :=
  -\frac{1}{K\tau}\sum_{c=1}^K \hat\vw_c^\top\hat\vmu_c
  + \frac{1}{K}\sum_{c=1}^K
    \log\Bigg(
      \sum_{c'=1}^K
      \exp\big(\hat\vw_c^\top\hat\vmu_{c'}/\tau\big)
    \Bigg),
\]
so that
\[
  L_{\mathrm{NTCE}}^{\mathrm{cls}}
  = \log n + \tilde L_{\mathrm{NTCE}}^{\mathrm{cls}}.
\]

In what follows, we treat the pairs $(\hat\vmu_c,\hat\vw_c)$ as free
variables on the unit sphere and, to lighten notation, write
$\vmu_c := \hat\vmu_c$ and $\vw_c := \hat\vw_c$.

\paragraph{Step 2: analysis of the class-level problem.}

For each class $c$ we view the $c$th summand in
$\tilde L_{\mathrm{NTCE}}^{\mathrm{cls}}$ as a standard contrastive loss of La/Lc type \citep{koromilas2024bridging},
with
\[
  q_c = \vw_c \quad\text{(anchor)},\qquad
  k_c^+ = \vmu_c \quad\text{(positive)},\qquad
  \{k_c^- = \vmu_{c'} : c'\neq c\} \quad\text{(negatives)}.
\]
The per-class alignment and contrastive terms are
\[
  L_{\mathrm{a}}(q_c,k_c^+)
  = -\frac{1}{\tau}q_c^\top k_c^+,
  \qquad
  L_{\mathrm{c}}(q_c,\{k_c^-\})
  = \log\Big(\sum_{c'=1}^K\exp(q_c^\top k_{c'}^-/\tau)\Big).
\]
The La/Lc framework requires $L_{\mathrm{a}}$ to be strictly decreasing
in similarity and $L_{\mathrm{c}}$ to be convex and strictly increasing
in similarity. These conditions hold here:
\begin{itemize}[leftmargin=*]
\item $q_c^\top k_c^+$ enters $L_{\mathrm{a}}$ linearly with a negative
  coefficient, so $L_{\mathrm{a}}$ decreases as $q_c^\top k_c^+$
  increases.
\item $L_{\mathrm{c}}$ is a log-sum-exp of the similarities
  $q_c^\top k_c^-/\tau$, hence convex and strictly increasing in each
  similarity argument.
\end{itemize}

Therefore we may invoke the minimizer characterization for La/Lc
losses. By Theorem~4.1 and Appendix~B.1 of
\citet{koromilas2024bridging}, provided $d\ge K$, the global minimizers
of $\tilde L_{\mathrm{NTCE}}^{\mathrm{cls}}$ over unit vectors satisfy:
\begin{itemize}[leftmargin=*]
\item \textbf{Perfect alignment:}
  $\vmu_c = \vw_c$ for all $c$.
\item \textbf{Simplex ETF structure:}
  the directions $\{\vmu_c\}_{c=1}^K$ form a centered regular simplex
  equiangular tight frame in a $(K-1)$--dimensional subspace:
  \[
    \|\vmu_c\|=1,
    \qquad
    \vmu_c^\top\vmu_{c'} = -\frac{1}{K-1}
    \quad\forall c\neq c'.
  \]
\end{itemize}
In particular, there exists a simplex ETF
$\{\vmu_c\}_{c=1}^K\subset\R^d$ such that $\vmu_c=\vw_c$ is a global
minimizer of $\tilde L_{\mathrm{NTCE}}^{\mathrm{cls}}$, unique up to a
global rotation and permutation of the class indices.

\paragraph{Step 3: lifting back to the sample level.}

We now relate these class-level minimizers back to the original
sample-level NTCE objective and derive the NC structure of its global
minimizers.

\medskip\noindent
\emph{Existence of Neural Collapse minimizers.}
Let $\{\vmu_c\}_{c=1}^K$ be a simplex ETF and set
\[
  \hat\vw_c := \vmu_c,
  \qquad
  \vu_i := \vmu_{y_i}
  \quad\text{for all }i.
\]
This configuration satisfies NC1--NC3 by construction: within each
class $c$, all normalized features collapse to $\vmu_c$ (NC1), the
vectors $\{\vmu_c\}$ form a centered simplex ETF (NC2), and
$\hat\vw_c=\vmu_c$ (NC3). In particular, the feature class means are
$\hat\vmu_c = \vmu_c$.

Moreover, for this configuration the Jensen inequalities in
Lemma~\ref{lem:ntce-mean} are tight: for any anchor class $c$ and any
class $c'$, we have
$\hat\vw_c^\top\vu_j = \vmu_c^\top\vmu_{c'}$ for all $j\in I_{c'}$, so
the logits are constant within each class. Hence
\[
  \Ls_{\mathrm{NTCE}}
  = L_{\mathrm{NTCE}}^{\mathrm{cls}}(\{\hat\vmu_c\},\{\hat\vw_c\})
  = \log n + \tilde L_{\mathrm{NTCE}}^{\mathrm{cls}}(\{\vmu_c\},\{\vmu_c\}).
\]
Since $\{\vmu_c\},\{\vmu_c\}$ is a global minimizer of
$\tilde L_{\mathrm{NTCE}}^{\mathrm{cls}}$, this shows that
\[
  \inf_{\{\vu_i\},\{\hat\vw_c\}} \Ls_{\mathrm{NTCE}}
  \;\le\;
  \log n
  +
  \inf_{\{\vmu_c\},\{\vw_c\}} \tilde L_{\mathrm{NTCE}}^{\mathrm{cls}}.
\]

\medskip\noindent
\emph{Structure of arbitrary global minimizers.}
Conversely, let $(\{\vu_i^\star\},\{\hat\vw_c^\star\})$ be any global
minimizer of $\Ls_{\mathrm{NTCE}}$, and let
\[
  \hat\vmu_c^\star := \frac{1}{n}\sum_{j\in I_c}\vu_j^\star
\]
be the corresponding class means. Lemma~\ref{lem:ntce-mean} gives
\[
  \Ls_{\mathrm{NTCE}}(\{\vu_i^\star\},\{\hat\vw_c^\star\})
  \;\ge\;
  L_{\mathrm{NTCE}}^{\mathrm{cls}}(\{\hat\vmu_c^\star\},\{\hat\vw_c^\star\})
  = \log n
    + \tilde L_{\mathrm{NTCE}}^{\mathrm{cls}}(\{\hat\vmu_c^\star\},\{\hat\vw_c^\star\}).
\]
On the other hand, from the ETF construction above we know that
\[
  \inf_{\{\vu_i\},\{\hat\vw_c\}} \Ls_{\mathrm{NTCE}}
  \;\le\;
  \log n
    + \inf_{\{\vmu_c\},\{\vw_c\}} \tilde L_{\mathrm{NTCE}}^{\mathrm{cls}}.
\]
Since $(\{\vu_i^\star\},\{\hat\vw_c^\star\})$ achieves the global
infimum, the two displays must be equalities. Therefore:
\begin{itemize}[leftmargin=*]
\item $\tilde L_{\mathrm{NTCE}}^{\mathrm{cls}}(\{\hat\vmu_c^\star\},\{\hat\vw_c^\star\})$
  attains the global minimum of
  $\tilde L_{\mathrm{NTCE}}^{\mathrm{cls}}$, so by the La/Lc
  minimizer characterization we must have, up to a global rotation and
  permutation of class labels,
  \[
    \hat\vmu_c^\star = \hat\vw_c^\star \quad\text{for all }c,
    \qquad
    \{\hat\vmu_c^\star\}\text{ form a centered simplex ETF.}
  \]
\item Lemma~\ref{lem:ntce-mean} must be tight at the minimizer, so the
  Jensen equalities hold for all $(c,c')$:
  for every anchor class $c$ and every class $c'$, the logits
  $\hat\vw_c^{\star\top} \vu_j^\star$ are constant over $j\in I_{c'}$,
  equal to $\hat\vw_c^{\star\top}\hat\vmu_{c'}^\star$.
\end{itemize}

Let $S := \mathrm{span}\{\hat\vw_1^\star,\dots,\hat\vw_K^\star\}$, which
is the $(K-1)$--dimensional simplex-ETF subspace. Fix a class $c'$ and
$j\in I_{c'}$. For every $c\neq c'$, tightness of Jensen gives
\[
  \hat\vw_c^{\star\top}\big(\vu_j^\star - \hat\vmu_{c'}^\star\big) = 0.
\]
Since $\{\hat\vw_c^\star\}_{c=1}^K$ form a centered simplex ETF in the
$(K-1)$--dimensional subspace $S$ and satisfy
$\sum_{c=1}^K \hat\vw_c^\star = 0$, any $K-1$ of them are linearly
independent and thus span $S$. In particular, the set
$\{\hat\vw_c^\star : c\neq c'\}$ spans $S$, so
$\vu_j^\star - \hat\vmu_{c'}^\star$ is orthogonal to $S$, and hence the
orthogonal projection of $\vu_j^\star$ onto $S$ equals
$\hat\vmu_{c'}^\star$.

But both $\vu_j^\star$ and
$\hat\vmu_{c'}^\star = \hat\vw_{c'}^\star$ are unit vectors, and
$\hat\vmu_{c'}^\star\in S$. The only way for a unit vector to have a
unit-norm projection onto $S$ is to lie in $S$ itself and coincide with
its projection, so we must have
\[
  \vu_j^\star = \hat\vmu_{c'}^\star
  \quad\text{for all }j\in I_{c'}.
\]

Thus within each class all features collapse to a single unit direction
(NC1), these $K$ directions form a centered simplex ETF (NC2), and the
classifier weights align with the class means (NC3). Therefore every
global minimizer of $\Ls_{\mathrm{NTCE}}$ exhibits Neural Collapse, up to
a global rotation and permutation of the class labels.

\subsection*{NONL}

We finally treat the NONL objective \eqref{eq:app-loss-nonl}. The proof
proceeds by first bounding the sample-level loss by a class-level
objective depending only on class means and weights, then applying the
La/Lc minimizer characterization at the class level, and finally lifting
this structure back to the sample level to obtain NC1--NC3.

Recall that
\[
  \Ls_{\mathrm{NONL}}
  = -\frac{1}{M} \sum_{i=1}^M
  \log
  \frac{\exp(S_{i,y_i}/\tau)}
       {\sum_{j:\,y_j\neq y_i} \exp(S_{j,y_i}/\tau)}
  \;=\;
  \frac{1}{M} \sum_{i=1}^M \ell_i^{\mathrm{NONL}},
\]
with per-sample loss
\[
  \ell_i^{\mathrm{NONL}}
  :=
  -\log
  \frac{\exp(\vu_i^\top\hat\vw_{y_i}/\tau)}
       {\sum_{j:\,y_j\neq y_i} \exp(\vu_j^\top\hat\vw_{y_i}/\tau)}.
\]
We again work in the balanced setting $|I_c|=n=M/K$ and
$\|\vu_i\|=\|\hat\vw_c\|=1$.

\paragraph{Step 1: reduction to class means.}

We first show that the sample-level NONL loss admits a lower bound that
depends only on the $K$ normalized-feature class means and the $K$
classifier weights.

\begin{lemma}[NONL reduction via class means]
\label{lem:nonl-mean}
Assume balanced labels, $|I_c|=n=M/K$ for all $c$. For any configuration
$\{\vu_i\},\{\hat\vw_c\}$ with $\|\vu_i\|=\|\hat\vw_c\|=1$ define the
normalized-feature class means
\[
  \hat\vmu_c := \frac{1}{n}\sum_{j\in I_c}\vu_j.
\]
Then
\[
  \Ls_{\mathrm{NONL}}(\{\vu_i\},\{\hat\vw_c\})
  \;\ge\;
  L_{\mathrm{NONL}}^{\mathrm{cls}}(\{\hat\vmu_c\},\{\hat\vw_c\}),
\]
where the class-level loss is
\begin{equation}
  L_{\mathrm{NONL}}^{\mathrm{cls}}
  :=
  -\frac{1}{K\tau}\sum_{c=1}^K \hat\vw_c^\top\hat\vmu_c
  + \frac{1}{K}\sum_{c=1}^K
    \log\Bigg(
      \sum_{c'\neq c} n\,
      \exp\big(\hat\vw_c^\top\hat\vmu_{c'}/\tau\big)
    \Bigg).
  \label{eq:nonl-class-loss}
\end{equation}
Moreover, the inequality is tight if and only if, for every ordered pair
$(c,c')$ with $c'\neq c$, the ``negative'' logits $\hat\vw_c^\top \vu_j$
are constant over $j\in I_{c'}$, i.e.
\[
  \hat\vw_c^\top \vu_j
  = \hat\vw_c^\top \hat\vmu_{c'}
  \quad\text{for all } j\in I_{c'},\; c'\neq c.
\]
\end{lemma}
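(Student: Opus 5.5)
The argument mirrors the proof of Lemma~\ref{lem:ntce-mean}. The only change is that the denominator of each per-sample NONL term ranges over the negatives $j$ with $y_j\neq y_i$.

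First we regroup the sum by anchor class. For $i\in I_c$,
\[
\ell_i^{\mathrm{NONL}} = -\frac{1}{\tau}\hat\vw_c^\top\vu_i + \log\Big(\sum_{c'\neq c}\sum_{j\in I_{c'}}\exp(\hat\vw_c^\top\vu_j/\tau)\Big).
\]
The logarithmic term depends on $i$ only through the class $c=y_i$. Summing over $i\in I_c$ therefore multiplies it by $n$. Dividing by $M=nK$, the alignment part becomes $-\frac{1}{K\tau}\sum_c\hat\vw_c^\top\hat\vmu_c$, which follows from linearity and the definition of $\hat\vmu_c$. The log part becomes $\frac1K\sum_c\log\big(\sum_{c'\neq c}\sum_{j\in I_{c'}}\exp(\hat\vw_c^\top\vu_j/\tau)\big)$. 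This is an exact identity, not yet a bound.

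Next, for each ordered pair $(c,c')$ with $c'\neq c$, we apply Jensen's inequality to the convex function $f_c(\vx)=\exp(\hat\vw_c^\top\vx/\tau)$ over $j\in I_{c'}$:
\[
\sum_{j\in I_{c'}}\exp(\hat\vw_c^\top\vu_j/\tau)\ \ge\ n\exp(\hat\vw_c^\top\hat\vmu_{c'}/\tau).
\]
Summing over $c'\neq c$ gives a lower bound on the inner sum. Because $\log$ is increasing, the bound passes through the logarithm. Averaging over $c$ then yields $\Ls_{\mathrm{NONL}}\ge L^{\mathrm{cls}}_{\mathrm{NONL}}$. The alignment term needs no inequality, since it is already linear in the means. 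For the same reason, the positive class $c'=c$ never enters the bound.

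The only delicate point is the tightness characterization. The exponential is strictly convex as a function of the scalar $t=\hat\vw_c^\top\vx$. Hence equality in Jensen for the pair $(c,c')$ holds if and only if the scalars $\hat\vw_c^\top\vu_j$ are constant over $j\in I_{c'}$, and that constant is necessarily $\hat\vw_c^\top\hat\vmu_{c'}$.

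The overall bound is a sum, over $c$, of logarithms of sums of such Jensen gaps. Each gap is nonnegative, and $\log$ is strictly increasing. So overall equality holds if and only if every gap with $c'\neq c$ vanishes, which is exactly the stated condition. No constraint is placed on the same-class logits $\hat\vw_c^\top\vu_j$ with $j\in I_c$; this matches the statement, which restricts to $c'\neq c$.
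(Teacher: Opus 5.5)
Your proof is correct and follows essentially the same route as the paper: classwise Jensen applied to $\exp(\hat\vw_c^\top\vx/\tau)$ over each negative class $I_{c'}$ with $c'\neq c$, with tightness coming from strict convexity of the exponential in the scalar logit. The only difference is cosmetic: you regroup by anchor class first and then apply Jensen inside the logarithm, whereas the paper bounds each per-sample term $\ell_i^{\mathrm{NONL}}$ first and then averages, which gives the same result.
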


\begin{proof}
Fix a sample index $i$ with label $y_i=c$. Its NONL denominator is
\[
  D_i^{\mathrm{neg}}
  :=
  \sum_{j:\,y_j\neq c} \exp(\vu_j^\top\hat\vw_c/\tau)
  = \sum_{c'\neq c} \sum_{j\in I_{c'}} \exp(\vu_j^\top\hat\vw_c/\tau).
\]

For each anchor class $c$ and negative class $c'\neq c$, consider the
function
\[
  f_{c}(\vx)
  := \exp(\hat\vw_c^\top \vx/\tau),
\]
which is convex in $\vx$. Applying Jensen's inequality over the
negative-class samples $\{ \vu_j : j\in I_{c'}\}$ gives
\begin{align*}
  \frac{1}{n}\sum_{j\in I_{c'}} \exp(\hat\vw_c^\top\vu_j/\tau)
  &= \frac{1}{n}\sum_{j\in I_{c'}} f_c(\vu_j)
   \;\ge\;
   f_c\!\Big(\frac{1}{n}\sum_{j\in I_{c'}}\vu_j\Big)
   = \exp\big(\hat\vw_c^\top\hat\vmu_{c'}/\tau\big).
\end{align*}
Multiplying by $n$ and summing over $c'\neq c$ yields
\[
  D_i^{\mathrm{neg}}
  = \sum_{c'\neq c} \sum_{j\in I_{c'}} \exp(\vu_j^\top\hat\vw_c/\tau)
  \;\ge\;
  \sum_{c'\neq c} n\,\exp(\hat\vw_c^\top\hat\vmu_{c'}/\tau).
\]

By definition of $\ell_i^{\mathrm{NONL}}$,
\begin{align*}
  \ell_i^{\mathrm{NONL}}
  &= -\log
     \frac{\exp(\vu_i^\top\hat\vw_c/\tau)}{D_i^{\mathrm{neg}}} \\
  &= -\frac{1}{\tau}\hat\vw_c^\top\vu_i
     + \log D_i^{\mathrm{neg}} \\
  &\ge
     -\frac{1}{\tau}\hat\vw_c^\top\vu_i
     +
     \log\Bigg(
       \sum_{c'\neq c} n\,
       \exp(\hat\vw_c^\top\hat\vmu_{c'}/\tau)
     \Bigg)
   =: \tilde\ell_i.
\end{align*}
This inequality is tight if and only if all Jensen steps above are
equalities. For a fixed $(c,c')$ with $c'\neq c$, equality in Jensen
requires that the arguments of $f_c$ be constant over $j\in I_{c'}$,
i.e.\ $\hat\vw_c^\top\vu_j$ is constant on $I_{c'}$. Using the
definition of $\hat\vmu_{c'}$, this constant must then equal
$\hat\vw_c^\top\hat\vmu_{c'}$, giving the stated tightness condition.

Finally, average $\tilde\ell_i$ over all samples. Using the balanced
labels $|I_c|=n$ and the definition of $\hat\vmu_c$,
\begin{align*}
  \frac{1}{M}\sum_{i=1}^M \tilde\ell_i
  &= \frac{1}{M}\sum_{c=1}^K \sum_{i\in I_c}
     \Bigg[
       -\frac{1}{\tau}\hat\vw_c^\top\vu_i
       +
       \log\Bigg(
         \sum_{c'\neq c} n\,
         \exp(\hat\vw_c^\top\hat\vmu_{c'}/\tau)
       \Bigg)
     \Bigg] \\
  &= -\frac{1}{M\tau}\sum_{c=1}^K\sum_{i\in I_c}\hat\vw_c^\top\vu_i
     + \frac{1}{M}\sum_{c=1}^K |I_c|\,
       \log\Bigg(
         \sum_{c'\neq c} n\,
         \exp(\hat\vw_c^\top\hat\vmu_{c'}/\tau)
       \Bigg) \\
  &= -\frac{1}{K\tau}\sum_{c=1}^K \hat\vw_c^\top\hat\vmu_c
     + \frac{1}{K}\sum_{c=1}^K
       \log\Bigg(
         \sum_{c'\neq c} n\,
         \exp(\hat\vw_c^\top\hat\vmu_{c'}/\tau)
       \Bigg) \\
  &= L_{\mathrm{NONL}}^{\mathrm{cls}}(\{\hat\vmu_c\},\{\hat\vw_c\}).
\end{align*}
Since $\Ls_{\mathrm{NONL}}$ is the average of the $\ell_i^{\mathrm{NONL}}$
and each $\ell_i^{\mathrm{NONL}}\ge \tilde\ell_i$, we obtain
$\Ls_{\mathrm{NONL}} \ge L_{\mathrm{NONL}}^{\mathrm{cls}}$ with the
stated equality condition.
\end{proof}

As before, it is convenient to separate out the factor $n$ from the
logarithm and to treat the class means and weights abstractly as unit
vectors. Define the \emph{normalized} class-level NONL loss
\[
  \tilde L_{\mathrm{NONL}}^{\mathrm{cls}}(\{\hat\vmu_c\},\{\hat\vw_c\})
  :=
  -\frac{1}{K\tau}\sum_{c=1}^K \hat\vw_c^\top\hat\vmu_c
  + \frac{1}{K}\sum_{c=1}^K
    \log\Bigg(
      \sum_{c'\neq c}
      \exp\big(\hat\vw_c^\top\hat\vmu_{c'}/\tau\big)
    \Bigg),
\]
so that
\[
  L_{\mathrm{NONL}}^{\mathrm{cls}}
  = \log n + \tilde L_{\mathrm{NONL}}^{\mathrm{cls}}.
\]
In what follows we again treat $(\hat\vmu_c,\hat\vw_c)$ as free unit
vectors and write $\vmu_c:=\hat\vmu_c$ and $\vw_c:=\hat\vw_c$.

\paragraph{Step 2: analysis of the class-level problem.}

For each class $c$ we can view the $c$th summand in
$\tilde L_{\mathrm{NONL}}^{\mathrm{cls}}$ as a standard decoupled
alignment/uniformity loss of La/Lc type
\citep{koromilas2024bridging}, with:
\[
  q_c = \vw_c \quad\text{(anchor)},\qquad
  k_c^+ = \vmu_c \quad\text{(positive)},\qquad
  \{k_c^- = \vmu_{c'} : c'\neq c\} \quad\text{(negatives)}.
\]
The per-class alignment and contrastive terms are
\[
  L_{\mathrm{a}}(q_c,k_c^+)
  = -\frac{1}{\tau}q_c^\top k_c^+,
  \qquad
  L_{\mathrm{c}}(q_c,\{k_c^-\})
  = \log\Big(\sum_{c'\neq c}\exp(q_c^\top k_{c'}^-/\tau)\Big).
\]
As in the NTCE case, $L_{\mathrm{a}}$ is strictly decreasing in
similarity and $L_{\mathrm{c}}$ is convex and strictly increasing in the
similarities. Therefore we may again invoke the La/Lc minimizer
characterization. By Theorem~4.1 and Appendix~B.1 of
\citet{koromilas2024bridging}, provided $d\ge K$, the global minimizers
of $\tilde L_{\mathrm{NONL}}^{\mathrm{cls}}$ over unit vectors satisfy:
\begin{itemize}[leftmargin=*]
\item \textbf{Perfect alignment:}
  $\vmu_c = \vw_c$ for all $c$.
\item \textbf{Simplex ETF structure:}
  the directions $\{\vmu_c\}_{c=1}^K$ form a centered regular simplex
  equiangular tight frame in a $(K-1)$--dimensional subspace:
  \[
    \|\vmu_c\|=1,
    \qquad
    \vmu_c^\top\vmu_{c'} = -\frac{1}{K-1}
    \quad\forall c\neq c'.
  \]
\end{itemize}
In particular, there exists a simplex ETF
$\{\vmu_c\}_{c=1}^K\subset\R^d$ such that $\vmu_c=\vw_c$ is a global
minimizer of $\tilde L_{\mathrm{NONL}}^{\mathrm{cls}}$, unique up to a
global rotation and permutation of the class indices.

\paragraph{Step 3: lifting back to the sample level.}

We now relate these class-level minimizers back to the original
sample-level NONL objective and derive the NC structure of its global
minimizers.

\medskip\noindent
\emph{Existence of Neural Collapse minimizers.}
Let $\{\vmu_c\}_{c=1}^K$ be a simplex ETF and set
\[
  \hat\vw_c := \vmu_c,
  \qquad
  \vu_i := \vmu_{y_i}
  \quad\text{for all }i.
\]
This configuration satisfies NC1--NC3 by construction: within each
class $c$, all normalized features collapse to $\vmu_c$ (NC1), the
vectors $\{\vmu_c\}$ form a centered simplex ETF (NC2), and
$\hat\vw_c=\vmu_c$ (NC3). In particular, the feature class means are
$\hat\vmu_c = \vmu_c$.

Moreover, for this configuration the Jensen inequalities in
Lemma~\ref{lem:nonl-mean} are tight for all $(c,c')$: for any anchor
class $c$ and negative class $c'\neq c$ we have
$\hat\vw_c^\top\vu_j = \vmu_c^\top\vmu_{c'}$ for all $j\in I_{c'}$, so
the negative logits are constant within each negative class. Hence
\[
  \Ls_{\mathrm{NONL}}
  = L_{\mathrm{NONL}}^{\mathrm{cls}}(\{\hat\vmu_c\},\{\hat\vw_c\})
  = \log n + \tilde L_{\mathrm{NONL}}^{\mathrm{cls}}(\{\vmu_c\},\{\vmu_c\}).
\]
Since $\{\vmu_c\},\{\vmu_c\}$ is a global minimizer of
$\tilde L_{\mathrm{NONL}}^{\mathrm{cls}}$, this shows that
\[
  \inf_{\{\vu_i\},\{\hat\vw_c\}} \Ls_{\mathrm{NONL}}
  \;\le\;
  \log n
  +
  \inf_{\{\vmu_c\},\{\vw_c\}} \tilde L_{\mathrm{NONL}}^{\mathrm{cls}}.
\]

\medskip\noindent
\emph{Structure of arbitrary global minimizers.}
Conversely, let $(\{\vu_i^\star\},\{\hat\vw_c^\star\})$ be any global
minimizer of $\Ls_{\mathrm{NONL}}$, and let
\[
  \hat\vmu_c^\star := \frac{1}{n}\sum_{j\in I_c}\vu_j^\star
\]
be the corresponding class means. Lemma~\ref{lem:nonl-mean} gives
\[
  \Ls_{\mathrm{NONL}}(\{\vu_i^\star\},\{\hat\vw_c^\star\})
  \;\ge\;
  L_{\mathrm{NONL}}^{\mathrm{cls}}(\{\hat\vmu_c^\star\},\{\hat\vw_c^\star\})
  = \log n
    + \tilde L_{\mathrm{NONL}}^{\mathrm{cls}}(\{\hat\vmu_c^\star\},\{\hat\vw_c^\star\}).
\]
On the other hand, from the ETF construction above we know that
\[
  \inf_{\{\vu_i\},\{\hat\vw_c\}} \Ls_{\mathrm{NONL}}
  \;\le\;
  \log n
    + \inf_{\{\vmu_c\},\{\vw_c\}} \tilde L_{\mathrm{NONL}}^{\mathrm{cls}}.
\]
Since $(\{\vu_i^\star\},\{\hat\vw_c^\star\})$ achieves this infimum, the
two displays must be equalities. Therefore:
\begin{itemize}[leftmargin=*]
\item $\tilde L_{\mathrm{NONL}}^{\mathrm{cls}}(\{\hat\vmu_c^\star\},\{\hat\vw_c^\star\})$
  attains the global minimum of
  $\tilde L_{\mathrm{NONL}}^{\mathrm{cls}}$, so by the La/Lc
  minimizer characterization we must have, up to a global rotation and
  permutation of class labels,
  \[
    \hat\vmu_c^\star = \hat\vw_c^\star \quad\text{for all }c,
    \qquad
    \{\hat\vmu_c^\star\}\text{ form a centered simplex ETF.}
  \]
\item Lemma~\ref{lem:nonl-mean} must be tight at the minimizer, so the
  Jensen equalities hold for all $(c,c')$:
  for every anchor class $c$ and negative class $c'\neq c$, the logits
  $\hat\vw_c^{\star\top} \vu_j^\star$ are constant over $j\in I_{c'}$,
  equal to $\hat\vw_c^{\star\top}\hat\vmu_{c'}^\star$.
\end{itemize}

Let $S := \mathrm{span}\{\hat\vw_1^\star,\dots,\hat\vw_K^\star\}$, which
is the $(K-1)$--dimensional simplex-ETF subspace. Fix a class $c'$ and
$j\in I_{c'}$. For every $c\neq c'$, tightness of Jensen gives
\[
  \hat\vw_c^{\star\top}\big(\vu_j^\star - \hat\vmu_{c'}^\star\big) = 0.
\]
As before, since $\{\hat\vw_c^\star\}_{c=1}^K$ form a centered simplex
ETF in $S$ and $\sum_{c=1}^K \hat\vw_c^\star = 0$, any $K-1$ of them
are linearly independent and thus span $S$. In particular, the set
$\{\hat\vw_c^\star : c\neq c'\}$ spans $S$, so
$\vu_j^\star - \hat\vmu_{c'}^\star$ is orthogonal to $S$, and hence the
orthogonal projection of $\vu_j^\star$ onto $S$ equals
$\hat\vmu_{c'}^\star$.

But both $\vu_j^\star$ and
$\hat\vmu_{c'}^\star = \hat\vw_{c'}^\star$ are unit vectors, and
$\hat\vmu_{c'}^\star\in S$. As in the NTCE case, the only way for a
unit vector to have a unit-norm projection onto $S$ is to lie in $S$
itself and coincide with its projection, so we must have
\[
  \vu_j^\star = \hat\vmu_{c'}^\star
  \quad\text{for all }j\in I_{c'}.
\]

Thus within each class all features collapse to a single unit direction
(NC1), these $K$ directions form a centered simplex ETF (NC2), and the
classifier weights align with the class means (NC3). Therefore every
global minimizer of $\Ls_{\mathrm{NONL}}$ exhibits Neural Collapse, up to
a global rotation and permutation of the class labels.

\begin{proof}[Proof of Theorem \ref{thm:all-losses-nc}]
\textbf{NormFace:} Lemma~\ref{lem:normface-yaras} together with
Theorem~3.1 of \citet{yaras2022neural} shows that every global minimizer
of $\Ls_{\mathrm{NF}}$ satisfies NC1--NC3.

\textbf{NTCE:} Lemma~\ref{lem:ntce-mean} bounds $\Ls_{\mathrm{NTCE}}$ by
the class-level loss $L_{\mathrm{NTCE}}^{\mathrm{cls}}$, while the La/Lc
minimizer characterization (Step~2) identifies the global minimizers of
$\tilde L_{\mathrm{NTCE}}^{\mathrm{cls}}$ as simplex ETF configurations
with $\vmu_c=\vw_c$. Step~3 shows that any global minimizer of
$\Ls_{\mathrm{NTCE}}$ must both attain this class-level minimum and
satisfy the tightness conditions in Lemma~\ref{lem:ntce-mean}, which
enforces NC1. Together these yield NC1--NC3 for all NTCE minimizers.

\textbf{NONL:} Lemma~\ref{lem:nonl-mean} bounds $\Ls_{\mathrm{NONL}}$ by
the class-level loss $L_{\mathrm{NONL}}^{\mathrm{cls}}$, while the La/Lc
minimizer characterization (Step~2) identifies the global minimizers of
$\tilde L_{\mathrm{NONL}}^{\mathrm{cls}}$ as simplex ETF configurations
with $\vmu_c=\vw_c$. Step~3 shows that any global minimizer of
$\Ls_{\mathrm{NONL}}$ must both attain this class-level minimum and
satisfy the tightness conditions in Lemma~\ref{lem:nonl-mean}, which
again enforces NC1. Together these yield NC1--NC3 for all NONL
minimizers.

In all three cases, the resulting NC configuration is unique up to a
global rotation and permutation of the class labels. This proves the
theorem.
\qedhere
\end{proof}

\section{Equivalence of SCL and prototype–softmax minimizers}\label{sec:proof_equivalence_scl}
Here we provide the proof of \Cref{thm:scl-equals-proto}.
\begin{proof}
Fix $i\in[2M]$ with label $y_i$. Let $\cC(i)=\{j\in[2M]: j\neq i,\ y_j=y_i\}$,
$\cB_c=\{j\in[2M]:y_j=c\}$, $n_c=|\cB_c|$, and $\hat{\vmu}_c=\frac{1}{n_c}\sum_{j\in\cB_c}\va_j$.

\textbf{(A) SCL lower bound.}
By unfolding the SCL loss defined in \Cref{eq:scl}, the per-example loss term can be written as
\[
\ell_i^{\textnormal{SCL}}
= -\frac{1}{|\cC(i)|}\sum_{l\in\cC(i)}\frac{\va_i^\top\va_l}{\tau}
  + \log\sum_{j\in[2M]\setminus\{i\}} \exp(\va_i^\top \va_j/\tau).
\]

For the first term, using
$\frac{1}{|\cC(i)|}\sum_{l\in\cC(i)}\va_l=\frac{n_{y_i}\hat{\vmu}_{y_i}-\va_i}{\,n_{y_i}-1\,}$
and $\|\va_i\|=1$ gives
\(
-\frac{\va_i^\top}{\tau}\big(\frac{1}{|\cC(i)|}\sum_{l\in\cC(i)}\va_l\big)
\ge -\frac{\va_i^\top \hat{\vmu}_{y_i}}{\tau}.
\)

For the second term, we group by class, subtract the self term and then apply Jensen classwise due to convexity of the exponential function:
\[
\sum_{j\in[2M]\setminus\{i\}} \!e^{\va_i^\top \va_j/\tau}
= \sum_{c=1}^K \sum_{l\in\cB_c} e^{\va_i^\top \va_l/\tau} - e^{1/\tau}
\ \ge\ \sum_{c=1}^K n_c\, e^{\va_i^\top \hat{\vmu}_c/\tau} - e^{1/\tau}.
\]
Combining,
\begin{equation}
\label{eq:star}
\ell_i^{\textnormal{SCL}}
\ \ge\
-\frac{\va_i^\top \hat{\vmu}_{y_i}}{\tau}
+ \log\!\Bigg(\sum_{c=1}^K n_c\, e^{\va_i^\top \hat{\vmu}_c/\tau} - e^{1/\tau}\Bigg)
\ =:\ \ell_i^{\star}.
\end{equation}

Equality in \eqref{eq:star} holds iff every class-wise sum is collapsed, i.e., $\va_j=\hat{\vmu}_c$ for all $j\in\cB_c$, because the positive-term bound is tight only when $\va_i^\top\hat{\vmu}_{y_i}=1$ (so $\va_i=\hat{\vmu}_{y_i}$) and the classwise Jensen step is tight only when all within-class logits $\{\va_i^\top\va_l: l\in\cB_c\}$ are equal.

\textbf{(B) Prototype loss lower bound.}
Since $\va_i^\top\hat{\vmu}_{y_i}\le 1$ for unit vectors, $e^{\va_i^\top\hat{\vmu}_{y_i}/\tau}\le e^{1/\tau}$.
Therefore
\[
\underbrace{\sum_{c=1}^K n_c\, e^{\va_i^\top \hat{\vmu}_c/\tau} - e^{\va_i^\top\hat{\vmu}_{y_i}/\tau}}_{=:D_i^{\text{proto}}}
\ \ge\
\underbrace{\sum_{c=1}^K n_c\, e^{\va_i^\top \hat{\vmu}_c/\tau} - e^{1/\tau}}_{=:D_i^{\star}},
\]
and thus, with the \emph{same} numerator $e^{\va_i^\top\hat{\vmu}_{y_i}/\tau}$,
\[
\ell_i^{\textnormal{proto}}
= -\frac{\va_i^\top \hat{\vmu}_{y_i}}{\tau} + \log D_i^{\text{proto}}
\ \ge\
-\frac{\va_i^\top \hat{\vmu}_{y_i}}{\tau} + \log D_i^{\star}
= \ell_i^{\star}.
\]
Averaging over $i$ gives the following inequalities for any batch $\mA$:

\begin{align*}
L_{\text{SCL}}(\mA) &\;\ge\; L_{\star}(\mA)\\
L_{\text{proto}}(\mA) &\;\ge\; L_{\star}(\mA).
\end{align*}

\textbf{(C) Collapse–simplex makes all three equal.}
By \citet[Theorem ~2]{graf2021dissecting}, any SCL global minimizer exhibits class-wise collapse,
$\va_j=\vzeta_{y_j}$, and the directions $\{\vzeta_c\}$ form a centered regular $(K\!-\!1)$-simplex.
Hence $\hat{\vmu}_c=\vzeta_c$ and $\va_i^\top\hat{\vmu}_{y_i}=1$ for all $i$, making both inequalities above tight:
\[
L_{\text{SCL}}(\mA^\star)=L_{\star}(\mA^\star)=L_{\text{proto}}(\mA^\star).
\]
Therefore $\min L_{\text{SCL}}=\min L_{\star}=\min L_{\text{proto}}$, all attained at the
collapsed-simplex configurations.

\textbf{(D) Equality of argmin sets.}
Let $\mA$ minimize $L_{\text{proto}}$. Then
$L_{\text{proto}}(\mA)=\min L_{\text{proto}}=\min L_{\star}$,
so $L_{\star}(\mA)=L_{\text{proto}}(\mA)$, which forces
$e^{\va_i^\top\hat{\vmu}_{y_i}/\tau}=e^{1/\tau}$ for every $i$, i.e., $\va_i^\top\hat{\vmu}_{y_i}=1$
and hence $\va_i=\hat{\vmu}_{y_i}$ (class-wise collapse). Moreover
$L_{\text{SCL}}(\mA)= L_{\star}(\mA)=\min L_{\text{SCL}}$, so
$\mA$ also minimizes SCL.

Graf’s theorem then implies the class means form a centered simplex ETF.
Thus the argmin sets of $L_{\text{SCL}}$ and $L_{\text{proto}}$ coincide (up to rotation and label permutation).

\end{proof}

\section{From Theory to Empirical Behavior}
\label{ssec:expected_behavior}

\Cref{thm:all-losses-nc} and \Cref{thm:scl-equals-proto}, combined with the framing of our paper relative to the literature, imply specific behaviors that should manifest in practice. We organize this section around six such expected behaviors (EB1--EB6) and verify each empirically against the corresponding table or figure.

\paragraph{EB1: Normalized losses converge to NC geometry faster than CE.}
Normalized losses make NC the unique global optimum on a benign strict-saddle landscape~\citep{yaras2022neural}, whereas CE admits unbounded rescaling that prevents NC even at its global optimum~\citep{soudry2018implicit}.

\emph{Validated:} \Cref{tab:nc_ce_speed} shows our methods reach CE-equivalent NC values in under $7.5\%$ of CE's iterations on 4/5 metrics. \Cref{fig:nc_convergence} shows CE plateauing while our losses converge to $\geq\!95\%$ NC (\Cref{tab:nc_metrics}).

\paragraph{EB2: NONL converges to NC faster than NormFace and NTCE.}
Decoupling alignment from uniformity removes competing gradient directions~\citep{yeh2022decoupled}, so NONL should follow a more direct optimization path.

\emph{Validated:} \Cref{tab:nc_ce_speed} shows NONL converges $1.2$--$3.1\times$ faster than NormFace on the rank metrics.

\paragraph{EB3: NTCE improves over NormFace.}
Expanding the negative set from $K$ prototypes to $M$ batch instances yields better contrastive objective estimates~\citep{koromilas2024bridging}, strengthening inter-class separation.

\emph{Validated:} \Cref{tab:classifier_accuracy} and \Cref{tab:nc_metrics} show NTCE outperforming NormFace on both accuracy and NC metrics.

\paragraph{EB4: Class-mean prototypes from SCL are effective classifiers throughout training.}
\Cref{thm:scl-equals-proto} proves the minimizer sets of $\Ls_{\mathrm{SCL}}$ and $\Ls_{\mathrm{proto}}$ coincide, which means SCL already optimizes for classifier--feature alignment, not only at the exact (unreached) optimum.

\emph{Validated:} \Cref{tab:scl_accuracy} shows fixed prototypes match linear probing on 3/4 datasets and exceed it on ImageNet-100 by $+2.0$pp, with a single forward pass replacing the full LP training phase.

\paragraph{EB5: Classifier Learning with normalized losses and SCL reach the same NC geometry.}
Both are prototype-contrast methods on the hypersphere converging to the same simplex ETF.

\emph{Validated:} In \Cref{tab:nc_metrics} both families arrive significantly closer to the NC structure compared to CE.

\paragraph{EB6: Better NC geometry yields better downstream performance.}
The simplex ETF maximizes inter-class margins, which prior work links to improved generalization, transferability, and robustness~\citep{galanti2021role,papyan2020prevalence,bartlett2017spectrally,neyshabur2018pacbayes,hein2017formal,fawzi2016robustness,ding2018mma,yang2022inducingnc}.

\emph{Validated:} \Cref{tab:transfer_results,tab:imbalance_results,tab:imagenet_c_results} show consistent gains in transfer ($+5.5\%$ mean), long-tail (up to $+8.7\%$), and robustness (lower mCE).

\paragraph{Where theory meets practical approximation.}
Our theorems characterize the geometry of global minimizers but do not analyze the optimization dynamics that reach them. In practice, our methods closely approach the theoretical NC optimum across all metrics (\Cref{tab:nc_metrics}, \Cref{fig:nc_convergence}), substantially more so than CE or SCL with linear probing. The theory also does not characterize how batch size, learning-rate schedules, temperature $\tau$, or data augmentation affect convergence. For instance, our theorems hold for any $\tau>0$, yet \Cref{fig:acc_phase} shows that temperature impacts finite-time performance in some cases; similarly, larger batches provide better approximations of the contrastive objective~\citep{koromilas2024bridging}, confirmed by our \Cref{tab:imagenet_batchsize}. Understanding these optimization-level interactions remains an open direction. Our contribution is identifying the right target geometry and the losses that make it uniquely optimal, while the dynamics of reaching it deserve further study.

\section{Relation to Other Literature}
\label{app:related_lit}

This section expands on the related-work discussion in the main text, situating our framework relative to three additional bodies of work that our rebuttal discussion brought into focus: alternative routes to constraining radial freedom, the associative-embedding line of work, and recent NC results under specific architectures.

\subsection{Relation to \citet{graf2021dissecting} NC Results on Supervised Contrastive Learning}
\label{ssec:graf_comparison}

Our \Cref{thm:scl-equals-proto} strengthens the SCL minimizer characterization of \citet{graf2021dissecting}, which establishes that SCL minimizers exhibit NC1 (within-class collapse) and NC2 (simplex ETF of class means). It is natural to ask whether the existence of an optimal class-mean classifier follows trivially from NC1 + NC2; we show here that it does not, and that this matters in practice.

\paragraph{Why the class-mean classifier does not follow trivially from NC1 + NC2.}
At the exact global minimum, where NC1 + NC2 are realized, the best linear classifier indeed has weight vectors corresponding to the class means; this much is straightforward. However, SCL,  contrary to NTCE and NONL, which closely approximate NC ($\geq\!95\%$ on all NC metrics in \Cref{tab:nc_metrics}), does not in practice reach this exact minimum (\Cref{tab:nc_metrics}: inter/intra erank $66.7/7.5$ on CIFAR-100 vs.\ the theoretical optimum $99/0$). When NC1 + NC2 is not realized, there is no prior theoretical basis to expect class-mean classifiers to work.

\Cref{thm:scl-equals-proto} fills this gap. It proves the minimizer-set equality $\arg\min \Ls_{\mathrm{SCL}} = \arg\min \Ls_{\mathrm{proto}}$, meaning SCL's objective is equivalent to optimizing the prototype-softmax classifier of \Cref{eq:proto}. That is, \textbf{the prototype classifier is the principled SCL classifier across the entire optimization trajectory, not only at the unreachable exact minimum}. This is why fixed prototypes (FP) work even at $\sim\!95\%$ NC and why linear probing, which discards the projection head, returns to unnormalized features, and fits unconstrained weights and biases, is unnecessary.

\paragraph{What is and is not proved by prior work.}
SCL contains no classifier weights in its formulation. Prior theorems therefore establish NC1 + NC2 for SCL minimizers but cannot address NC3 (classifier--feature alignment) or NC4 (zero bias), which are not even expressible without a classifier. \Cref{thm:scl-equals-proto} resolves this by proving that a classifier is already implicit in SCL: the class-mean prototypes \emph{are} the classifier weights, enabling NC3 and NC4. Concretely, prior results characterize \emph{what SCL solutions look like} (collapsed ETF); \Cref{thm:scl-equals-proto} proves \emph{what SCL optimizes for} (prototype classification). It establishes that class-mean prototypes do not serve as post-hoc classifier weights but are the solution the SCL objective drives toward, making their direct use principled rather than heuristic.

\paragraph{Practical consequence.}
The linear probing phase of SCL discards the projection head and trains on unnormalized encoder representations with unconstrained weights and biases. Fixed prototypes instead use the normalized representations from the projection head (the optimization space) and add no magnitude or bias. This rests on two contributions: (i) \Cref{thm:scl-equals-proto} proves the optimal classifier is already on the optimization space, and (ii) our normalized classifier framework, which enables classification on the normalized space rather than via post-hoc unnormalized probing.

\subsection{Alternative Approaches to Constraining Radial Freedom}
\label{ssec:radial_constraints}

There are several approaches to controlling radial degrees of freedom beyond normalized softmax: Riemannian optimization on the Stiefel manifold~\citep{huang2018orthogonal,bansal2018gains}, weight normalization~\citep{salimans2016weight}, spectral normalization~\citep{miyato2018spectral}, and layer rotation~\citep{carbonnelle2019layer}.

The critical observation is that all these methods are applied on top of standard CE, inheriting its fundamental limitations. Stiefel optimization constrains weight vectors to be mutually orthogonal (pairwise cosine $0$) by definition, whereas the simplex ETF that NC requires has pairwise cosine $-1/(K{-}1)$, thus constituting a distinct geometric structure. Our normalized softmax can be understood as optimization on the oblique manifold (product of spheres), which~\citet{yaras2022neural} prove has a benign strict-saddle landscape specifically for NC. Weight and spectral normalization control magnitudes but still operate with unnormalized features and learnable biases. Layer rotation is a useful diagnostic but does not modify the training objective. In all cases, applying these constraints to CE still leaves: (i) additional regularization hyperparameters to eliminate radial freedom, (ii) no temperature to control the similarity distribution, and (iii) CE's fundamental loss-level limitations, \ie small effective negative set and alignment--uniformity coupling, that no geometric constraint alone can resolve.

Normalized softmax addresses the constraint side cleanly by utilizing per-vector normalization on the right manifold, zero bias, and temperature control, while NTCE and NONL address the loss-level problems that persist even under perfect geometric constraints.

\subsection{Associative Embedding and Prototype-Augmented SCL}
\label{ssec:associative_embedding}

Our work also connects to the associative-embedding literature~\citep{newell2017associative,debrabandere2017semantic,neven2019instance}, where embeddings are learned by contrasting in-group positives to out-of-group negatives and are applied to tasks where linear probing is not needed. While our work focuses on the principled SCL pipeline for classification, where linear probing is standard practice and where NC theory provides formal optimality guarantees, this conceptual link is meaningful and worth making explicit.

Concretely, eliminating linear probing is not relevant to associative methods (which already skip it), but our \Cref{thm:scl-equals-proto} is: it shows that optimizing per-instance alignment and uniformity (as in SCL) yields the same optima as optimizing with class-mean prototypes (as associative methods do). This gives theoretical room to apply SCL in associative method applications. However, associative methods operate in unconstrained Euclidean space where precisely the radial degree-of-freedom problem we identify prevents convergence to optimal geometry. Our hyperspherical framework addresses this: normalizing embeddings to the unit sphere eliminates scale ambiguity and provides geometric guarantees that associative methods lack. Our results suggest one could replace associative losses with SCL on the hypersphere, achieving the same optima more easily thanks to the benign loss landscape~\citep{yaras2022neural}.

Beyond SCL, NTCE and NONL open room for principled and more efficient CE-style training that uses learnable group embeddings (the classifier weights), which we leave as a natural direction for future work on associative embedding methods.

Closer in spirit, \citet{gill2024engineering} augment SCL with explicit prototype representations. Our perspective is complementary: rather than adding explicit prototypes to SCL, \Cref{thm:scl-equals-proto} proves that SCL already implicitly optimizes for them. The class-mean prototypes are not added on top of SCL; they are the classifier the SCL objective drives toward.

\subsection{NC under Specific Architectural and Training Choices}
\label{ssec:nc_architectures}

The standard NC theory builds on the unconstrained-features (UFM) and layer-peeled (LPM) models, which abstract away architectural details. \citet{sukenik2024deep} and~\citet{sukenik2025deep} have begun to formally justify these idealizations for end-to-end training: proving deep UFM optimality for non-linear models and, in follow-up work, formally reducing end-to-end ResNet/Transformer training to an equivalent UFM, with the approximation tightening as depth grows. These results provide grounding for the UFM/LPM assumptions in \Cref{thm:all-losses-nc} and \Cref{thm:scl-equals-proto}.

A second strand examines NC under specific activation functions. \citet{kini2024symmetric} show that when using ReLU, the optima of SCL is an orthogonal frame with collapsed in-class representations, rather than a centered simplex ETF. This minimizer can be plugged directly into our proof (\Cref{sec:proof_equivalence_scl}, Part C and Step~3) in place of the Graf et al.\ minimizer. When changing the minimizer, the argmin equivalence from \Cref{thm:scl-equals-proto} still holds: $\arg\min \Ls_{\mathrm{SCL}} = \arg\min \Ls_{\mathrm{proto}}$ is structural and does not depend on which minimizer shape is realized. The geometric implications, however, do depend on the activation, and we acknowledge this in \Cref{ssec:limitations}. We also note that, contrary to NTCE and NONL, SCL does not achieve $\geq\!95\%$ NC in our experiments (\Cref{tab:nc_metrics}), so the gap to the activation-dependent optimum remains a separate empirical concern.

A third thread examines the relationship between NC and transfer. \citet{chen2022perfectly} and~\citet{kornblith2021why} report a trade-off between collapse and downstream transfer in the coarse-to-fine setting, where collapsed representations may fail to preserve fine-grained sub-class structure within a coarse class. \citet{harun2025controlling} propose explicitly controlling the degree of NC at intermediate layers, with their method enforcing NC at the classification layer, which is the layer we operate on, and reporting strong transfer, validating that NC at the final layer is compatible with strong transfer in the standard large-to-small protocol of \citet{khosla2020supcon}, which is the protocol we evaluate. The broader NC literature similarly establishes that collapsed, maximally separated representations improve generalization, robustness, and transfer in this standard protocol~\citep{papyan2020prevalence,bartlett2017spectrally,neyshabur2018pacbayes,fawzi2016robustness,ding2018mma,galanti2021role,khosla2020supcon,soudry2018implicit,hein2017formal}. The coarse-to-fine regime is outside our evaluation scope; we list it in \Cref{ssec:limitations}.

\section{Implementation Details}\label{ssec:impl_details}
Experiments are conducted on four standard image classification datasets: \textit{CIFAR10, CIFAR100, ImageNet-100, and ImageNet1K}, following common representation learning benchmarking practices \citep{khosla2020supcon, markou2024guiding,wang2021unsupervised, yeh2022decoupled}. We use ResNet50 for ImageNet-100/ImageNet1K and ResNet18 for CIFAR10/CIFAR100. All models are trained using SGD optimizer for 500 epochs on ImageNet1K (temperature 0.1) and ImageNet-100 (batch size 1024, temperature 0.2) and 1000 epochs on CIFAR10/CIFAR100. For ImageNet1k in order to enable fair comparison we report for each method its best accuracy for training with batch size 2048, 4096, and 8192. For CIFAR10/100 we set the batch size to 512 and evaluate all 11 temperatures in the set [0.07, 0.1, 0.2, 0.3, 0.4, 0.5, 0.6, 0.7, 0.8, 0.9, 1]. In \Cref{tab:classifier_accuracy} and \Cref{tab:nc_metrics} we report for each method the best performing temperature. For Supervised Contrastive Learning we perform the linear probing phase for the typical 90 epochs. Our code is publicly available at \url{https://github.com/pakoromilas/nc_by_design}

\subsection{Classifier Learning Methods (CE, NormFace, NTCE, NONL)}
For the family of classifier learning methods, we employ the following hyperparameters across datasets:

\paragraph{CIFAR10/CIFAR100.} Models are trained for 1000 epochs with batch size 512. We use SGD optimizer with momentum 0.9, weight decay $10^{-4}$, and initial learning rate 0.2. The learning rate follows a cosine annealing schedule throughout training, decaying to a minimum value of $\eta_{\mathrm{min}} = \eta_0 \times 0.1^3$ where $\eta_0$ is the initial learning rate. Data augmentation consists of RandomResizedCrop with scale (0.2, 1.0), RandomHorizontalFlip, and standard normalization with dataset-specific mean and standard deviation values.

\paragraph{ImageNet-100.} ResNet50 models are trained for 500 epochs with batch size 1024 (256 per GPU with 4 GPUs). We employ SGD optimizer with momentum 0.9, weight decay $10^{-4}$, and initial learning rate 0.1, which is automatically scaled based on the total batch size. We use cosine annealing scheduler with 10 epochs of linear warmup from 0.01 to the target learning rate. After warmup, the learning rate follows a cosine decay to $\eta_{\mathrm{min}} = \eta_0 \times 0.1^3$. Synchronized BatchNorm is enabled across GPUs. Data augmentation includes RandomResizedCrop(224) with scale (0.2, 1.0), RandomHorizontalFlip, and standard ImageNet normalization.

\paragraph{ImageNet1K.} ResNet50 models are trained for 500 epochs with batch size 2048 (256 per GPU with 8 GPUs). Hyperparameters follow the same configuration as ImageNet-100, with SGD optimizer (momentum 0.9, weight decay $10^{-4}$), initial learning rate 0.1 with automatic scaling based on batch size. We apply 10 epochs of linear warmup followed by cosine annealing to $\eta_{\mathrm{min}} = \eta_0 \times 0.1^3$. Data augmentation and normalization follow ImageNet-100 settings.

\subsection{Supervised Contrastive Learning}
For supervised contrastive methods, we implement a two-phase training procedure:

\textbf{Phase 1: Contrastive Training.}

\textbf{CIFAR10/CIFAR100:} Models are trained for 1000 epochs with batch size 512. SGD optimizer is used with momentum 0.9, weight decay $10^{-4}$, and initial learning rate 0.05. The learning rate follows cosine annealing schedule throughout training, decaying to $\eta_{\mathrm{min}} = \eta_0 \times 0.1^3$. We use extensive data augmentation including RandomResizedCrop with scale (0.2, 1.0), RandomHorizontalFlip, ColorJitter(0.4, 0.4, 0.4, 0.1) with probability 0.8, and RandomGrayscale with probability 0.2. Each image generates two augmented views for contrastive learning.

\textbf{ImageNet-100:} ResNet50 encoder with 128-dimensional projection head is trained for 500 epochs with batch size 1024. We use SGD optimizer with momentum 0.9, weight decay $10^{-4}$, and base learning rate 0.8 (automatically scaled by batch size). Learning rate follows cosine annealing with 10 epochs linear warmup from 0.01, then decays following a cosine schedule to $\eta_{\mathrm{min}} = \eta_0 \times 0.1^3$. Data augmentation extends CIFAR settings with the addition of Gaussian blur for ImageNet scale images.

\textbf{ImageNet1K:} Training spans 500 epochs with batch size 2048 using the same optimizer configuration as ImageNet-100. Base learning rate is set to 0.1 with automatic scaling. We employ cosine annealing with 5 epochs warmup from 0.01, followed by cosine decay to $\eta_{\mathrm{min}} = \eta_0 \times 0.1^3$. The same augmentation pipeline as ImageNet-100 is used.

\paragraph{Phase 2: Linear Evaluation.}
For all datasets, we freeze the learned encoder and train a linear classifier on top of the representations:

\textbf{CIFAR10/CIFAR100:} Linear classifier is trained for 100 epochs using SGD with learning rate 5.0, momentum 0.9, and zero weight decay. Learning rate is decayed by factor 0.2 at epochs 60, 75, 90 using a step scheduler.

\textbf{ImageNet-100:} Linear evaluation runs for 90 epochs with SGD optimizer, learning rate 2.0, momentum 0.9, and zero weight decay. Learning rate decay by factor 0.2 occurs at epochs 30, 60, 80 using a step scheduler.

\textbf{ImageNet1K:} Linear classifier training spans 90 epochs with SGD, learning rate 0.8, momentum 0.9, and zero weight decay. The same step decay schedule as ImageNet-100 is applied.

\subsection{Additional Implementation Details}
For distributed training on ImageNet datasets, we employ DistributedDataParallel with one process per GPU. Random seed is fixed at 42 for reproducibility. The cosine annealing scheduler is implemented following the standard formulation: $\eta_t = \eta_{\mathrm{min}} + \frac{1}{2}(\eta_0 - \eta_{\mathrm{min}})(1 + \cos(\frac{\pi t}{T}))$, where $t$ is the current epoch and $T$ is the total number of epochs. For experiments with warmup, the warmup period linearly interpolates from the warmup starting learning rate to the initial learning rate before transitioning to cosine annealing. Temperature parameter $\tau$ is searched over the range [0.07, 0.1, 0.2, ..., 1.0] for CIFAR experiments, while ImageNet experiments use the optimal temperature found through preliminary experiments (0.1 for supervised contrastive, 0.2 for classifier learning methods). All models use standard weight initialization and no additional regularization beyond weight decay.

\section{Convergence Dynamics.}\label{ssec:training_dynamics}

\begin{table*}[t]
\centering
\caption{Convergence speed (\% of training iters): \textbf{(I)} time to reach the 95\% NC threshold; \textbf{(II)} time to match CE’s final value; ``0\%" indicates the target is met at the first logged eval.}
\label{tab:nc_ce_speed}
\begin{tabular}{lccccc}
\toprule
Method & Instance alignment & Weight alignment & Weights erank & Intra erank & Inter erank \\
\midrule
\multicolumn{6}{c}{\textbf{(I) NC convergence to $95\%$ threshold (ratio to max iterations)}} \\
\midrule
NormFace   & \textcolor{cifar10best}{79.4\%} & 8.2\% & 52.6\% & 45.4\% & 56.2\% \\
NTCE  & \textcolor{cifar10best}{79.4\%} & \textcolor{cifar10best}{6.8\%} & 56.4\% & 36.6\% & 52.4\% \\
NONL   & \textcolor{cifar10best}{79.4\%} & 7.4\% & \textcolor{cifar10best}{34.6\%} & \textcolor{cifar10best}{14.6\%} & \textcolor{cifar10best}{47.2\%} \\
\midrule
\multicolumn{6}{c}{\textbf{(II) CE convergence to converged value (ratio to CE converged iteration)}} \\
\midrule
NormFace   & \textcolor{cifar10best}{2.2\%} & 2.0\% & 66.3\% & \textcolor{cifar10best}{0\%} & 7.4\% \\
NTCE  & \textcolor{cifar10best}{2.2\%} & \textcolor{cifar10best}{1.8\%} & 73.9\% & \textcolor{cifar10best}{0\%} & 7.4\% \\
NONL   & \textcolor{cifar10best}{2.2\%} & \textcolor{cifar10best}{1.8\%} & \textcolor{cifar10best}{35.4\%} & \textcolor{cifar10best}{0\%} & \textcolor{cifar10best}{6.0\%} \\
\bottomrule
\end{tabular}
\end{table*}

On CIFAR-100, we track NC metrics and define convergence as the earliest iteration where the exponentially-weighted moving average enters and remains within a metric-specific tolerance around the 95\% NC threshold. 

In \Cref{tab:nc_ce_speed}(I) the convergence speed to 95\% of theoretical NC thresholds is quantified. Normalized losses reach these thresholds, \textit{typically early in training}. NONL converges faster with \textbf{gains over NormFace for the rank metrics} (1.2-3.1 speedup), benefiting from simplified optimization without competing terms. \Cref{tab:nc_ce_speed}(II) benchmarks against CE's converged values. The acceleration is dramatic: normalized losses reach CE-equivalent values in under 7.5\% of CE's required iterations across 4/5 metrics, while \textbf{NONL converges faster}. This demonstrates that normalized losses fundamentally restructure the optimization landscape, \textit{enabling direct paths to neural collapse}.

% Shared axis + marker styles
\pgfplotsset{
  ncaxis/.style={
    width=5.8cm, height=4.2cm,
    ymajorgrids,
    grid style={line width=0.3pt, draw=gray!25},
    tick align=outside, 
    tick style={black, line width=0.4pt},
    tick label style={font=\small},
    label style={font=\normalsize},
    xlabel style={font=\normalsize, yshift=3pt},
    ylabel style={font=\normalsize, xshift=-2pt},
    every axis plot/.append style={line width=1.2pt},
    xmin=0, xmax=5500,
    xtick={0,1000,2000,3000,4000,5000},
    xticklabels={0,1k,2k,3k,4k,5k},
    scaled ticks=false,
    enlarge x limits=false,
    legend style={font=\small, draw=none, fill=none},
    axis line style={line width=0.6pt},
  },
  nccircle/.style={
    only marks, 
    mark=o, 
    mark size=2.5pt, 
    line width=0.8pt, 
    mark options={solid, fill=white, draw}
  },
}

% ===== Figure =====
\begin{figure}[]
\centering
% Place legend below the figure with proper spacing
\pgfplotslegendfromname{NClegend}
\resizebox{\textwidth}{!}{%
\begin{tikzpicture}
\begin{groupplot}[
  group style={
    group size=3 by 2, 
    horizontal sep=25pt, 
    vertical sep=35pt,
    group name=plots
  },
  ncaxis,
]

% ---------------- Row 1: Panel (a) ----------------
\nextgroupplot[
  xlabel={(a) Training Accuracy},
  ymin=0, ymax=1.02, 
  ytick={0,0.2,0.4,0.6,0.8,1.0},
  yticklabels={0,0.2,0.4,0.6,0.8,1.0},
  legend to name=NClegend, 
  legend columns=5, 
  legend cell align=left,
  legend style={
    /tikz/every even column/.append style={column sep=0.8cm},
    font=\small
  },
]

% Add the plots
\addplot[cecolor] coordinates {
(9,0.071)(185,0.603)(361,0.706)(537,0.752)(713,0.781)(889,0.798)(1065,0.808)(1241,0.820)(1417,0.828)(1593,0.834)(1769,0.844)(1945,0.854)(2121,0.866)(2297,0.870)(2473,0.882)(2649,0.888)(2825,0.896)(3001,0.907)(3177,0.916)(3353,0.924)(3529,0.931)(3705,0.941)(3881,0.951)(4057,0.958)(4233,0.964)(4409,0.972)(4585,0.977)(4761,0.982)(4937,0.986)(5113,0.987)(5289,0.988)(5465,0.988)
}; 
\addlegendentry{CE}

\addplot[ntcecolor] coordinates {
(9,0.052)(185,0.508)(361,0.638)(537,0.692)(713,0.718)(889,0.739)(1065,0.752)(1241,0.766)(1417,0.777)(1593,0.784)(1769,0.798)(1945,0.810)(2121,0.818)(2297,0.828)(2473,0.840)(2649,0.847)(2825,0.857)(3001,0.872)(3177,0.879)(3353,0.892)(3529,0.903)(3705,0.913)(3881,0.925)(4057,0.935)(4233,0.945)(4409,0.953)(4585,0.962)(4761,0.968)(4937,0.974)(5113,0.977)(5289,0.980)(5465,0.979)
}; 
\addlegendentry{NormFace}

\addplot[sntcecolor] coordinates {
(9,0.049)(185,0.515)(361,0.637)(537,0.690)(713,0.718)(889,0.737)(1065,0.754)(1241,0.766)(1417,0.779)(1593,0.784)(1769,0.796)(1945,0.809)(2121,0.818)(2297,0.829)(2473,0.836)(2649,0.845)(2825,0.857)(3001,0.867)(3177,0.878)(3353,0.890)(3529,0.900)(3705,0.910)(3881,0.924)(4057,0.932)(4233,0.941)(4409,0.951)(4585,0.961)(4761,0.967)(4937,0.973)(5113,0.976)(5289,0.979)(5465,0.979)
}; 
\addlegendentry{NTCE}

\addplot[nonlcolor] coordinates {
(9,0.049)(185,0.515)(361,0.638)(537,0.683)(713,0.709)(889,0.729)(1065,0.738)(1241,0.755)(1417,0.765)(1593,0.774)(1769,0.785)(1945,0.798)(2121,0.805)(2297,0.817)(2473,0.826)(2649,0.836)(2825,0.847)(3001,0.856)(3177,0.872)(3353,0.881)(3529,0.892)(3705,0.903)(3881,0.915)(4057,0.928)(4233,0.937)(4409,0.946)(4585,0.953)(4761,0.964)(4937,0.968)(5113,0.974)(5289,0.976)(5465,0.976)
}; 
\addlegendentry{NONL}

\addlegendimage{no markers, red, dashed, line width=1.0pt}
\addlegendentry{95\% NC Threshold}

% ---------------- Panel (b) ----------------
\nextgroupplot[
  xlabel={(b) Weight–Instance Alignment},
  ymin=-0.02, ymax=2.0, 
  ytick={0,0.5,1.0,1.5,2.0},
]

\addplot[red, dashed, line width=1.0pt, forget plot] coordinates {(0,0.1)(5500,0.1)};

\addplot[cecolor] coordinates {
(9,1.875)(185,1.329)(361,1.317)(537,1.320)(713,1.317)(889,1.311)(1065,1.309)(1241,1.298)(1417,1.295)(1593,1.286)(1769,1.281)(1945,1.275)(2121,1.264)(2297,1.259)(2473,1.250)(2649,1.246)(2825,1.237)(3001,1.228)(3177,1.219)(3353,1.208)(3529,1.200)(3705,1.183)(3881,1.169)(4057,1.153)(4233,1.136)(4409,1.120)(4585,1.103)(4761,1.086)(4937,1.076)(5113,1.067)(5289,1.063)(5465,1.063)
};

\addplot[ntcecolor] coordinates {
(9,1.857)(185,0.709)(361,0.574)(537,0.512)(713,0.477)(889,0.451)(1065,0.436)(1241,0.416)(1417,0.406)(1593,0.393)(1769,0.374)(1945,0.358)(2121,0.343)(2297,0.331)(2473,0.313)(2649,0.301)(2825,0.284)(3001,0.262)(3177,0.247)(3353,0.228)(3529,0.209)(3705,0.190)(3881,0.168)(4057,0.150)(4233,0.128)(4409,0.111)(4585,0.094)(4761,0.080)(4937,0.068)(5113,0.061)(5289,0.056)(5465,0.056)
};
\addplot[ntcecolor, nccircle, forget plot] coordinates {(4365,0.1)};

\addplot[sntcecolor] coordinates {
(9,1.852)(185,0.684)(361,0.560)(537,0.500)(713,0.467)(889,0.444)(1065,0.425)(1241,0.409)(1417,0.392)(1593,0.386)(1769,0.371)(1945,0.354)(2121,0.338)(2297,0.321)(2473,0.311)(2649,0.299)(2825,0.278)(3001,0.264)(3177,0.245)(3353,0.224)(3529,0.205)(3705,0.188)(3881,0.165)(4057,0.150)(4233,0.131)(4409,0.112)(4585,0.093)(4761,0.080)(4937,0.067)(5113,0.061)(5289,0.055)(5465,0.054)
};
\addplot[sntcecolor, nccircle, forget plot] coordinates {(4365,0.1)};

\addplot[nonlcolor] coordinates {
(9,1.847)(185,0.711)(361,0.587)(537,0.530)(713,0.497)(889,0.475)(1065,0.462)(1241,0.438)(1417,0.424)(1593,0.411)(1769,0.397)(1945,0.378)(2121,0.362)(2297,0.344)(2473,0.331)(2649,0.315)(2825,0.298)(3001,0.282)(3177,0.257)(3353,0.239)(3529,0.220)(3705,0.199)(3881,0.178)(4057,0.153)(4233,0.136)(4409,0.117)(4585,0.103)(4761,0.085)(4937,0.073)(5113,0.065)(5289,0.060)(5465,0.060)
};

% ---------------- Panel (c) ----------------
\nextgroupplot[
  xlabel={(c) Weight–Class Alignment},
  ymin=-0.02, ymax=2.0, 
  ytick={0,0.5,1.0,1.5,2.0},
]

\addplot[red, dashed, line width=1.0pt, forget plot] coordinates {(0,0.1)(5500,0.1)};

\addplot[cecolor] coordinates {
(9,1.807)(185,1.078)(361,1.055)(537,1.056)(713,1.055)(889,1.050)(1065,1.044)(1241,1.036)(1417,1.034)(1593,1.025)(1769,1.021)(1945,1.017)(2121,1.009)(2297,1.004)(2473,0.999)(2649,0.997)(2825,0.991)(3001,0.985)(3177,0.979)(3353,0.971)(3529,0.966)(3705,0.952)(3881,0.941)(4057,0.926)(4233,0.911)(4409,0.897)(4585,0.881)(4761,0.866)(4937,0.855)(5113,0.847)(5289,0.844)(5465,0.844)
};

\addplot[ntcecolor] coordinates {
(9,1.736)(185,0.229)(361,0.125)(537,0.095)(713,0.082)(889,0.072)(1065,0.067)(1241,0.062)(1417,0.058)(1593,0.055)(1769,0.050)(1945,0.046)(2121,0.044)(2297,0.041)(2473,0.038)(2649,0.036)(2825,0.033)(3001,0.030)(3177,0.028)(3353,0.026)(3529,0.023)(3705,0.021)(3881,0.019)(4057,0.018)(4233,0.016)(4409,0.015)(4585,0.014)(4761,0.013)(4937,0.012)(5113,0.012)(5289,0.012)(5465,0.012)
};
\addplot[ntcecolor, nccircle, forget plot] coordinates {(449,0.1)};

\addplot[sntcecolor] coordinates {
(9,1.720)(185,0.194)(361,0.105)(537,0.080)(713,0.068)(889,0.062)(1065,0.057)(1241,0.053)(1417,0.049)(1593,0.047)(1769,0.043)(1945,0.039)(2121,0.037)(2297,0.035)(2473,0.033)(2649,0.031)(2825,0.029)(3001,0.027)(3177,0.024)(3353,0.023)(3529,0.021)(3705,0.019)(3881,0.017)(4057,0.016)(4233,0.015)(4409,0.014)(4585,0.013)(4761,0.012)(4937,0.012)(5113,0.012)(5289,0.011)(5465,0.011)
};
\addplot[sntcecolor, nccircle, forget plot] coordinates {(372,0.1)};

\addplot[nonlcolor] coordinates {
(9,1.712)(185,0.205)(361,0.115)(537,0.089)(713,0.076)(889,0.067)(1065,0.063)(1241,0.057)(1417,0.053)(1593,0.050)(1769,0.046)(1945,0.043)(2121,0.040)(2297,0.038)(2473,0.035)(2649,0.032)(2825,0.030)(3001,0.028)(3177,0.025)(3353,0.023)(3529,0.021)(3705,0.020)(3881,0.018)(4057,0.016)(4233,0.015)(4409,0.014)(4585,0.013)(4761,0.013)(4937,0.012)(5113,0.012)(5289,0.012)(5465,0.012)
};
\addplot[nonlcolor, nccircle, forget plot] coordinates {(405,0.1)};

% ---------------- Row 2: Panel (d) ----------------
\nextgroupplot[
  xlabel={(d) Weight Effective Rank},
  ymin=40, ymax=100, 
  ytick={40,60,80,100}
]

\addplot[red, dashed, line width=1.0pt, forget plot] coordinates {(0,94.05)(5500,94.05)};

\addplot[cecolor] coordinates {
(9,80.708)(185,67.420)(361,76.097)(537,80.158)(713,82.009)(889,83.677)(1065,84.247)(1241,84.550)(1417,85.038)(1593,85.593)(1769,85.716)(1945,85.800)(2121,86.332)(2297,86.368)(2473,86.649)(2649,87.104)(2825,87.302)(3001,87.533)(3177,87.781)(3353,88.107)(3529,88.313)(3705,88.451)(3881,88.651)(4057,88.895)(4233,89.057)(4409,89.218)(4585,89.328)(4761,89.426)(4937,89.475)(5113,89.510)(5289,89.519)(5465,89.523)
};

\addplot[ntcecolor] coordinates {
(9,66.820)(185,50.264)(361,66.360)(537,72.458)(713,75.962)(889,78.456)(1065,80.113)(1241,82.193)(1417,83.785)(1593,84.201)(1769,85.299)(1945,86.671)(2121,88.089)(2297,89.183)(2473,89.955)(2649,90.936)(2825,91.806)(3001,92.915)(3177,93.739)(3353,94.584)(3529,95.340)(3705,96.183)(3881,96.781)(4057,97.414)(4233,97.849)(4409,98.205)(4585,98.470)(4761,98.704)(4937,98.814)(5113,98.886)(5289,98.910)(5465,98.919)
};
\addplot[ntcecolor, nccircle, forget plot] coordinates {(2891,94.05)};

\addplot[sntcecolor] coordinates {
(9,67.495)(185,50.754)(361,65.861)(537,71.720)(713,74.913)(889,77.547)(1065,79.704)(1241,81.515)(1417,82.370)(1593,82.964)(1769,84.449)(1945,85.554)(2121,86.675)(2297,87.423)(2473,88.827)(2649,89.824)(2825,90.476)(3001,91.691)(3177,92.481)(3353,93.769)(3529,94.319)(3705,95.316)(3881,96.146)(4057,96.854)(4233,97.376)(4409,97.815)(4585,98.231)(4761,98.545)(4937,98.695)(5113,98.790)(5289,98.830)(5465,98.840)
};
\addplot[sntcecolor, nccircle, forget plot] coordinates {(3100,94.05)};

\addplot[nonlcolor] coordinates {
(9,66.584)(185,57.515)(361,74.990)(537,80.819)(713,83.682)(889,85.878)(1065,87.646)(1241,89.262)(1417,89.934)(1593,90.676)(1769,91.803)(1945,92.639)(2121,93.243)(2297,94.202)(2473,94.614)(2649,95.314)(2825,95.913)(3001,96.385)(3177,96.867)(3353,97.442)(3529,97.711)(3705,98.115)(3881,98.341)(4057,98.592)(4233,98.743)(4409,98.879)(4585,98.936)(4761,98.989)(4937,99.013)(5113,99.024)(5289,99.030)(5465,99.032)
};
\addplot[nonlcolor, nccircle, forget plot] coordinates {(1901,94.05)};

% ---------------- Panel (e) ----------------
\nextgroupplot[
  xlabel={(e) Intra-Class Effective Rank },
  ymin=0, ymax=100, 
  ytick={0,25,50,75,100}
]

\addplot[red, dashed, line width=1.0pt, forget plot] coordinates {(0,25.6)(5500,25.6)};

\addplot[cecolor] coordinates {
(9,32.909)(185,73.724)(361,84.167)(537,90.269)(713,93.771)(889,95.402)(1065,96.480)(1241,97.751)(1417,97.689)(1593,98.253)(1769,98.429)(1945,98.168)(2121,97.712)(2297,97.997)(2473,97.942)(2649,97.375)(2825,97.207)(3001,96.734)(3177,95.960)(3353,95.144)(3529,95.329)(3705,95.137)(3881,94.898)(4057,95.266)(4233,94.833)(4409,95.470)(4585,95.103)(4761,95.759)(4937,96.125)(5113,96.504)(5289,96.810)(5465,96.879)
};

\addplot[ntcecolor] coordinates {
(9,20.981)(185,29.745)(361,30.573)(537,30.396)(713,30.004)(889,29.859)(1065,29.833)(1241,29.569)(1417,29.395)(1593,29.097)(1769,28.466)(1945,28.449)(2121,27.898)(2297,27.880)(2473,27.514)(2649,26.764)(2825,26.751)(3001,25.762)(3177,25.709)(3353,24.868)(3529,24.197)(3705,23.766)(3881,22.434)(4057,21.748)(4233,20.837)(4409,19.738)(4585,17.900)(4761,16.818)(4937,15.772)(5113,14.516)(5289,14.290)(5465,13.678)
};
\addplot[ntcecolor, nccircle, forget plot] coordinates {(2495,25.6)};

\addplot[sntcecolor] coordinates {
(9,21.465)(185,29.149)(361,29.630)(537,29.038)(713,29.420)(889,29.073)(1065,28.645)(1241,28.610)(1417,28.135)(1593,28.378)(1769,28.050)(1945,27.459)(2121,27.242)(2297,26.950)(2473,26.686)(2649,26.403)(2825,25.589)(3001,25.658)(3177,24.880)(3353,24.262)(3529,23.320)(3705,22.817)(3881,21.811)(4057,21.061)(4233,19.850)(4409,18.909)(4585,17.701)(4761,16.214)(4937,15.610)(5113,14.060)(5289,13.499)(5465,13.455)
};
\addplot[sntcecolor, nccircle, forget plot] coordinates {(2011,25.6)};

\addplot[nonlcolor] coordinates {
(9,20.637)(185,29.418)(361,29.080)(537,27.966)(713,27.943)(889,27.457)(1065,27.355)(1241,26.784)(1417,26.367)(1593,26.266)(1769,25.959)(1945,25.605)(2121,25.256)(2297,24.759)(2473,24.511)(2649,23.921)(2825,23.919)(3001,23.423)(3177,22.899)(3353,22.026)(3529,21.250)(3705,20.684)(3881,19.986)(4057,18.907)(4233,17.621)(4409,16.966)(4585,15.277)(4761,14.264)(4937,13.203)(5113,12.303)(5289,11.557)(5465,11.725)
};
\addplot[nonlcolor, nccircle, forget plot] coordinates {(801,25.6)};

% ---------------- Panel (f) ----------------
\nextgroupplot[
  xlabel={(f) Inter-Class Effective Rank},
  ymin=0, ymax=100, 
  ytick={0,25,50,75,100}
]

\addplot[red, dashed, line width=1.0pt, forget plot] coordinates {(0,94.05)(5500,94.05)};

\addplot[cecolor] coordinates {
(9,4.641)(185,26.865)(361,32.322)(537,34.391)(713,35.419)(889,36.225)(1065,36.941)(1241,37.661)(1417,38.033)(1593,38.562)(1769,39.220)(1945,39.687)(2121,39.876)(2297,40.345)(2473,40.973)(2649,41.477)(2825,42.096)(3001,42.790)(3177,43.757)(3353,44.187)(3529,45.064)(3705,46.009)(3881,47.213)(4057,48.880)(4233,49.899)(4409,51.761)(4585,52.547)(4761,54.521)(4937,55.733)(5113,56.718)(5289,56.789)(5465,56.870)
};

\addplot[ntcecolor] coordinates {
(9,3.531)(185,37.959)(361,56.523)(537,64.580)(713,69.231)(889,72.442)(1065,75.149)(1241,77.036)(1417,79.089)(1593,80.422)(1769,82.223)(1945,83.789)(2121,85.317)(2297,86.656)(2473,88.010)(2649,89.107)(2825,90.384)(3001,91.864)(3177,92.896)(3353,94.025)(3529,94.877)(3705,95.882)(3881,96.517)(4057,97.307)(4233,97.816)(4409,98.238)(4585,98.489)(4761,98.721)(4937,98.820)(5113,98.892)(5289,98.913)(5465,98.922)
};
\addplot[ntcecolor, nccircle, forget plot] coordinates {(3089,94.05)};

\addplot[sntcecolor] coordinates {
(9,3.530)(185,38.927)(361,56.545)(537,64.419)(713,69.171)(889,72.025)(1065,74.760)(1241,76.777)(1417,78.784)(1593,79.830)(1769,81.686)(1945,83.359)(2121,84.679)(2297,86.060)(2473,87.421)(2649,88.684)(2825,89.744)(3001,90.996)(3177,92.236)(3353,93.385)(3529,94.206)(3705,95.425)(3881,96.192)(4057,96.945)(4233,97.567)(4409,97.990)(4585,98.366)(4761,98.626)(4937,98.753)(5113,98.837)(5289,98.866)(5465,98.882)
};
\addplot[sntcecolor, nccircle, forget plot] coordinates {(3188,94.05)};

\addplot[nonlcolor] coordinates {
(9,3.555)(185,43.535)(361,63.262)(537,70.891)(713,75.402)(889,78.241)(1065,80.449)(1241,82.639)(1417,84.084)(1593,85.363)(1769,87.023)(1945,88.496)(2121,89.280)(2297,90.371)(2473,91.516)(2649,92.658)(2825,93.636)(3001,94.366)(3177,95.342)(3353,96.264)(3529,96.756)(3705,97.468)(3881,97.878)(4057,98.225)(4233,98.546)(4409,98.728)(4585,98.845)(4761,98.922)(4937,98.967)(5113,98.989)(5289,98.996)(5465,99.000)
};
\addplot[nonlcolor, nccircle, forget plot] coordinates {(2594,94.05)};
\end{groupplot}
\end{tikzpicture}
}
\caption{\textbf{NC convergence on CIFAR-100.} Six metrics vs. training iterations; red dashed lines mark the 95\% NC threshold and circles denote each method’s convergence.}
\label{fig:nc_convergence}
\end{figure}

In \Cref{fig:nc_convergence} the training dynamics are demonstrated. While cross-entropy (CE) achieves perfect training accuracy, it fails to reach neural collapse geometry, plateauing at suboptimal metric values. CE's accuracy improvements appear to \textit{derive solely from magnitude and bias adjustments} rather than geometric reorganization. In contrast, our methods \textit{simultaneously optimize all NC metrics throughout training}, converging to proper NC geometry while maintaining optimal accuracy.

\section{Extra Ablation Studies}

\subsection{Role of the Projection Head}
\begin{table}[htbp]
\centering
\caption{\textbf{Contrastive Learning Results - Without Projection Head.} Performance comparison across different classifier learning approaches without projection head.}
\label{tab:accuracy-B-without-projection}
% Enhanced spacing and typography
\renewcommand{\arraystretch}{1.3}
\setlength{\tabcolsep}{10pt}
% Define sophisticated color palette
\definecolor{HeaderGray}{RGB}{248,249,250}
\definecolor{AccentGray}{RGB}{242,244,246}
\definecolor{SubtleHighlight}{RGB}{250,251,252}
\definecolor{BestGreen}{RGB}{0,120,0}
\definecolor{TextGray}{RGB}{90,90,90}
\definecolor{BorderLight}{RGB}{220,220,220}
\small
\begin{tabular}{@{}l@{\hspace{12pt}}c@{\hspace{14pt}}S[table-format=2.1]@{\hspace{10pt}}S[table-format=2.1]@{\hspace{10pt}}S[table-format=2.1]@{\hspace{10pt}}S[table-format=2.1]@{}}
% Clean, academic header
\toprule
\rowcolor{HeaderGray}
\textbf{Classifier Learning} & 
\textbf{Loss} & 
\textbf{CIFAR-10} & 
\textbf{CIFAR-100} & 
\textbf{ImageNet-100} & 
\textbf{ImageNet-1K} \\
\midrule
% Linear probing section
\small\textcolor{TextGray}{\textsc{Linear Probing}} & \textsc{scl}  & 95 & 70.6 & 84.1 & 71 \\
\addlinespace[0.3em]
\arrayrulecolor{BorderLight}\midrule\arrayrulecolor{black}
\addlinespace[0.15em]
% Normalized linear probing section
\small\textcolor{TextGray}{\textsc{Normalized Linear Probing}} & \textsc{scl}  & 95 & 71.4 & 84.3 & 72.1 \\
\addlinespace[0.3em]
\arrayrulecolor{BorderLight}\midrule\arrayrulecolor{black}
\addlinespace[0.15em]
% Fixed prototypes section
\small\textcolor{TextGray}{\textsc{Fixed Prototypes}} & \textsc{scl}  & 95 & 71.4 & 84.7 & 70.1 \\
\bottomrule
\end{tabular}
\end{table}

In \Cref{tab:accuracy-B-without-projection} we demonstrate the importance of the projection head in contrastive training. Across three datasets, except on the relatively simple CIFAR-10 benchmark, removing the head consistently reduces accuracy by more than 2 points. At first glance, one might expect the opposite: discarding the head should let the loss act directly on the final encoder embeddings on the unit hypersphere.
We hypothesize that the projection head helps primarily by imposing a beneficial dimensionality bottleneck. With ResNet-50, the encoder’s representation is 2048-dimensional, whereas the projection head maps it to 128 dimensions. For a $K$-class problem (e.g., $K=100$), the ideal equiangular tight frame (ETF) geometry lives in a $(K-1)$-dimensional subspace. Encouraging embeddings to adopt this structure is plausibly easier in a 128-dimensional space than in a 2048-dimensional one, where the optimizer has many more irrelevant directions to explore.

\begin{figure}[htbp]
\centering
\begin{subfigure}{\textwidth}
    \centering
    \includegraphics[width=\textwidth]{figures/cifar10/validation_accuracy.pdf}
    \caption{CIFAR-10}
    \label{fig:cifar10_validation_accuracy}
\end{subfigure}

\vspace{0.5cm}

\begin{subfigure}{\textwidth}
    \centering
    \includegraphics[width=\textwidth]{figures/cifar100/validation_accuracy.pdf}
    \caption{CIFAR-100}
    \label{fig:cifar100_validation_accuracy}
\end{subfigure}

\caption[Validation Accuracy phase diagrams]{\textbf{Validation Accuracy (\%) Phase Diagrams.} Classification accuracy on validation set. Higher values indicate better generalization performance. 
Each subplot shows the performance landscape across temperature and batch size hyperparameters for different loss functions: 
NormFace, NTCE, and NONL. 
Brighter regions indicate superior performance. White contour lines indicate iso-performance curves for detailed analysis. 
Red dashed contours highlight optimal parameter regions (top 10\% performance). 
Scatter points represent individual experimental runs with performance-based sizing. 
Each dataset uses its own optimal colorbar range. 
Results originate from grid runs across temperature values in [0.07, 1.0] and batch sizes in {32, 64, 128, 256, 512, 1024, 2048}.}
\label{fig:acc_phase}
\end{figure}

\subsection{Effective hyperparameter ranges}
Normalized softmax losses introduce two hyperparameters inherited from contrastive learning: the temperature $\tau$, which controls the sharpness of the similarity distribution, and the need for larger batch size $B$, which governs the number of in-batch negatives. We assess their impact by grid–searching $\tau\in[0.07,1.0]$ (11 values) and $B\in\{32,64,128,256,512,1024,2048\}$ on CIFAR-10 and CIFAR-100 with NormFace, NTCE, and NONL.

\Cref{fig:acc_phase} shows consistent “sweet spots’’ across methods: accuracy forms a pronounced band at moderate temperatures, with performance degrading for overly large $\tau$ and, to a lesser extent, for very small $\tau$. The location of this band shifts toward slightly smaller temperatures as the number of classes increases (CIFAR-100 vs.\ CIFAR-10), mirroring observations in self-supervised contrastive learning \citep{chen2020simple}. Within the effective temperature range, performance is comparatively insensitive to $B$, yielding a broad plateau over batch sizes—large batches can help, but are not strictly required.

In practice, these trends provide the same reliable defaults as in self-supervised contrastive learning \citep{chen2020simple}: $\tau\in\{0.1,0.2\}$ works well for small- to medium-class datasets, while $\tau\in\{0.07,0.1\}$ is preferable for larger-class settings. Thus, although normalized softmax losses expose additional hyperparameters, their effective ranges are narrow and stable, so a small amount of tuning (or even these defaults) is typically sufficient to reach near-peak accuracy.

\subsection{Temperature Ablation at Batch Size 512}
\label{ssec:temperature_ablation}

\Cref{tab:tau_cifar100,tab:tau_cifar10} report validation accuracy across temperatures $\tau\in\{0.07, 0.1, 0.2, \ldots, 1.0\}$ at batch size 512, the configuration used for the CIFAR results in \Cref{tab:classifier_accuracy}. \textbf{Bold} marks the temperature reported in \Cref{tab:classifier_accuracy} (i.e., the best $\tau$ per method). We observe that the optimal $\tau$ shifts toward smaller values as the number of classes grows, consistent with self-supervised contrastive learning~\citep{chen2020simple}.

\begin{table*}[h]
\centering
\caption{CIFAR-100 --- Validation Accuracy (\%) at batch size 512. \textbf{Bold} = value reported in \Cref{tab:classifier_accuracy} (best $\tau$ per method).}
\label{tab:tau_cifar100}
\small
\setlength{\tabcolsep}{4pt}
\begin{tabular}{lccccccccccc}
\toprule
Objective & $\tau{=}0.07$ & $0.1$ & $0.2$ & $0.3$ & $0.4$ & $0.5$ & $0.6$ & $0.7$ & $0.8$ & $0.9$ & $1.0$ \\
\midrule
NormFace & 71.7 & 71.7 & 72.0 & \textbf{72.4} & 63.6 & 52.0 & 42.5 & 37.0 & 33.6 & 28.9 & 27.6 \\
NTCE     & 72.0 & 71.7 & \textbf{72.9} & 71.4 & 64.0 & 52.3 & 42.7 & 38.6 & 30.2 & 29.0 & 28.2 \\
NONL     & 73.2 & \textbf{73.6} & 72.5 & 71.6 & 63.4 & 52.6 & 42.8 & 37.2 & 31.2 & 28.2 & 26.8 \\
\bottomrule
\end{tabular}
\end{table*}

\begin{table*}[h]
\centering
\caption{CIFAR-10 --- Validation Accuracy (\%) at batch size 512. \textbf{Bold} = value reported in \Cref{tab:classifier_accuracy} (best $\tau$ per method).}
\label{tab:tau_cifar10}
\small
\setlength{\tabcolsep}{4pt}
\begin{tabular}{lccccccccccc}
\toprule
Objective & $\tau{=}0.07$ & $0.1$ & $0.2$ & $0.3$ & $0.4$ & $0.5$ & $0.6$ & $0.7$ & $0.8$ & $0.9$ & $1.0$ \\
\midrule
NormFace & 94.6 & 94.3 & 94.5 & 94.5 & 94.6 & 94.7 & \textbf{94.8} & 94.6 & 85.9 & 85.9 & 85.7 \\
NTCE     & 94.2 & 94.3 & 94.3 & 94.4 & 94.5 & 94.6 & \textbf{94.7} & 94.5 & 86.2 & 85.9 & 85.7 \\
NONL     & 92.9 & 94.7 & \textbf{94.9} & 94.6 & 94.4 & 94.3 & 94.1 & 94.3 & 86.0 & 85.9 & 85.9 \\
\bottomrule
\end{tabular}
\end{table*}

\subsection{Need for large batch size}\label{ssec:batch_size}

In \Cref{tab:imagenet_batchsize} NONL benefits from larger batches, reaching +1.1\% over CE on ImageNet-1K with an appropriate batch size.
We explicitly acknowledge that needing larger batches is a practical drawback, but this is a typical limitation of contrastive methods (including SCL \cite{khosla2020supcon} and many SSL methods \cite{chen2020simple}). Our work highlights that supervised normalized classifiers are already performing a kind of contrastive learning with class prototypes, and that fully reaping the generalization and robustness benefits naturally points toward larger effective negative sets. Recent SSL methods (memory banks, queues, momentum encoders) suggest promising ways to increase the number of negatives without linearly scaling batch size; we now emphasize this as a natural direction for future extensions of NONL.

\begin{table}[t]
\caption{Top-1 accuracy (\%) on ImageNet-1K for different batch sizes and loss functions.}
    \label{tab:imagenet_batchsize}
    \centering
    \setlength{\tabcolsep}{6pt}
    \begin{tabular}{lccc}
        \toprule
        Loss & 2048 & 4096 & 8192 \\
        \midrule
        CE          & 75.4 & 75.4 & 75.1 \\
        NORMFACE    & 75.6 & 76.4 & 76.3 \\
        NTCE (ours) & \textbf{76.0} & \textbf{76.7} & \textbf{76.7} \\
        NONL (ours) & 75.0 & 76.2 & 76.5 \\
        \bottomrule
    \end{tabular}
\end{table}

\subsection{Applicability to larger architectures}

In \Cref{tab:imagenet100_backbones} we test using deeper and higher-capacity models, where we observe the same consistent trend: our normalized prototype-based losses (NTCE and NONL) improve upon CE in terms of accuracy while inducing significantly stronger NC geometry. These results indicate that the benefits of our objectives are not limited to small or medium-scale architectures, but extend to larger backbones as well.

\begin{table}[t]
\centering
\caption{ImageNet-100 top-1 accuracy (\%) for different backbones. Best results per column are highlighted in \textcolor{BestGreen}{\textbf{green}}. The last row reports the relative improvement of NONL over CE.}
\label{tab:imagenet100_backbones}
\setlength{\tabcolsep}{3pt}
\scriptsize
\begin{tabular}{lcccc}
\toprule
Method & ResNet-50 & ResNet-101 & ResNet-152 & Mean \\
\midrule
CE &
84.4 & 85.3 & 85.5 & \textbf{85.1} \\
NormFace &
84.4 & 85.4 & 85.6 & \textbf{85.1} \\
NTCE &
84.7 & 85.4 & 85.4 & \textbf{85.2} \\
NONL &
\textcolor{BestGreen}{\textbf{84.9}} &
\textcolor{BestGreen}{\textbf{85.5}} &
\textcolor{BestGreen}{\textbf{85.8}} &
\textcolor{BestGreen}{\textbf{85.4}} \\
\midrule
$\Delta$(NONL$-$CE) &
+0.6\% & +0.2\% & +0.4\% & \textbf{+0.4\%} \\
\bottomrule
\end{tabular}
\end{table}

\subsection{Statistical Robustness}
\label{ssec:statistical_robustness}

ImageNet-1K runs require $\sim\!200$ GPU-hours each, and works of this scale typically report single runs~\citep{khosla2020supcon,chen2020simple,he2020momentum}, allocating compute to scaling rather than repetition. To address concerns about statistical robustness we provide 5-seed results on the smaller datasets, with temperature fixed to the value reported in \Cref{tab:classifier_accuracy} (i.e., the best $\tau$ per method). \Cref{tab:five_seed} reports mean $\pm$ standard deviation. On CIFAR-100, NONL exceeds CE by roughly four standard deviations, and the relative ordering CE $<$ NormFace $<$ NTCE $<$ NONL is consistent at fixed configuration, validating the superiority of our methods. We note that CIFAR-10 with a ResNet-18 is near-saturated at $\sim\!94.6\%$, leaving limited headroom for separation between methods on this benchmark; nevertheless NONL still leads.

On the role of temperature: $\tau$ does not grant normalized losses extra expressivity over CE. Standard CE can replicate the effect of any $\tau$ through weight magnitudes (setting $\|\vw_c\| \propto 1/\tau$) and additionally has free biases and unconstrained feature norms, rendering it strictly more flexible. NormFace, which is not one of our proposed methods, already uses $\tau$ and already outperforms CE, isolating the benefit of temperature from our contributions; on ImageNet-1K we use a single typical $\tau$ without grid search (\Cref{ssec:impl_details}), yet gains persist.

\begin{table}[h]
\centering
\caption{Mean $\pm$ standard deviation of validation accuracy (\%) across 5 seeds, with temperature fixed to the value reported in \Cref{tab:classifier_accuracy}.}
\label{tab:five_seed}
\small
\setlength{\tabcolsep}{8pt}
\begin{tabular}{lcc}
\toprule
Method & CIFAR-100 & CIFAR-10 \\
\midrule
CE       & 72.16 $\pm$ 0.20 & 94.52 $\pm$ 0.15 \\
NormFace & 72.32 $\pm$ 0.17 & 94.64 $\pm$ 0.10 \\
NTCE     & 72.78 $\pm$ 0.21 & 94.68 $\pm$ 0.07 \\
NONL     & \textbf{73.38 $\pm$ 0.28} & \textbf{94.84 $\pm$ 0.15} \\
\bottomrule
\end{tabular}
\end{table}

\subsection{Computational Cost of Linear Probing vs.\ Fixed Prototypes}
\label{ssec:lp_cost}

The standard linear probing protocol~\citep{khosla2020supcon,chen2020simple} applies data augmentation at every epoch, so representations change and cannot be cached; the full encoder forward pass must repeat $T$ times for $T$ probing epochs, giving $T \times N$ complexity. A single-epoch cache-and-train approach is possible but deviates from the established protocol and yields lower accuracy, since the LP classifier no longer sees the augmentation-induced variance that the standard protocol relies on. \Cref{tab:cached_lp} reports the gap.

On 8$\times$B200 GPUs, 500 epochs of pretraining take $\sim\!25$h ($\sim\!200$ GPU-hours), while 90 LP epochs add $\sim\!3$h ($\sim\!24$ GPU-hours), roughly $\sim\!\$342$ on AWS per run. FP requires exactly one forward pass per sample, eliminating this overhead while matching LP accuracy (\Cref{tab:scl_accuracy}).

\begin{table}[h]
\centering
\caption{Effect of caching encoder features during linear probing on SCL representations. Standard LP performs augmentation at every epoch (requiring $T \times N$ forward passes); cached LP computes encoder features once and trains a classifier on the cached representations ($N$ forward passes total but no augmentation during LP). FP eliminates LP entirely.}
\label{tab:cached_lp}
\small
\setlength{\tabcolsep}{10pt}
\begin{tabular}{lccc}
\toprule
& Standard LP & Cached LP & FP (ours) \\
\midrule
CIFAR-100 & 73.9 & 72.3 & 73.9 \\
CIFAR-10  & 95.0 & 94.4 & 95.0 \\
\bottomrule
\end{tabular}
\end{table}

\end{document}